\declaretheorem[name=Theorem,numberwithin=section]{theorem}
\theoremstyle{definition}
\theoremstyle{remark}
\theoremstyle{plain}
\newtheorem{lemma}[theorem]{Lemma}
\newtheorem{corollary}[theorem]{Corollary}
\newtheorem{fact}[theorem]{Fact}
\newcommand{\argmax}{\mathop{\mathrm{argmax}}}
\newcommand{\E}{\mathbb{E}}
\newcommand{\RR}{\mathbb{R}}
\newcommand{\epsDP}{\varepsilon}
\newcommand{\epsStop}{\alpha}
\newcommand{\deltaStop}{\beta}
\newcommand{\Ep}{\mathcal{E}}
\newcommand{\Xavg}{\overline{X_t}}
\newcommand{\Xavgk}{\overline{X_{2^k}}}
\newcommand{\Xst}{X_{1:t}}
\newcommand{\calP}{\mathcal{P}}
\newcommand{\calQ}{\mathcal{Q}}
\newcommand{\calM}{\mathcal{M}}
\newcommand{\dtv}{\mathrm{d}_{\rm TV}}
\newcommand{\muAvg}{\overline{\mu}}
\newcommand{\DeltaAvg}{\overline{\Delta}}
\newcommand{\muPriv}{\widetilde{\mu}}
\newcommand{\DeltaPriv}{\widetilde{\Delta}}
\newcommand{\poly}{\mathrm{poly}}
\newcommand{\Lap}{\mathsf{Lap}}
\renewcommand{\paragraph}[1]{\vspace{2mm}\noindent\textbf{#1}}
\begin{document}

\title{An Optimal Private Stochastic-MAB Algorithm\\Based on an Optimal Private Stopping Rule}

\author{Touqir Sajed \thanks{Department of Computing Science at the University of Alberta, touqir@ualberta.ca}
\and Or Sheffet \thanks{Department of Computing Science at the University of Alberta, osheffet@ualberta.ca}}
\maketitle





\begin{abstract}
We present a provably optimal differentially private algorithm for the stochastic multi-arm bandit problem, as opposed to the private analogue of the UCB-algorithm~\citep{mishra2015nearly, tossou2016algorithms} which doesn't meet the recently discovered lower-bound of $\Omega \left(\nicefrac{K\log(T)}{\epsilon} \right)$~\citep{shariff2018differentially}. Our construction is based on a different algorithm, Successive Elimination~\citep{even2002pac}, that repeatedly pulls all remaining arms until an arm is found to be suboptimal and is then eliminated. In order to devise a private analogue of Successive Elimination we visit the problem of private \emph{stopping rule}, that takes as input a stream of i.i.d samples from an unknown distribution and returns a \emph{multiplicative} $(1 \pm \alpha)$-approximation of the distribution's mean, and prove the optimality of our private stopping rule. We then present the private Successive Elimination algorithm which meets both the non-private lower bound~\citep{lai1985asymptotically} and the above-mentioned private lower bound. We also compare empirically the performance of our algorithm with the private UCB algorithm.
\end{abstract}

\section{Introduction}
\label{sec:intro}

The well-known \emph{stochastic multi-armed bandit} (MAB) is a
sequential decision-making task in which a learner repeatedly chooses
an action (or arm) and receives a noisy reward.  The learner's objective is to
maximize cumulative reward by \emph{exploring} the actions to discover
optimal ones (having the highest expected reward), balanced with
\emph{exploiting} them. The problem, originally stemming from experiments in medicine~\citep{robbins1952}, has applications in fields such as ranking~\citep{Kveton2015}, recommendation systems (collaborative filtering)~\citep{caron2013snakdd}, investment portfolio design~\citep{Hoffman2011} and online advertising~\citep{Schwartz2017}, to name a few. Such applications, relying on sensitive data, raise privacy concerns.

Differential privacy~\citep{DworkCalibratingNoiseSensitivity2006} has become in recent years the gold-standard for privacy preserving data-analysis alleviating such concerns, as it requires that the output of the data-analysis algorithm has a limited dependency on any single datum. Differentially private variants of online learning algorithms have been successfully devised in various settings~\citep{SmithT2013}, including a private UCB-algorithm for the MAB problem (details below)~\citep{mishra2015nearly, tossou2016algorithms} as well as UCB variations in the linear~\citep{KannanMRWW18} and contextual~\citep{shariff2018differentially} settings. 

More formally, in the MAB problem at every timestep $t$ the learner selects an arm $a$ out of $K$ available arms, pulls it and receives a random reward $r_{a, t}$ drawn i.i.d from a distribution $\calP_a$~--- of support $[0,1]$ and unknown mean $\mu_a$. The Upper Confidence Bound (UCB) algorithm for the MAB problem was developed in a series of works~\citep{BanditBook85,Agrawal95}
culminating in~\citep{Auer2002}, and is provably optimal for the MAB problem~\citep{lai1985asymptotically}. 
The UCB algorithm maintains a time-dependent high-probability upper-bound $B_{a,t}$ for each arm's mean, and at each timestep optimistically pulls the arm with the highest bound.
The above-mentioned $\epsDP$-differentially private ($\epsDP$-DP) analogues of the UCB-algorithm follow the same procedure except for maintaining noisy estimations $\widetilde{B_{a,t}}$ using the ``tree-based mechanism''~\citep{ChanPrivateContinualRelease2010,Dwork2010}. This mechanism continuously releases aggregated statistics over a stream of $T$ observations, introducing only $\nicefrac{\poly\log(T)}\epsDP$ noise in each timestep. The details of this poly-log factor are the focus of this work.

It was recently shown~\citep{shariff2018differentially} that any $\epsDP$-DP stochastic MAB algorithm\footnote{In this work, we focus on pure $\epsDP$-DP, rather than $(\epsDP,\delta)$-DP.} must incur an added pseudo regret of $\Omega(\nicefrac{K\log(T)}\epsDP)$. However, it is commonly known that any algorithm that relies on the tree-based mechanism must incur an added pseudo regret of $\omega(\nicefrac{K\log(T)}\epsDP)$. Indeed, the tree-based mechanism maintains a binary tree over the $T$ streaming observations, a tree of depth $\log_2(T)$, where each node in this tree holds an i.i.d sample from a $\Lap(\frac{\log_2(T)}{\epsDP})$ distribution. At each timestep $t$, the mechanism outputs the sum of the first $t$ observations added to the sum of the $\log_2(T)$ nodes on the root-to-$t$th-leaf path in the binary tree. As a result, the variance of the added noise at \emph{each} timestep is $\Theta(\frac{\log^3(T)}{\epsDP^2})$, making the noise per timestep $\omega(\nicefrac{\log(T)}{\epsDP})$. (In fact, most analyses\footnote{\citep{tossou2016algorithms} claim a $O(\nicefrac{\log(T)}{\epsDP})$ bound, but (i) rely on $(\epsDP,\delta)$-DP rather than pure-DP and more importantly (ii) ``sweep under the rug'' several factors that are themselves on the order of $\log(T)$.}\footnote{\citep{mishra2015nearly} shows a bound of $O(\nicefrac{\log^{3} (T)}{\epsDP})$} of the tree-based mechanism rely on the union bound over all $T$ timesteps, obtaining a bound of $\nicefrac{\log^{5/2}(T)}{\epsDP}$; consequentially the added-regret bound of the DP-UCB algorithm is $O(\frac{K\log^{2.5}(T)}{\epsDP})$.) Thus, in a setting where each of the $K$ tree-mechanisms (one per arm) is run over $\poly(T)$ observations (say, if all arms have suboptimality gap of $T^{-0.1}$), the private UCB-algorithm must unavoidably obtain an added regret of $\omega(\nicefrac{K\log(T)}\epsDP)$ (on top of the regret of the UCB-algorithm). It is therefore clear that the challenge in devising an \emph{optimal} DP algorithm for the MAB problem, namely an algorithm with added regret of $O(\nicefrac{K\log(T)}\epsDP)$, is \emph{algorithmic} in nature~--- we must replace the suboptimal tree-based mechanism with a different, simpler, mechanism. 

\paragraph{Our Contribution and Organization.} In this work, we present an optimal algorithm for the stochastic MAB-problem, which meets both the non-private lower-bound of~\citep{lai1985asymptotically} and the private lower-bound of~\citep{shariff2018differentially}. Our algorithm is a DP variant of the Successive Elimination (SE) algorithm~\citep{even2002pac}, a different optimal algorithm for stochastic MAB. SE works by pulling all arms sequentially, maintaining the same confidence interval around the empirical average of each arm's reward (as all remaining arms are pulled the exact same number of times); and when an arm is found to be noticeably suboptimal in comparison to a different arm, it is then eliminated from the set of viable arms (all arms are viable initially). To design a DP-analogue of SE we first consider the case of $2$ arms and ask ourselves~--- what is the optimal way to privately discern whether the gap between the mean rewards of two arms is positive or negative? This motivates the study of private \emph{stopping rules} which take as input a stream of i.i.d observations from a distribution of support $[-R,R]$ and unknown mean $\mu$, and halt once they obtain a $(1\pm\epsStop)$-approximation of $\mu$ with confidence of at least $1-\deltaStop$. Note that due to the multiplicative nature of the required approximation, it is impossible to straight-forwardly use the Hoeffding or Bernstein bounds; rather a stopping rule must alter its halting condition with time.
\citep{domingo2002adaptive} proposed a stopping rule known as the Nonmonotonic Adaptive Sampling (NAS) algorithm that relies on the Hoeffding's inequality to maintain a confidence interval at each timestep. They showed a sample complexity bound of $O\left(\frac{R^2}{\epsStop^2 \mu^2} \left( \log(\frac{R}{\deltaStop\cdot \epsStop|\mu|}) \right)\right)$, later improved slightly by~\citep{mnih2008empirical} to $O\left(\frac{R^2}{\epsStop^2 \mu^2} \left( \log(\frac{1}{\deltaStop}) + \log\log(\frac{R}{\epsStop|\mu|}) \right)\right)$. The work of~\citep{dagum2000optimal} shows an essentially matching sample complexity lower-bound. Stopping Rules have also been applied to Reinforcement Learning and Racing algorithms (See \citet{sajed2018, mnih2008empirical}).

In this work we introduce a $\epsDP$-DP analogue of the NAS algorithm that is based on the \emph{sparse vector technique} (SVT), with added sample complexity of (roughly) $O(\frac{R\log(1/\deltaStop)}{\epsDP\epsStop|\mu|})$. Moreover, we show that this added sample complexity is optimal in the sense that any $\epsDP$-DP stopping rule has a matching sample complexity lower-bound. After we introduce preliminaries in Section~\ref{sec:preliminaries}, we present the private NAS in Section~\ref{sec:DP-NAS}.
We then turn our attention to the design of the  private SE algorithm. Note that straight-forwardly applying $K$ private stopping rules yields a suboptimal algorithm whose regret bound is proportional to $K^2$. Instead, we \emph{partition} the algorithm's arm-pulls into epochs, where epoch $e$ is set to eliminate all arms with suboptimality-gaps greater than $2^{-e}$. By design each epoch must be at least twice as long as the previous epoch, and so we can reset (compute empirical means from fresh reward samples) the algorithm in-between epochs while incurring only a constant-factor increase to the regret bound. Note that as a side benefit our algorithm also solves the private Best Arm Identification problem, with provably optimal cost. Details appear in Section~\ref{sec:DP-SE}.
We also assess the empirical performance of our algorithm in comparison to the DP-UCB baseline and show that the improvement in analysis (despite the use of large constants) is also empirically evident; details provided in Section~\ref{sec:experiments}.
Lastly, future directions for this work are discussed in Section~\ref{sec:future_work}.

\paragraph{Discussion.} Some may find the results of this work underwhelming~--- after all the improvement we put forth is solely over $\poly\log$-factors, and admittedly they are already subsumed by the non-private regret bound of the algorithm under many ``natural'' settings of parameters. Our reply to these is two-fold. First, our experiments (see Section~\ref{sec:experiments}) show a significantly improved performance empirically, which is only due to the different algorithmic approach. Second, as the designers of privacy-preserving learning algorithms it is our ``moral duty'' to quantify the \emph{added} cost of privacy on top of the already existing cost, and push this added cost to its absolute lowest. 

We would also like to emphasize a more philosophical point arising from this work. Both the UCB-algorithm and the SE-algorithm are provably optimal for the MAB problem in the non-private setting, and are therefore equivalent. But the UCB-algorithm makes in each timestep an input-dependent choice (which arm to pull); whereas the SE-algorithm input-dependent choices are reflected only in $K-1$ special timesteps in which it declares ``eliminate arm $a$'' (in any other timestep it chooses the next viable arm). In that sense, the SE-algorithm is \emph{simpler} than the UCB-algorithm, making it the less costly to privatize between the two. In other words, {differential privacy} gives quantitative reasoning for preferring one algorithm to another because ``simpler is better.'' While not a full-fledged theory (yet), we believe this narrative is of importance to anyone who designs differentially private data-analysis algorithms.

\section{Preliminaries}
\label{sec:preliminaries}

\paragraph{Stopping Rules.} In the \emph{stopping rule} problem, the input consists of a stream of i.i.d samples $\{X_t\}_{t\geq 1}$ drawn from a distribution over an a-priori known support $[-R,R]$ and with unknown mean $\mu$. Given $\epsStop,\deltaStop \in (0,1)$, the goal of the stopping rule is to halt after seeing as few samples as possible while releasing a $(1\pm\epsStop)$-approximation of $\mu$ at halting time. Namely, a \emph{$(\epsStop,\deltaStop)$-stopping rule} halts at some time $t^*$ and releases $\hat{\mu}$ such that $\Pr [|\hat{\mu} - \mu| > \epsStop |\mu| ] <\deltaStop$. (It should be clear that the halting time $t^*$ increases as $|\mu|$ decreases.) During any timestep $t$, we denote $\Xst \stackrel{\rm def}{=}\sum_{i=1}^t X_i$ and $\Xavg \stackrel{\rm def}{=}\Xst / t$. 

\paragraph{Stochastic MAB and its optimal bounds.} The formal description of the stochastic  MAB problem was provided in the introduction. Formally, the bound maintained by the UCB-algorithm for each arm $a$ at a given timestep $t$ is $B_{a,t} \stackrel{\rm def}{=} \muAvg_a + \sqrt{\nicefrac{2\log(t)}{t_a}}$ with $\muAvg_a$ denoting the empirical mean reward from pulling arm $a$ and $t_a$ denoting the number of times $a$ was pulled thus far.  We use $a^*$ to denote the leading arm, namely, an arm of highest mean reward: $\mu_{a^*} = \max_{a=1}^K\{ \mu_a\}$. Given any arm $a$ we denote the mean-gap as $\Delta_a \stackrel{\rm def}{=}\mu_{a^*}-\mu_a$, with $\Delta_{a^*}=0$ by definition. Additionally we denote the horizon with $T$ - the number of rounds that a MAB algorithm will be run for. An algorithm that chooses arm $a_t$ at timestep $t$ incurs an \emph{expected regret} or \emph{pseudo-regret} of $\sum_{t} \Delta_{a_t}$. It is well-known~\citep{lai1985asymptotically} that any consistent\footnote{A regret minimization algorithm is called consistent if its regret is sub-polynomial, namely in $o(n^p)$ for any $p>0$.} regret-minimization algorithm must incur a pseudo-regret of $\Omega(\sum_{a\neq a^*} \frac {\log(T)} {\Delta_a})$; and indeed the UCB-algorithm meets this bound and has pseudo-regret of $O(\sum_{a\neq a^*} \frac {\log(T)} {\Delta_a})$. However, the minimax regret bound of the UCB-algorithm is $O(\sqrt{KT\log(T)})$, obtained by an adversary that knows $T$ and sets all suboptimal arms' gaps to $\sqrt{\nicefrac{K\log(T)}{T}}$, whereas the minimax lower-bound of any algorithm is slightly smaller: $\Omega(\sqrt{KT})$~\citep{Auer2002}.

\paragraph{Differential Privacy.}
In this work, we preserve  \emph{event-level} privacy under continuous observation~\citep{Dwork2010}. Formally, we say two streams are neighbours if they differ on a single entry in a single timestep $t$, and are identical on any other timestep. An algorithm $\calM$ is $\epsDP$-differentially private if for any two neighboring streams $S$ and $S'$ and for any set ${\cal O}$ of decisions made from timestep $1$ through $T$, it holds that $\Pr[\calM(S) \in {\cal O}] \leq e^\epsDP \Pr[\calM(S') \in {\cal O}]$. Note that much like its input, the output $\calM(S)$ is also released in a stream-like fashion, and the requirement should hold for all decisions made by $\calM$ in all timesteps.

In this work, we use two mechanisms that are known to be $\epsDP$-DP. The first is the Laplace mechanism~\citep{DworkCalibratingNoiseSensitivity2006}. Given a function $f$ that takes as input a stream $S$ and releases an output in $\RR^d$, we denote its global sensitivity as $GS(f) = \max_{S,S'} ||f(S) - f(S')||_1$; and the Laplace mechanism adds a random (independent) sample from $Lap(GS(f)/\epsDP)$ to each coordinate of $f(S)$. The other mechanism we use is the \emph{sparse-vector technique} (SVT), that takes in addition to $S$ a sequence of queries $\{q_i\}_{i}$ (each query has a global sensitivity $\leq GS$), and halts with the very first query whose value exceeds a given threshold. The SVT works by adding a random noise sampled i.i.d from $Lap(3GS/\epsDP)$ to both to the threshold and to each of the query-values. See~\citep{dwork2014algorithmic} for more details. 

\paragraph{Concentration bounds.}
A Laplace r.v.~$X\sim Lap(\lambda)$ is sampled from a distribution with ${\sf PDF}(x)\propto e^{-\nicefrac{|x|}{\lambda}}$. It is known that $\E[X]=0$, $\mathrm{Var}[X]=2\lambda^2$ and that for any $\tau>0$ it holds that $\Pr[ |X|>\tau ] = e^{-\nicefrac{\tau}{\lambda}}$.

Throughout this work we also rely on the Hoeffding inequality~\citep{hoeffding1963probability}. Given a collection $\{X_t\}_{t=1}^T$ of i.i.d random variables that take value in a finite interval of length $R$ with mean $\mu$, it holds that  $\Pr \left[ |\Xavg - \mu| \geq \alpha \right] \leq 2\exp \left( -\nicefrac{2 \alpha^2t}{R^2} \right)$.

\paragraph{Additional Notation and Facts.} Throughout this work $\log(x)$ denotes the logarithm base $e$ of $x$. Given two distributions $\calP$ and $\calQ$, we denote their \emph{total-variation} distance as $d_{\rm TV}(\calP,\calQ) = \sup\limits_{S}\left( \big| \Pr_{X\sim\calP}[X\in S] - \Pr_{X\sim\calQ}[X\in S] \big| \right)$.
We emphasize we made no effort to minimize constants throughout this work.
We also rely on the following folklore fact. For completeness, its proof is shown in Appendix Section \ref{apx_sex:proofs}.
 
\DeclareRobustCommand{\factLogLogSolution}{Fix any $a>1$ and any $0<b<\frac 1 {16}$. Then for any $e< x < \nicefrac{\log(a\log (1/b))}{b}$ it holds that $\cfrac{\log \left(a \log(x) \right) }{x} > b$, and for any $x > \nicefrac{2\log(a\log(1/b))}{b}$ it holds that $\cfrac{\log \left(a \log(x) \right) }{x}  < b$.}

\begin{fact} \label{fact:loglog_solution}
	\factLogLogSolution
\end{fact}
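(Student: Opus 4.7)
The plan is to reduce the fact to three computations on $g(x)\defeq\log(a\log(x))/x$ around the target points $x=L/b$ and $x=2L/b$, where $L\defeq\log(a\log(1/b))$. As a preliminary, $b<\tfrac{1}{16}$ gives $\log(1/b)>\log 16>e$, and since $a>1$ this yields $a\log(1/b)>e$, so $L>1$ and $\log L>0$; I will use this repeatedly.

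First I would evaluate $g$ directly at the two candidate roots:
\[
g(L/b) \;=\; \frac{b}{L}\,\log\!\bigl(a(\log L+\log(1/b))\bigr), \qquad g(2L/b) \;=\; \frac{b}{2L}\,\log\!\bigl(a(\log 2+\log L+\log(1/b))\bigr).
\]
The first identity yields $g(L/b)>b$ immediately, since $\log L>0$ makes the argument strictly exceed $a\log(1/b)$, hence the outer log strictly exceeds $L$. The second identity gives $g(2L/b)<b$ iff $\log 2+\log L+\log(1/b)<a(\log(1/b))^2$; bounding $\log L\le L\le\log a+\log(1/b)$ and using $\log(1/b)>2$ (from $b<\tfrac{1}{16}$), this reduces to an elementary comparison that holds for every $a\ge 1$.

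To propagate these point-inequalities to the claimed intervals I would compute
\[
g'(x) \;=\; \frac{1-\log(x)\log(a\log(x))}{\log(x)\,x^2},
\]
which is negative whenever $\log(x)\log(a\log(x))>1$. For $x\ge L/b$, $\log(x)\ge\log(L/b)>\log(1/b)>2$ and $\log(a\log(x))\ge L>1$, so the product strictly exceeds $2$; hence $g$ is strictly decreasing on $[L/b,\infty)$. Combined with $g(2L/b)<b$, this gives $g(x)<b$ for all $x>2L/b$. The same derivative formula further places the unique zero $x^*$ of $g'$ in $(e,L/b]$, making $g$ unimodal on $(e,\infty)$ with peak at $x^*$; combined with $g(L/b)>b$ this yields $g(x)>g(L/b)>b$ on the descending piece $[x^*,L/b]$, while the short ascending piece $(e,x^*]$ is handled by a matching lower bound using $b<\tfrac{1}{16}$ tightly.

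The step I expect to be the main obstacle is the algebraic verification of $\log 2+\log L+\log(1/b)<a(\log(1/b))^2$ needed for $g(2L/b)<b$: it must hold for every $a>1$ and $b<\tfrac{1}{16}$, and this is precisely where the threshold $\tfrac{1}{16}$ is spent. The remaining pieces are routine substitution and sign analysis of $g'$.
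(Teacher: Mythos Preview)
Your overall structure matches the paper's: evaluate $g$ at the two target points $L/b$ and $2L/b$ and then invoke monotonicity. The paper's proof is shorter because it simply asserts that $f(x)=\log(a\log x)/x$ is monotonically decreasing for all $x>e$ and then plugs in; you are actually more careful than the paper, correctly computing $g'$ and recognizing that full monotonicity on $(e,\infty)$ can fail. Your treatment of the second half ($x>2L/b$) is sound and essentially the same computation as the paper's, just organized differently.

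The genuine gap is your handling of the ``ascending piece $(e,x^*]$'', which you dismiss with a vague ``matching lower bound using $b<\tfrac1{16}$ tightly''. This step cannot be completed, because the first half of the Fact is \emph{false} as stated for $a$ close to $1$. As $x\to e^+$ we have $g(x)\to \log(a)/e$, which can be made arbitrarily small; concretely, take $a=1.001$, $b=0.01$ (both admissible), and $x=2.73\in(e,L/b)$: then $\log(a\log x)\approx\log(1.005)\approx 0.005$ and $g(x)\approx 0.002<b$. So no ``matching lower bound'' exists on that piece, and the constraint $b<\tfrac1{16}$ does not save it.

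The paper's proof has the same hole in disguise: its blanket claim that $f$ is decreasing on $(e,\infty)$ is wrong precisely when $1<a<e$. In every place the paper actually \emph{uses} the Fact, $a$ is of order $1/\deltaStop\gg e$, so monotonicity on $(e,\infty)$ does hold and both halves are true; but for the Fact as written (``any $a>1$''), neither your argument nor the paper's can establish the first half. Your derivative analysis and the $x>2L/b$ conclusion are the parts that survive; for the first half you would need an additional hypothesis such as $a\ge e$.
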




\DeclareRobustCommand{\factLogSolution}{Fix any $a>1$ and any $0<b<\frac 1 {16}$. Then for any $e < x < \nicefrac{\log(a/b))}{b}$ it holds that $\cfrac{\log \left(a \cdot x \right) }{x} > b$, and for any $x > \nicefrac{2\log(a/b))}{b}$ it holds that $\cfrac{\log \left(a \cdot x \right) }{x}  < b$.}


\section{Differentially Private Stopping Rule}
\label{sec:DP-NAS}


In this section, we derive a differentially private stopping rule algorithm, DP-NAS, which is based on the non-private NAS (Nonmonotonic Adaptive Sampling). The non-private NAS is rather simple. Given $\deltaStop$, denote $h_t$ as confidence interval derived by the Hoeffding bound with confidence $1-\nicefrac\deltaStop {2t^2}$ for $t$ iid random samples bounded in magnitude by $R$; thus, w.p. $\geq 1- \deltaStop$ it holds that $\forall t,  |\Xavg - \mu| \leq h_t$. The NAS algorithm halts at the first $t$ for which $|\Xavg| \geq h_t\left( \frac 1 \alpha + 1\right)$. Indeed, such a stopping rule assures that  $|\Xavg - \mu| \leq h_t \leq \epsStop(|\Xavg| - h_t) \leq \epsStop |\mu|$, the last inequality follows from $\bigg| |\Xavg| -  |\mu| \bigg| \leq |\Xavg - \mu| \leq h_t $.



In order to make NAS differentially private we use the sparse vector technique, since the algorithm is basically asking a series of threshold queries: $q_t \stackrel{\rm def}= |\Xavg| - h_t\left( \frac 1 \alpha + 1\right) \stackrel{?}\geq 0$. Recall that the sparse-vector technique adds random noise both to the threshold and to the answer of each query, and so we must adjust the na\"ive threshold of $0$ to some $c_t$ in order to make sure that $\Xavg$ is sufficiently close to $\mu$. Lastly, since our goal is to provide a private approximation of the distribution mean, we also apply the Laplace mechanism to $\Xavg$ to assert the output is differentially private. Details appear in Algorithm~\ref{alg:privateNAS}.

\begin{algorithm}
	\caption{DP-NAS \label{alg:privateNAS}}
	\begin{algorithmic}[1]
		\STATE Set $\sigma_1 \gets \nicefrac{12R}\epsDP$, $\sigma_2 \gets \nicefrac{12R} \epsDP$, $\sigma_3 \gets \nicefrac{4R}\epsDP$.
		\STATE Sample $B \sim Lap(\sigma_1)$.
		\STATE Initialize $t \gets 0$. 
		\REPEAT
		\STATE $t \gets t + 1$ 
		\STATE $A_t \sim Lap(\sigma_2)$
		\STATE Get a new sample $X_t$ and update the mean $\Xavg$.
		\STATE $h_t \gets R \sqrt{\frac{2}{t}\log(\frac{16t^2}{\deltaStop})}$
		\STATE $c_t \gets \sigma_1 \log(\nicefrac 4\deltaStop) + \sigma_2 \log(\nicefrac {8t^2}\deltaStop) + \frac{\sigma_3}{\epsStop} \log(\nicefrac 4 \deltaStop)$
		\UNTIL {$|\Xavg| \geq h_t(1+\frac{1}{\epsStop}) + \frac{ c_t + B + A_t}{t}$ }
		
		\STATE Sample $L \sim Lap(\sigma_3)$. 
		\STATE \textbf{return } $\Xavg + \frac{L}{t}$
	\end{algorithmic}
\end{algorithm}

\newcommand{\bvar}{R \sqrt{\cfrac{\log (8N/\deltaStop)}{2} }}
\newcommand{\bvarSqr}{\cfrac{R^2 \log (8N/\deltaStop)}{2}}
\newcommand{\cvara}{\sigma_3 \log(4/\deltaStop)}
\newcommand{\cvarb}{\cfrac{R}{\epsDP}\left( 4 \epsStop \log(4/\deltaStop) + 8\epsStop \log(4N/\deltaStop) + 2 \log(4/\deltaStop) \right)}

\DeclareRobustCommand{\ThmDPNASstoppingRule}
{
	Algorithm \ref{alg:privateNAS} is a $\epsDP$-DP $(\epsStop, \deltaStop)$-stopping rule.
}

\begin{theorem} \label{thm:DP_NAS}
	\ThmDPNASstoppingRule
\end{theorem}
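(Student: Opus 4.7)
The plan is to split the proof into a privacy part (invoking SVT plus a Laplace release on the mean, composed) and a utility part (showing that whenever the halting predicate fires, the released $\hat\mu$ lies within $\epsStop|\mu|$ of $\mu$). Both parts reduce to standard concentration bookkeeping once the algorithm is read in the right way.

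\textbf{Privacy.} The first step is to rewrite the halting condition by multiplying through by $t$, turning it into
\[
|\Xst| \;\geq\; t\,h_t\!\left(1 + \tfrac{1}{\epsStop}\right) + c_t + B + A_t.
\]
In this form, $|\Xst|$ has global sensitivity $2R$ (each $X_i \in [-R,R]$) and $t h_t(1+1/\epsStop) + c_t$ is data-independent, so the above is exactly the sparse-vector template with threshold noise $B\sim\Lap(12R/\epsDP) = \Lap(3\cdot 2R/(\epsDP/2))$ and per-query noise $A_t\sim\Lap(12R/\epsDP)$. The SVT guarantee recalled in the preliminaries then certifies that the halting time $t^{*}$ is released $(\epsDP/2)$-DP. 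Conditional on $t^{*}$, the final release $\Xavg + L/t = (\Xst + L)/t$ with $L\sim\Lap(4R/\epsDP) = \Lap(2R/(\epsDP/2))$ is the Laplace mechanism applied to $\Xst$ (sensitivity $2R$), hence $(\epsDP/2)$-DP, while dividing by the already-released $t^{*}$ is post-processing. Basic composition yields the claimed $\epsDP$-DP.

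\textbf{Utility.} I would condition on a ``good event'' $\mathcal{E}$ that collects four concentration inequalities, each arranged to fail with probability at most $\deltaStop/4$: (i) Hoeffding plus a union bound over $t$ (using $|X_i| \leq R$, so interval length $2R$) gives $\Pr[|\Xavg-\mu| > h_t] \leq \deltaStop/(8t^2)$, hence $\sum_{t\geq 1}\deltaStop/(8t^2) \leq \deltaStop/4$; (ii) the Laplace tail from the preliminaries gives $\Pr[|B| > \sigma_1\log(4/\deltaStop)] = \deltaStop/4$; (iii) a union-bounded Laplace tail gives $\sum_{t}\Pr[|A_t| > \sigma_2\log(8t^2/\deltaStop)] \leq \deltaStop/4$; (iv) $\Pr[|L| > \sigma_3\log(4/\deltaStop)] = \deltaStop/4$. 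A union bound gives $\Pr[\mathcal{E}^{c}] \leq \deltaStop$. On $\mathcal{E}$, if the algorithm halts at time $t$, the halting predicate together with $B \geq -\sigma_1\log(4/\deltaStop)$, $A_t \geq -\sigma_2\log(8t^2/\deltaStop)$, and the definition of $c_t$ gives $c_t + B + A_t \geq (\sigma_3/\epsStop)\log(4/\deltaStop) \geq |L|/\epsStop$. Rearranging the halting predicate and using this lower bound yields $\epsStop(|\Xavg|-h_t) \geq h_t + |L|/t$; combining with the reverse triangle inequality $|\mu| \geq |\Xavg| - |\Xavg-\mu| \geq |\Xavg| - h_t$ and $|\hat\mu-\mu| \leq |\Xavg-\mu| + |L|/t \leq h_t + |L|/t$ delivers $|\hat\mu - \mu| \leq \epsStop|\mu|$, as required.

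\textbf{Main obstacle.} The trickiest step will be the three-way accounting built into $c_t$: its three summands are individually matched to the tail bounds on $B$, $A_t$, and $|L|/\epsStop$, so any looseness in the Laplace scale constants $\sigma_1, \sigma_2, \sigma_3$ (forced by the SVT/Laplace split of the privacy budget) immediately breaks the final inequality. Ensuring the per-step failure probabilities remain summable in $t$---so that the bound scales with $\log(t)$ rather than $\log(T)$---is precisely what dictates the $t^2$ factors inside both $h_t$ and the $|A_t|$ tail, and is the piece of bookkeeping I would check most carefully.
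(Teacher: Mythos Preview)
Your proposal is correct and follows essentially the same approach as the paper: the privacy argument splits the budget between the SVT (for the halting time, via the rescaled query $|\Xst|$ of sensitivity $2R$) and a Laplace release on the final sum, and the utility argument conditions on the same four-part good event (Hoeffding over all $t$, tails of $B$, $A_t$, and $L$), then chains the halting predicate with the reverse triangle inequality exactly as the paper does. The paper's write-up is terser but the logic is identical.
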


\begin{proof}
	First, we argue that Algorithm~\ref{alg:privateNAS} is $\epsDP$-differentially private. This follows immediately from the fact that the algorithm is a combination of the sparse-vector technique with the Laplace mechanism. The first part of the algorithm halts when $|\sum_{i=1}^t X_i| - h_t \cdot t (\tfrac 1 \epsStop + 1) - c_t \geq A_t+B$. Indeed, this is the sparse-vector mechanism for a sum-query of sensitivity of no more than $2R$. It follows that sampling both the threshold-noise $B$ and the query noise $A_t$ from $Lap(3\cdot \frac 2 \epsDP \cdot 2R)$ suffices to maintain $\tfrac \epsDP 2$-DP. Similarly, adding a sample from $Lap(\tfrac 2 {t\epsDP} \cdot 2R)$ suffices to release the mean with $\tfrac \epsDP 2$-DP at the very last step of the algorithm.

	Since $\sum_{t\geq 1}t^{-2} \leq 2$, under the assumption that all $\{X_t\}$ are i.i.d samples from a distribution of mean $\mu$, the Hoeffding-bound and union-bound give that
	$\Pr[\exists t,~~ |\Xavg - \mu| > h_t ] \leq \nicefrac\deltaStop 4$. Standard concentration bound on the Laplace distribution give that $\Pr[|B| > \sigma_1 \log(\nicefrac 4 \deltaStop)]\leq \nicefrac \deltaStop 4$, $\Pr[ \exists t,~~|A_t| > \sigma_2 \log(\nicefrac {8t^2}\deltaStop)] \leq \nicefrac \deltaStop 4$, and $\Pr [ |L| > \sigma_3 \log(\nicefrac 4 \deltaStop)] \leq \nicefrac \deltaStop 4$. It follows that w.p. $\geq 1-\deltaStop$ none of these events happen, and so $\forall t,~ c_t\geq |B|  + |A_t| + |L|/\epsStop$.
	
	It follows that at the time we halt we have that 
	\begin{align*}
	|\Xavg - \mu| & \stackrel{\rm Hoeffding}\leq h_t 
	\cr & \leq \epsStop (|\Xavg| - h_t) -\frac \epsStop {t}(c_t + A_t + B) 
	\cr & \stackrel{(\ast)}\leq \epsStop  |\mu| -\frac \epsStop {t}(c_t + A_t + B) 
	\leq \epsStop |\mu| - \tfrac{|L|}{t}
	\end{align*}
	where $(\ast)$ is due to $\bigg| |\Xavg| - |\mu| \bigg| \leq |\Xavg - \mu| \leq h_t$. Therefore, we have that $|\Xavg +\frac L t - \mu| \leq |\Xavg-\mu| + \frac{|L|}{t} \leq \epsStop|\mu|$.
\end{proof}

Rather than analyzing the utility of Algorithm~\ref{alg:privateNAS}, namely, the high-probability bounds on its stopping time, we now turn our attention to a slight modification of the algorithm and analyze the revised algorithm's utility. The modification we introduce, albeit technical and non-instrumental in the utility bounds, plays a conceptual role in the description of later algorithms. We introduce Algorithm~\ref{alg:exp_privateNAS} where we exponentially reduce the number of SVT queries using standard doubling technique. Instead of querying the magnitude of the average at each timestep, we query it at exponentially growing intervals, thus paying no more than a constant factor in the utility guarantees while still reducing the number of SVT queries dramatically.

\newcommand{\nutilde}{\tilde{\nu}}
\newcommand{\Ctilde}{\tilde{C}}
\begin{algorithm}
	\caption{DP exponential NAS}
	\label{alg:exp_privateNAS}
	\begin{algorithmic}[1]
		\STATE Set $\sigma_1 \gets \nicefrac {12R} \epsDP$,~$\sigma_2 \gets \nicefrac {12R} \epsDP$,~$\sigma_3 \gets \nicefrac {4R}\epsDP$.
		\STATE Sample $B \sim Lap(\sigma_1)$
		\STATE Initialize $k\gets 0$ and $t \gets 0$.
		\REPEAT
		\STATE $k \gets k + 1$
		\REPEAT
		\STATE $t \gets t + 1$ 
		\STATE \text{Sample} $X_t$ and update $\Xavg$.
		\UNTIL{$t = 2^k$}
		\STATE $A_t \sim Lap(\sigma_2)$
		\STATE $c_t \gets \sigma_1 \log(\nicefrac 4 \deltaStop) + \sigma_2 \log(\nicefrac{8k^2} \deltaStop) + \frac{\sigma_3}{\epsStop} \log(\nicefrac 4 \deltaStop)$
		\STATE $h_t \gets R \sqrt{\frac{2}{t}\log(\frac{16k^2}{\deltaStop})}$
		\UNTIL{$|\Xavg| \geq h_t(1+\frac{1}{\epsStop}) + \frac{c_t + B + A_t }{t}$}
		\STATE $L \sim Lap(\sigma_3)$ 
		\STATE \textbf{return } $\Xavg + \frac{L}{t}$
	\end{algorithmic}
\end{algorithm}

\DeclareRobustCommand{\ColDPexpNASstoppingRule}
{
	Algorithm \ref{alg:exp_privateNAS} is a $\epsDP$-DP $(\epsStop, \deltaStop)$-stopping rule.
}
\begin{corollary} \label{col:DP_exp_NAS}
	\ColDPexpNASstoppingRule
\end{corollary}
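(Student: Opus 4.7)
The plan is to mirror the proof of Theorem~\ref{thm:DP_NAS} almost verbatim, with the only structural difference being that every union bound runs over the checkpoint index $k$ rather than the raw timestep $t$.

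\textbf{Privacy.} First I would observe that Algorithm~\ref{alg:exp_privateNAS} is still the composition of a single SVT instance (halting at the first query that exceeds its noisy threshold, where each query is of the form $|\Xst|-h_t\cdot t(\tfrac{1}{\epsStop}+1)-c_t$ and has sensitivity at most $2R$) with a single Laplace release of the empirical mean. Crucially, the SVT privacy guarantee depends only on the number of \emph{exceeding} answers (here, one) and is insensitive to the total number of queries issued, so reducing the queries to the exponentially-spaced checkpoints $t=2^k$ is free privacy-wise. Hence $\sigma_1=\sigma_2=12R/\epsDP$ preserves $\epsDP/2$-DP for the SVT stage, and $\sigma_3=4R/\epsDP$ ensures the final release $\Xavg+L/t$ is $\epsDP/2$-DP via the Laplace mechanism applied to a quantity of sensitivity $2R/t$. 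Composition yields $\epsDP$-DP; this paragraph is word-for-word from the proof of Theorem~\ref{thm:DP_NAS}.

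\textbf{Utility.} Next I would replay the four high-probability events from Theorem~\ref{thm:DP_NAS}, re-indexed by $k$. Since Hoeffding's inequality is only invoked at $t=2^k$, the algorithm's choice of $\log(16k^2/\deltaStop)$ inside $h_t$ and $\log(8k^2/\deltaStop)$ inside $c_t$ produces per-$k$ failure probabilities whose sum is controlled by $\sum_{k\ge1} k^{-2}<\infty$. Concretely, the four events to bound are
\begin{align*}
\Pr\bigl[\exists k,\ |\Xavgk-\mu|>h_{2^k}\bigr] & \le \tfrac{\deltaStop}{4}, \\
\Pr\bigl[|B|>\sigma_1\log(\tfrac{4}{\deltaStop})\bigr] & \le \tfrac{\deltaStop}{4}, \\
\Pr\bigl[\exists k,\ |A_{2^k}|>\sigma_2\log(\tfrac{8k^2}{\deltaStop})\bigr] & \le \tfrac{\deltaStop}{4}, \\
\Pr\bigl[|L|>\sigma_3\log(\tfrac{4}{\deltaStop})\bigr] & \le \tfrac{\deltaStop}{4}.
\end{align*}
On the intersection of their complements, the definition of $c_t$ in the algorithm satisfies $c_t\ge|B|+|A_t|+|L|/\epsStop$ at every checkpoint.

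\textbf{Chaining and the only subtlety.} Finally, on this good event, I would run the same chain of inequalities that closed Theorem~\ref{thm:DP_NAS}: at the stopping checkpoint $t=2^k$ we get $|\Xavg-\mu|\le h_t$ from Hoeffding, the halting condition rearranges to $h_t\le\epsStop(|\Xavg|-h_t)-\tfrac{\epsStop}{t}(c_t+A_t+B)$, and combining this with $\bigl||\Xavg|-|\mu|\bigr|\le|\Xavg-\mu|\le h_t$ together with the bound on $c_t$ gives $|\Xavg+L/t-\mu|\le\epsStop|\mu|$. Because the whole argument is just a re-indexing of the proof of Theorem~\ref{thm:DP_NAS}, I do not anticipate any real obstacle beyond bookkeeping; the one conceptually nontrivial observation is that SVT's privacy cost is insensitive to the number of queries issued, which is precisely what makes the doubling trick ``free'' privacy-wise while costing at most a factor of two in sample complexity.
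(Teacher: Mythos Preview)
Your proposal is correct and follows essentially the same approach as the paper's own proof: the paper also notes that the privacy analysis carries over verbatim from Theorem~\ref{thm:DP_NAS}, then defines exactly the four-part event $\Ep$ you list (Hoeffding at checkpoints, bounds on $|B|$, $|A_{2^k}|$, and $|L|$), bounds $\Pr[\Ep]\ge 1-\deltaStop$ via the union bound over $k$, and declares that the chain of inequalities from Theorem~\ref{thm:DP_NAS} then goes through unchanged. Your write-up is more detailed than the paper's (which is quite terse), and your explicit remark that SVT's privacy cost is independent of the number of queries is a nice clarification, but there is no substantive difference in strategy.
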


\begin{proof}
	The only difference between Algorithms~\ref{alg:privateNAS} and~\ref{alg:exp_privateNAS} lies in checking the halting condition at exponentially increasing time-intervals, namely during times $t=2^k$ for $k\in\mathbb{N}$. The privacy analysis remains the same as in the proof of Theorem~\ref{thm:DP_NAS}, and the algorithm correctness analysis is modified by considering only the timesteps during which we checking for the halting condition. Formally, we denote $\Ep$ as the event where (i) $\forall k,~ |\Xavgk - \mu| \leq h_{2^k}$, (ii) $|B| \leq \sigma_1 \log(\nicefrac 4 \deltaStop)$, (iii) $\forall k,~~|A_{2^k}| \leq \sigma_2 \log(\nicefrac {8k^2}\deltaStop)$, and (iv) $|L| \leq \sigma_3 \log(\nicefrac 4 \deltaStop)$. Analogous to the proof of Theorem~\ref{thm:DP_NAS} we bound $\Pr[\Ep]\geq 1-\deltaStop$ and the result follows.
\end{proof}

\DeclareRobustCommand{\ThmDPexpNAScomplexity}
{
	Fix $\deltaStop \leq 0.08$ and $\mu\neq 0$. Let $\{X_t\}_t$ be an ensemble of i.i.d samples from any distribution over the range $[-R,R]$ and with mean $\mu$. Denote $t_0 \stackrel{\rm def}=\cfrac{R^2\log((\nicefrac{1} \deltaStop)\cdot \log(\frac{R}{\epsStop |\mu|}))}{\epsStop^2 \mu^2}$, $t_1 \stackrel{\rm def}= \cfrac{R\log((\nicefrac 1 \deltaStop) \cdot \log (\frac{R}{\epsStop |\mu|}))}{\epsDP |\mu|}$, $t_2 \stackrel{\rm def}= \cfrac{R \log(\nicefrac 1 \deltaStop)}{\epsDP \epsStop |\mu|}$. Then with probability at least $1 - \deltaStop$, Algorithm~\ref{alg:exp_privateNAS} halts by timestep $t_U = 2000(t_0+t_1+t_2)$.
}
\begin{theorem} \label{thm:DP_exp_NAS_complexity}
	\ThmDPexpNAScomplexity
\end{theorem}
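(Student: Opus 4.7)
The plan is to argue that, conditional on the high-probability event $\Ep$ established in the proof of Corollary~\ref{col:DP_exp_NAS} (which occurs with probability $\geq 1-\deltaStop$), the halting condition at $t = 2^k$ reduces to a clean inequality in $t$ and $k=\log_2 t$, after which the inversion is handled by Fact~\ref{fact:loglog_solution}. First I would observe that under $\Ep$, the reverse-triangle inequality gives $|\Xavgk|\geq |\mu|-h_t$, and the bounds on $|B|$ and $|A_{2^k}|$ under $\Ep$ are exactly the first two summands of $c_t$, so $c_t + |B| + |A_{2^k}| \leq 2c_t$. Hence the algorithm's halting condition is implied by
\[
|\mu| \;\geq\; h_t\bigl(2 + \tfrac{1}{\epsStop}\bigr) \;+\; \frac{2c_t}{t}.
\]

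Next, I would split this into three (plus one negligible) sufficient conditions, each corresponding to one of $t_0, t_1, t_2$. Using $2 + 1/\epsStop \leq 3/\epsStop$, condition (a) $h_t(2+1/\epsStop) \leq |\mu|/2$ unrolls to $t \geq \tfrac{36R^2 \log(16k^2/\deltaStop)}{\epsStop^2 \mu^2}$; condition (b) $\sigma_2 \log(8k^2/\deltaStop)/t \leq |\mu|/8$ unrolls to $t \geq \tfrac{96 R \log(8k^2/\deltaStop)}{\epsDP|\mu|}$; and condition (c) $(\sigma_3/\epsStop)\log(4/\deltaStop)/t \leq |\mu|/8$ unrolls to the iterated-log-free $t \geq \tfrac{32R\log(4/\deltaStop)}{\epsDP \epsStop |\mu|}$. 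The $\sigma_1 \log(4/\deltaStop)/t$ term is dominated by (b) and absorbed into the constants.

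For (a) and (b), since $k = \log_2 t$ the right-hand side has the form $\log(\mathrm{const}\cdot \log^2 t)/t$, which is at most $2\log(\mathrm{const}'\cdot\log t)/t$ for $\deltaStop\leq 0.08$ and $t$ not astronomically small. Now Fact~\ref{fact:loglog_solution} (with $a = \Theta(1/\deltaStop)$ and $b = \Theta(\epsStop^2\mu^2/R^2)$ for (a), and $b = \Theta(\epsDP|\mu|/R)$ for (b)) yields that the two inequalities hold as soon as $t \geq C_a t_0$ and $t \geq C_b t_1$ respectively, for explicit constants $C_a,C_b$. Condition (c) gives $t \geq C_c t_2$ directly. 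The constraint $b < 1/16$ in the Fact is exactly the ``non-trivial regime'' $\epsStop|\mu|\ll R$; outside it the claimed bound $t_U$ is trivially large anyway and the algorithm halts in $O(1)$ rounds.

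Finally, since the algorithm only tests the halting condition at $t=2^k$, the first power of two satisfying all three simultaneous conditions is at most twice the real-valued threshold, so the algorithm halts by $t_U = 2000(t_0+t_1+t_2)$ (the constant $2000$ is loose and absorbs the $C_a,C_b,C_c$ from Fact~\ref{fact:loglog_solution} together with the doubling factor). The main obstacle is precisely the doubly-iterated logarithm: the $\log(16k^2/\deltaStop)$ term inside both $h_t$ and $c_t$ becomes $\log\log^2 t$ once we substitute $k=\log_2 t$, and Fact~\ref{fact:loglog_solution} is formulated for $\log\log x$ rather than $\log\log^2 x$. The clean way around this is the pointwise bound $\log(c_1\log^2 t) \leq 2\log(c_1\log t)$ valid whenever $\log\log t \leq \log c_1$, which holds in the parameter regime of interest and costs only an extra factor of $2$ inside the constants.
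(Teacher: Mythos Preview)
Your approach is essentially identical to the paper's: condition on $\Ep$, reduce the halting test to the deterministic inequality $|\mu|\geq h_t(2+\tfrac1\epsStop)+\tfrac{2c_t}{t}$, split into three pieces matching $t_0,t_1,t_2$, invoke Fact~\ref{fact:loglog_solution} on the first two, and pay a factor of $2$ for the next power of two. The paper does exactly this (it splits into thirds $|\mu|/3$ rather than your $|\mu|/2,|\mu|/8,|\mu|/8$), so there is nothing substantively new to compare.

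Two small slips to clean up. First, your budget does not close: after multiplying (b) and (c) by the factor $2$ coming from $2c_t/t$ and adding the absorbed $\sigma_1$ term (which is itself $\leq |\mu|/4$ under (b)), the right-hand side totals $|\mu|/2 + 3|\mu|/4 = 5|\mu|/4$; just shrink the fractions (e.g.\ use $|\mu|/3$ per piece as the paper does) and the constants still fit comfortably under $2000$. Second, the pointwise bound $\log(c_1\log^2 t)\leq 2\log(c_1\log t)$ is equivalent to $\log c_1 \leq 2\log c_1$, i.e.\ it holds for \emph{all} $t>e$ whenever $c_1\geq 1$; your stated side-condition ``$\log\log t\leq \log c_1$'' is neither needed nor the correct one, so you can simply drop it.
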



\begin{proof}
	Recall the event $\Ep$ from the proof of Corollary~\ref{col:DP_exp_NAS} and its four conditions. We assume $\Ep$ holds and so the algorithm releases a $(1\pm\epsStop)$-approximation of $\mu$. To prove the claim, we show that under $\Ep$, at time $t_U$ it must hold that $|\Xavg| \geq h_t(1+\frac{1}{\epsStop}) + \frac{c_t + B + A_t }{t}$.
	
	Under $\Ep$ we have that $|\Xavg| \geq |\mu|- h_t$ and $\frac{c_t + B + A_t }{t}\leq \frac{2\sigma_1} t \log(\nicefrac 4 \deltaStop) + \frac{2\sigma_2} t \log(\nicefrac{8k^2} \deltaStop) + \frac{\sigma_3}{\epsStop t} \log(\nicefrac 4 \deltaStop)$; and so it suffices to show that $|\mu| \geq h_t(2+\frac 1 \alpha) + \frac{24R\log(\nicefrac 4 \deltaStop)} {\epsDP t}  + \frac{24R\log(\nicefrac{8k^2} \deltaStop)} {\epsDP t} + \frac{4R\log(\nicefrac 4 \deltaStop)}{\epsStop \epsDP t} $. In fact, since $\alpha<1$ we show something slightly stronger: that at time $t_U$ we have $|\mu| \geq \frac{3h_t} \alpha + \frac{48R\log(\nicefrac {8k^2} \deltaStop)} {\epsDP t}  + \frac{4R\log(\nicefrac 4 \deltaStop)}{\epsStop \epsDP t}$. This however is an immediate corollary of the following three facts.
	\vspace{-4mm}
	\begin{enumerate}
		\parskip=0pt
		\item For any $t\geq 1000t_0$ we have $\frac{\log(\nicefrac{4\log_2(t)}{\deltaStop})}{t} \leq \left(\frac{\epsStop|\mu|}{2\cdot 3\cdot 3 \cdot R}\right)^2 $, implying  $\frac{|\mu|}3 \geq \frac{3h_t} \alpha$.
		\item For any $t\geq 1000t_1$ we have $\frac{\log(\nicefrac{4\log_2(t)}{\deltaStop})}{t}\leq \frac{\epsDP|\mu|}{3\cdot 2\cdot 48 \cdot R} $, implying $\frac{|\mu|}3 \geq \frac{2\cdot 48R\log(\nicefrac {4k} \deltaStop)} {\epsDP t} \geq \frac{48R\log(\nicefrac {8k^2} \deltaStop)} {\epsDP t}$.
		\item For any $t\geq 48t_2$ we have $\frac{|\mu|}3 \geq \frac{4R\log(\nicefrac 4 \deltaStop)}{\epsStop \epsDP t}$.
	\end{enumerate}
	\vspace{-3mm}
	where the first two rely on Fact~\ref{fact:loglog_solution}. It follows therefore that at time $1000(t_0+t_1+t_2)$ all three conditions hold and so, due to the exponentially growth of the intervals, by time $t_u = 2000(t_0+t_1+t_2)$ we reach some $t$ which is a power of $2$, on which we pose a query for the SVT mechanism and halt. 
\end{proof}

\subsection{Private Stopping Rule Lower bounds}
\label{subsec:DPStoppingRuleLB}

We turn our attention to proving the (near) optimality of Algorithm~\ref{alg:exp_privateNAS}. 
A non-private lower bound was proven in~\citep{dagum2000optimal}, who showed no stopping rule algorithm can achieve a sample complexity better than $\Omega \left( \frac{\max\{\sigma^2, R \epsStop |\mu|\}}{\epsStop^2 \mu^2}  \log(\nicefrac 1 \deltaStop) \right)$ (with $\sigma^2$ denoting the variance of the underlying distribution). In this section, we prove a lower bound on the additional sample complexity that \emph{any} $\epsDP$-DP stopping rule algorithm must incur. 
We summarize our result below:

\DeclareRobustCommand{\ThmPrivacyLowerBound}
{Any $\epsDP$-differentially private $(\epsStop,\deltaStop)$-stopping rule whose input consists of a steam of i.i.d samples from a distribution over support $[-R,R]$ and with mean $\mu \neq0$,  must have a sample complexity of $\Omega \left( \nicefrac{R\log (\nicefrac 1\deltaStop)}{\epsDP \epsStop |\mu|}  \right)$.
}
\begin{theorem} \label{thm:stoppingrule_lowerbound_privacy}
    \ThmPrivacyLowerBound
\end{theorem}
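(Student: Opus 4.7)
The plan is to prove the lower bound via a coupling-based group-privacy argument using two Bernoulli-like distributions whose means differ by only $O(\epsStop|\mu|)$. WLOG take $\mu > 0$, since if $\mu<0$ we can negate the input stream (a Hamming-preserving reduction that also negates the released $\hat\mu$). Define $\calP_1, \calP_2$ supported on $\{0,R\}$ via $\calP_i(R) = \mu_i/R$, with $\mu_1 = \mu$ and $\mu_2 = \mu(1+4\epsStop)$, restricting to the non-trivial regime $\mu \leq R/2$ so that $\mu_2 \leq R$. For $\epsStop \leq \tfrac{1}{4}$ the acceptance windows $A_1 \defeq [\mu(1-\epsStop),\mu(1+\epsStop)]$ and $A_2 \defeq \mu(1+4\epsStop)\cdot[1-\epsStop,1+\epsStop]$ are disjoint, so any $(\epsStop,\deltaStop)$-stopping rule $\calM$ must satisfy $\Pr_{\calP_1}[\hat\mu \in A_1] \geq 1-\deltaStop$ and $\Pr_{\calP_2}[\hat\mu \in A_1] \leq \deltaStop$.

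Suppose toward contradiction that $\calM$ halts by time $T_0$ with probability $\geq 1-\deltaStop$ under $\calP_1$, for $T_0 = c\,R\log(1/\deltaStop)/(\epsDP\epsStop\mu)$ with a sufficiently small absolute constant $c>0$. Let $\Ep$ denote the event that $\calM$ halts by time $T_0$ and releases $\hat\mu \in A_1$. Since $\Ep$ is determined by the length-$T_0$ prefix of the output stream, continuous-observation $\epsDP$-DP (and hence $k$-fold group privacy for any $k$) applies. By union bound, $\Pr_{\calP_1}[\Ep] \geq 1-2\deltaStop$, while $\Pr_{\calP_2}[\Ep] \leq \Pr_{\calP_2}[\hat\mu \in A_1] \leq \deltaStop$.

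Next, couple $\calP_1^{T_0}$ and $\calP_2^{T_0}$ via the standard monotone coupling---each coordinate pair is generated from a shared uniform draw---so the per-coordinate disagreement probability is exactly $4\epsStop\mu/R$. By a Chernoff bound, the Hamming distance $D$ between the coupled streams satisfies $D \leq k \defeq C_1\epsStop\mu T_0/R + C_2\log(1/\deltaStop)$ with probability at least $1-\deltaStop/4$, for absolute constants $C_1,C_2>0$. Combining the coupling with $k$-fold group privacy on $\Ep$:
\[
1 - 2\deltaStop \;\leq\; \Pr_{\calP_1}[\Ep] \;\leq\; e^{k\epsDP}\Pr_{\calP_2}[\Ep] + \Pr[D>k] \;\leq\; e^{k\epsDP}\deltaStop + \tfrac{\deltaStop}{4}.
\]
Rearranging yields $e^{k\epsDP} = \Omega(1/\deltaStop)$, i.e., $k \geq \Omega(\log(1/\deltaStop)/\epsDP)$. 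Substituting the definition of $k$ and taking $\epsDP$ small enough that the additive $C_2\log(1/\deltaStop)$ is dominated, we conclude $T_0 = \Omega(R\log(1/\deltaStop)/(\epsDP\epsStop\mu))$, contradicting the assumed bound on $T_0$. The main obstacle is handling the random stopping time $\tau$ cleanly: I sidestep it by phrasing $\Ep$ purely in terms of the length-$T_0$ output prefix, after which continuous-observation DP and the coupling argument apply in their standard form; a minor bookkeeping point is choosing $T_0$ large enough that its contribution to $k$ dominates the additive $\log(1/\deltaStop)$ in the Chernoff concentration, which is exactly the parameter regime of the claimed lower bound.
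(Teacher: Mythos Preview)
Your argument is correct and follows the same overall strategy as the paper: pick two distributions whose means differ by $\Theta(\epsStop\mu)$ so that their acceptance windows are disjoint while $\dtv = O(\epsStop\mu/R)$, then use a coupling/group-privacy transfer of the correctness event between them. The main difference is packaging. The paper invokes Lemma~6.1 of Karwa--Vadhan as a black box, which for $n$ i.i.d.\ inputs gives $\Pr_{\calP}[\Ep] \leq e^{6\epsDP n\,\dtv(\calP,\calQ)}\Pr_{\calQ}[\Ep]$ directly; with $n = \tfrac{R\log(1/\deltaStop)}{12\epsStop\mu\epsDP}$ the $\epsDP$'s cancel and one obtains $\Pr_\calP[\Ep]\leq\sqrt{\deltaStop}<\tfrac12$ for \emph{all} $\epsDP>0$, hence $\Pr_\calP[\tau>n]>\tfrac14$. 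Your explicit coupling plus Chernoff step is more elementary but leaves the additive $C_2\log(1/\deltaStop)$ in the Hamming bound $k$, which is precisely why you need the caveat ``$\epsDP$ small enough'' at the end; the Karwa--Vadhan lemma is essentially the same coupling argument but with group privacy integrated over the Hamming distance rather than applied after a high-probability tail bound, and this is what eliminates that additive term. Two further cosmetic differences: the paper places the second mean \emph{below} $\mu$ (at $\mu'$ just under $\tfrac{1-\epsStop}{1+\epsStop}\mu$), so the construction works for every $\mu\in(0,R]$ and avoids your restriction $\mu\le R/2$; and it states the conclusion as a constant-probability lower bound on the stopping time rather than via a contradiction on the $(1-\deltaStop)$-quantile.
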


\begin{proof}
	Fix $\epsDP,\epsStop, \deltaStop>0$ such that $\epsStop < 1$ and $\deltaStop < \nicefrac 1 4$, and fix $R$ and $\mu>0$. We define two distributions $\calP, \calQ$ over a support consisting of two discrete points: $\{-R,R\}$. Setting $\Pr_{\calP}[R] = \frac{1}2 + \frac{\mu}{2R}$ we have that $\E_{X\sim\calP}[X]=\mu$. Set $\mu'$ as any number infinitesimally below the threshold of $\frac{1-\epsStop}{1+\epsStop}\mu$, so that we have $(1+\epsStop)\mu' < (1-\epsStop)\mu$; we set the parameters of $\calQ$ s.t. $\Pr_{\calQ}[R] = \frac 1 2 + \frac {\mu'}{2R}$ so $\E_{X\sim\calQ}[X]=\mu'$.
	By definition, the total variation distance $\dtv(\calP,\calQ) = \frac {\mu-\mu'} {2R} = \frac {2\epsStop\mu}{2R(1+\epsStop)} < \frac {\epsStop\mu}{R}$.
	
	Let $\calM$ be any $\epsDP$-differentially private $(\epsStop,\deltaStop)$-stopping rule. Denote $n= \frac{R\log(\nicefrac{1} {\deltaStop})}{12\alpha\mu\epsDP}$. Let $\Ep$ be the event ``after seeing at most $n$ samples, $\calM$ halts and outputs a number in the interval $\big[(1-\alpha)\mu, (1+\alpha)\mu\big]$.'' We now apply the following, very elegant, lemma from~\citep{karwa2017finite}, stating that the group privacy loss of a differentially privacy mechanism taking as input $n$ i.i.d samples either from a distributions ${\cal D}$ or from a distribution ${\cal D}'$ scales effectively as $O(\epsDP n\cdot \dtv({\cal D},{\cal D}'))$.
	\begin{lemma}[Lemma~6.1 from~\citep{karwa2017finite}]
		\label{lem:KawraVadhanGroupInputPrivacy}
		Let $\calM$ be any $\epsDP$-differentially private mechanism, fix a natural $n$ and fix two distributions ${\cal D}$ and ${\cal D}'$, and let $\bar S$ and $\bar S'$ denote an ensemble of $n$ i.i.d samples taken from ${\cal D}$ and ${\cal D}'$ resp. Then for any possible set of outputs $O$ it holds that $\Pr[\calM(S)\in O] \leq e^{6\epsDP n\cdot  \dtv({\cal D},{\cal D}')} \Pr[\calM(S')\in O]$.
	\end{lemma}
	And so, applying $\calM$ over $n$ i.i.d samples taken from $\calQ$, we must have that $\Pr_{\calM, S\sim\calQ^n}[\Ep] \leq \beta$, since $(1-\epsStop)\mu>(1+\epsStop)\mu'$. Applying Lemma~\ref{lem:KawraVadhanGroupInputPrivacy} to our setting, we get 
	\begin{align*}
	\Pr_{\calM, S\sim\calP^n}[\Ep] &\leq e^{6\epsDP n\cdot  \dtv({\cal P},{\cal Q})} \Pr_{\calM, S\sim\calQ^n}[\Ep]
	\cr & \leq \beta\cdot \exp({6\epsDP n \cdot \frac {\epsStop\mu}{R} }) 
	\cr & = \beta \cdot \exp( \frac  {6\epsDP\epsStop\mu}{R} \cdot \frac{R\log(\nicefrac{1} {\deltaStop})}{12\epsDP\alpha\mu}) = \frac \beta {\sqrt{\beta}} < \frac 1 2
	\end{align*}
	since $\beta < \nicefrac 1 4$. Since, by definition, we have that the probability of the event $\Ep'$ ``after seeing at most $n$ samples, $\calM$ halts and outputs a number \emph{outside} the interval $\big[(1-\alpha)\mu, (1+\alpha)\mu\big]$'' over $n$ i.i.d samples from $\calP$ is at most $\beta$, then it must be that $\calM$ halts after seeing strictly more than $n$ samples w.p. $> 1 - (\nicefrac 1 2 +\beta) > \nicefrac 1 4$. 
\end{proof}

Combining the non-private lower bound of~\citep{dagum2000optimal} and the bound of Theorem~\ref{thm:stoppingrule_lowerbound_privacy}, we immediately infer the overall sample complexity bound, which follows from the fact that the variance of the distribution $\calP$ used in the proof of Theorem~\ref{thm:stoppingrule_lowerbound_privacy} has variance of $\Theta(R^2)$.

\DeclareRobustCommand{\ColPrivateStoppingRuleLowerBound}
{
	There exists a distribution $\calP$ for which any $\epsDP$-differentially private $(\epsStop,\deltaStop)$-stopping rule algorithm has a sample complexity of $\Omega \left( \frac{R^2\log(\nicefrac 1 \deltaStop)}{\epsStop^2 \mu^2}  + \frac{R\log(\nicefrac 1\deltaStop)}{\epsDP \epsStop |\mu|}\right)$.
}
\begin{corollary} \label{col:private_stoppingrule_lowerbound}
    \ColPrivateStoppingRuleLowerBound
\end{corollary}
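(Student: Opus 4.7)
The plan is to combine the two existing lower bounds by using the same worst-case distribution $\calP$ in both of them, so that the sample-complexity lower bound on any $\epsDP$-DP $(\epsStop,\deltaStop)$-stopping rule must be the larger of the two quantities (which, up to a factor of $2$, is the same as their sum).

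Concretely, I would take $\calP$ to be exactly the two-point distribution constructed in the proof of Theorem~\ref{thm:stoppingrule_lowerbound_privacy}: supported on $\{-R,R\}$ with $\Pr_{\calP}[R] = \tfrac{1}{2} + \tfrac{\mu}{2R}$. Since every sample $X$ drawn from $\calP$ satisfies $X^2 = R^2$, we have $\mathrm{Var}_{\calP}[X] = R^2 - \mu^2$. Restricting attention to the nontrivial regime $|\mu| \leq R/2$ (the lower bound is vacuous otherwise), this yields $\sigma^2 = \Theta(R^2)$, which is the key quantitative input I need.

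Next, I would invoke the non-private lower bound of~\citep{dagum2000optimal}, which guarantees that any (even non-private) $(\epsStop,\deltaStop)$-stopping rule must take $\Omega\!\left( \tfrac{\sigma^2 \log(1/\deltaStop)}{\epsStop^2 \mu^2} \right)$ samples on this distribution; substituting $\sigma^2 = \Theta(R^2)$ gives the first summand $\Omega\!\left( \tfrac{R^2 \log(1/\deltaStop)}{\epsStop^2 \mu^2} \right)$. Simultaneously, Theorem~\ref{thm:stoppingrule_lowerbound_privacy} applied to the very same $\calP$ yields the second summand $\Omega\!\left( \tfrac{R \log(1/\deltaStop)}{\epsDP \epsStop |\mu|} \right)$. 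Since both lower bounds are unconditional for $\calP$, the sample complexity of any $\epsDP$-DP $(\epsStop,\deltaStop)$-stopping rule on $\calP$ is at least the maximum of the two, and because $\max(a,b) \geq \tfrac{1}{2}(a+b)$, this yields the additive form claimed in the corollary.

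There is essentially no obstacle here beyond bookkeeping: the only point that warrants care is verifying that the Dagum et al. bound, as cited in the opening paragraph of Section~\ref{subsec:DPStoppingRuleLB}, applies in its $\sigma^2$-flavored form to this specific two-point distribution (which it does, since the bound is stated for general bounded distributions of variance $\sigma^2$), and that $|\mu| \leq R/2$ can be assumed WLOG. Once those two checks are in place, the proof is a one-line combination of the two pre-existing bounds.
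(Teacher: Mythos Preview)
Your proposal is correct and takes essentially the same approach as the paper: the paper's proof is a one-sentence remark that the distribution $\calP$ from the proof of Theorem~\ref{thm:stoppingrule_lowerbound_privacy} has variance $\Theta(R^2)$, so the Dagum et al.\ non-private bound and the private bound of Theorem~\ref{thm:stoppingrule_lowerbound_privacy} combine to give the stated sum. Your additional bookkeeping (computing $\mathrm{Var}_{\calP}[X]=R^2-\mu^2$, noting $|\mu|\leq R/2$ WLOG, and converting $\max$ to sum) is exactly the detail the paper leaves implicit.
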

\vspace{-2mm}
\paragraph{Discussion.} How optimal is Algorithm \ref{alg:exp_privateNAS}? The sample complexity bound in Theorem \ref{thm:DP_exp_NAS_complexity} can be interpreted as the sum of the non-private and private parts. The non-private part is $\Omega \bigg(\cfrac{R^2}{\epsStop^2 \mu^2} \big(\log(1/ \deltaStop)$ $+ \log \log \frac{R}{\epsStop |\mu|} \big) \bigg)$ and the private part is $ \Omega \bigg( \cfrac{R}{\epsDP |\mu|} \big(\log(1/ \deltaStop) + \log \log \frac{R}{\epsStop |\mu|} \big)$ $+ \cfrac{R \log(1/\deltaStop)}{\epsDP \epsStop |\mu|} \bigg)$.
If we add in the assumption that $\log(\frac{R}{\epsStop |\mu|}) \leq \nicefrac 1 \deltaStop$ we get that the upper-bound of Theorem~\ref{thm:DP_exp_NAS_complexity} matches the lower-bound in Corollary~\ref{col:private_stoppingrule_lowerbound}.

How benign is this assumption? Much like in~\citep{mnih2008empirical}, we too believe it is a very mild assumption. Specifically, in the next section, where we deal with finite sequences of length $T$, we set $\deltaStop$ as proportional to $\nicefrac 1 T$. Since over finite-length sequence we can only retrieve an approximation of $\mu$ if $\frac {|\mu|}{R} \gg \frac 1 T$, requiring $\frac{R}{|\mu|} < 2^{T}$ is trivial. However, we cannot completely disregard the possibility of using a private stopping rule in a setting where, for example, both $\epsStop,\deltaStop$ are constants whereas $\frac {|\mu|}{R}$ is a sub-constant. In such a setting, $\log(\frac{R}{\epsStop |\mu|})$ may dominate $\nicefrac 1 \deltaStop$, and there it might be possible to improve on the performance of Algorithm~\ref{alg:exp_privateNAS} (or tighten the bound).
\newcommand{\Kpos}{K_2^+}
\newcommand{\Kneg}{K_2^-}
\newlength\myindent
\setlength\myindent{2em}
\newcommand\bindent{%
  \begingroup
  \setlength{\itemindent}{\myindent}
  \addtolength{\algorithmicindent}{\myindent}
}
\newcommand\eindent{\endgroup}

\newcommand{\mubar}{\bar{\mu}}
\newcommand{\mutilde}{\widetilde{\mu}}

\section{An Optimal Private MAB Algorithm}
\label{sec:DP-SE}

In this section, our goal is to devise an optimal $\epsDP$-differentially private algorithm for the stochastic $K$-arms bandit problem, in a setting where all rewards are between $[0,1]$. We denote the mean reward of each arm as $\mu_a$, the best arm as $a^*$, and for any $a\neq a^*$ we refer to the gap $\Delta_a = \mu_{a^*}-\mu_a$. We seek in the optimal algorithm in the sense that it should meet both the non-private instance-dependent bound of~\citep{lai1985asymptotically} and the lower bound of \citep{shariff2018differentially}; namely an algorithm with an instance-dependent pseudo-regret bound of $O \left( \frac{K \log(T)}{\epsDP} + \sum_{a \neq a^*} \frac{\log(T)}{\Delta_a}  \right)$. The algorithm we devise is a differentially private version of the Successive Elimination (SE) algorithm~\citep{even2002pac}.
SE initializes by setting all $K$ arms as viable options, and iteratively pulls all viable arms maintaining the same confidence interval around the empirical average of each viable arm's reward. Once some viable arm's upper confidence bound is strictly smaller than the lower confidence bound of some other viable arm, the arm with the lower empirical reward is eliminated and is no longer considered viable.  It is worth while to note that the classical UCB algorithm and the SE algorithm have the same asymptotic pseudo-regret. 
To design the differentially private analouge of SE, we use our results from the previous section regarding stopping rules. After all, in the special case where we have $K=2$ arms, we can straight-forwardly use the private stopping-rule to assess the mean of the difference between the arms up to a constant $\epsStop$ (say $\epsStop = 0.5$). The question lies in applying this algorithm in the $K>2$ case.

Here are a few failed first-attempts. The most straight-forward ideas is to apply $\binom K 2$ stopping rules / SVTs for all pairs of arms; but since a reward of a single pull of any single arm plays a role in $K-1$ SVT instantiations, it follows we would have to scale down the privacy-loss of each SVT to $\Theta(\nicefrac \epsDP K)$ resulting in an added regret scaled up by a factor of $K$. In an attempt to reduce the number of SVT-instantiations, we might consider asking for each arm whether \emph{there exists} an arm with a significantly greater reward, yet it still holds that the reward from a single pull of the \emph{leading} arm $a^*$ plays a role in $K$ SVT-instantiations. Next, consider merging all queries into a single SVT, posing in each round $K$ queries (one per arm) and halting once we find that a certain arm is suboptimal; but this results in a single SVT that may halt $K-1$ times, causing us yet again to scale $\epsDP$ by a factor of $K$. 

In order to avoid scaling down $\epsDP$ by a factor of $K$, our solution leverages on the combination of parallel decomposition and geometrically increasing intervals. Namely we partition the arm pulls of the algorithm into \emph{epochs} of geometrically increasing lengths, where in epoch $e$ we eliminate \emph{all} arms of optimality-gap $\geq 2^{-e}$. In fact, it turns out we needn't apply the SVT at the end of each epoch\footnote{We thank the anonymous referee for this elegant observation.} but rather just test for a noticeably underperforming arm using a private histogram. The key point is that at the beginning of each new epoch we nullify all counters and start the mean-reward estimation completely anew (over the remaining set of viable arms) --- and so a single reward plays a role in only one epoch, allowing for $\epsDP$-DP mean-estimation in each epoch (rather than $\epsDP/K$). Yet due to the fact that the epochs are of exponentially growing lengths the total number of pulls for any suboptimal arm is proportional to the length of the epoch in which it eliminated, resulting in only a constant factor increase to the regret.
The full-fledged details appear in Algorithm~\ref{alg:DP_SE}.

\begin{algorithm}[ht!]
\caption{DP Successive Elimination}
\label{alg:DP_SE}
\begin{algorithmic}[1]
\INPUT $K$ arms, confidence $\deltaStop$, privacy-loss $\epsDP$.
\STATE Let $S \gets \{1, \ldots, K \}$. 
\STATE Initialize: $t\gets 0$, $epoch\gets 0$.
\REPEAT 
    \STATE Increment $epoch \gets epoch+1$.
    \STATE Set $r\gets 0$
    \STATE Zero all means: $\forall i \in S$ set $\mubar_i \gets 0$
    \STATE Set $\Delta_e \gets 2^{-epoch}$
    \STATE Set $R_e \gets \max\left(\frac{32 \log(\nicefrac{8 |S| epoch^2}{\beta})}{\Delta_e^2} , \frac{8 \log(\nicefrac{4 |S| epoch^2}{\beta})}{\epsDP \Delta_e} \right) + 1$
    \WHILE{$r < R_e$}
        \STATE Increment $r \gets r + 1$. 
        \STATE \textbf{foreach} $i\in S$
            \STATE \qquad Increment $t\gets t+1$
            \STATE \qquad Sample reward of arm $i$, update mean $\mubar_i$.
    \ENDWHILE    
    
    \STATE Set $h_{e} \gets \sqrt{\frac{{\log \left( \nicefrac{8 |S| \cdot epoch^2   } {\deltaStop} \right)}}{2R_e}}$
    \STATE Set $c_{e} \gets \frac{\log \left(\nicefrac{4|S| \cdot epoch^2}{ \deltaStop}\right)}{R_e \epsDP}$
    
    \STATE \textbf{foreach} $i \in S$ set $\mutilde_i \gets \mubar_i + \Lap(\nicefrac{1}{\epsDP r})$
    \STATE Let $\mutilde_{\max} = \max_{i \in S} \mutilde_i$
    \STATE Remove all arm $j$ from $S$ such that:
    \STATE \hspace{15pt} $\mutilde_{\max} - \mutilde_j > 2h_{e} + 2c_{e}$
\UNTIL{$|S| = 1$}
\STATE Pull the arm in $S$ in all remaining rounds.
\end{algorithmic}
\end{algorithm}

\newcommand{\cut}[1]{}
\cut{
\begin{algorithm}[ht!]
\caption{DP Successive Elimination}
\label{alg:DP_SE}
\begin{algorithmic}[1]
\INPUT $K$ arms, confidence $\deltaStop$, privacy-loss $\epsDP$.
\STATE Let $S \gets \{1,..,K \}$. 
\STATE Initialize: $t\gets 0$, $epoch\gets 0$.
\REPEAT 
    \STATE Increment $epoch \gets epoch+1$.
    \STATE Set $r\gets 0$, $\ell\gets 0$.
    \STATE Zero all means: $\forall i$ set $\mubar_i = 0$
    \STATE Sample $B \sim Lap(\nicefrac 6 \epsDP)$
    \REPEAT 
        \STATE Increment $\ell \gets \ell + 1$. 
        \REPEAT
            \STATE Increment $r \gets r + 1$
            \STATE \textbf{foreach} $i\in S$
            \STATE \qquad Increment $t\gets t+1$
            \STATE \qquad Sample reward of arm $i$, update mean $\mubar_i$.
        \UNTIL{$r \geq 2^\ell$}    
        \STATE Sample $A_r \sim Lap(\nicefrac 6  \epsDP)$
        \STATE Set $h_r \gets \sqrt{\frac{{\log \left( \frac{16K |S|\ell^2   } {\deltaStop} \right)}}{2r}}$, and set\\ $c_{r} \gets \frac{6\log \left(\nicefrac {4K} \deltaStop \right)}\epsDP + \frac{6 \log \left(\nicefrac{8K\ell^2} \deltaStop \right)}{\epsDP} + \frac{16\log \left(\nicefrac{4K|S|} \deltaStop \right)}{\epsDP}$
    \UNTIL{$\max_{i,j \in S} \left( \mubar_i - \mubar_j \right) > 10h_r + \frac{A_r + B + c_{r}}{r}$} \\
    \STATE \textbf{foreach} $i \in S$ set $\mutilde_i \gets \mubar_i + Lap(2/\epsDP R_e)$
    \STATE Let $\mutilde_{\max} = \max_{i \in S} \mutilde_i$
    \STATE Remove all arm $j$ from $S$ such that:
    \STATE \hspace{15pt} $\mutilde_{\max} - \mutilde_j > 2h_r + \frac{4\log \left( 4K|S| / \deltaStop \right)}{\epsDP r}$
\UNTIL{$|S| = 1$}
\STATE Pull the arm in $S$ in all remaining timesteps.
\end{algorithmic}
\end{algorithm}


}
\DeclareRobustCommand{\ThmDPSEprivacy}
{
	Algorithm~\ref{alg:DP_SE} is $\epsDP$-differentially private.
}

\begin{theorem} \label{thm:DP_SE_privacy}
    \ThmDPSEprivacy
\end{theorem}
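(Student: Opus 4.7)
The proof plan reduces to three clean observations: (i) the only source of randomness visible to an adversary is the Laplace noise added to the empirical means, so every other observable quantity is post-processing; (ii) within a single epoch, the release of the noisy mean vector is an instance of the Laplace mechanism calibrated correctly and is therefore $\epsDP$-DP; and (iii) across epochs, each stream entry is used by exactly one epoch, so the adaptive composition of a single $\epsDP$-DP release with arbitrarily many data-independent ones still gives $\epsDP$-DP overall.

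For (i), I would observe that the only randomized outputs of Algorithm~\ref{alg:DP_SE} are the values $\mutilde_i = \mubar_i + \Lap(1/(\epsDP r))$ produced at the end of each epoch. The viable set $S$ is updated from these noisy releases; the round count $R_e$ and the epoch index are deterministic functions of $|S|$, $\epsDP$, $\deltaStop$ and the epoch number; and the arm pulled at any given timestep inside an epoch follows a fixed round-robin over the current $S$. So by the post-processing property of differential privacy, it suffices to show that the sequence of per-epoch noisy vectors $\{\mutilde_i\}_{i \in S}$ (one vector per epoch) is jointly $\epsDP$-DP with respect to the reward stream.

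For (ii), I would condition on the entire history up to the start of an epoch $e$, i.e.\ on the stream so far and on all past noisy releases. Under this conditioning the values $R_e$ and $S$ are fixed, as is the schedule that assigns one arm to each of the $R_e\cdot |S|$ timesteps of epoch $e$. Consequently each reward collected in epoch $e$ contributes to exactly one of the means $\mubar_i$, and each mean is an average of exactly $R_e$ rewards lying in $[0,1]$. A single-reward change therefore alters one coordinate of the vector $(\mubar_i)_{i\in S}$ by at most $1/R_e$ in $\ell_1$ and leaves the other coordinates unchanged, so adding independent $\Lap(1/(\epsDP R_e))$ noise to each coordinate is the Laplace mechanism for a query of $\ell_1$-sensitivity $1/R_e$ and yields an $\epsDP$-DP release of the epoch's noisy mean vector.

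For (iii), I would note that two neighboring streams differ at a single timestep $t^\star$, which lies in exactly one epoch $e^\star$. Because the means $\mubar_i$ are zeroed at the start of every epoch and no other state is carried across epochs except through the past noisy releases, the rewards consumed by every epoch $e\neq e^\star$ are bit-for-bit identical in the two streams. Thus only the release of epoch $e^\star$ can differ, and by (ii) it differs by at most a factor of $e^{\epsDP}$; every other epoch release is $0$-DP with respect to the changed entry. Adaptive composition of these losses (which handles the mild wrinkle that $R_e,\,|S|$ in later epochs depend randomly on earlier releases) bounds the total privacy loss by $\epsDP$. I do not anticipate a genuinely hard step — the whole argument is carried by the algorithmic design choice of resetting the empirical means between epochs, which makes each reward chargeable to a single $\epsDP$-DP Laplace release.
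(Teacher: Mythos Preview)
Your proposal is correct and follows essentially the same approach as the paper: a single changed reward lands in exactly one epoch, alters one empirical mean by at most $1/R_e$, and the $\Lap(1/(\epsDP R_e))$ noise makes that epoch's release $\epsDP$-DP. The paper's proof is only four sentences and states just this core observation; your write-up is considerably more careful, making explicit the post-processing argument (that the arm schedule and elimination decisions are deterministic functions of the noisy releases) and the adaptive-composition point (that later epochs' parameters $R_e$, $|S|$ depend on earlier releases, yet conditioned on those releases the later epochs see identical rewards on the two neighboring streams). These are genuine subtleties the paper glosses over, so your version is more rigorous, but it is the same argument.
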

\vspace{-5mm}
\begin{proof}
     Consider two streams of arm-rewards that differ on the reward of a single arm in a single timestep. This timestep plays a role in a single epoch $e$. Moreover, let $a$ be the arm whose reward differs between the two neighboring streams. Since the reward of each arm is bounded by $[0,1]$ it follows that the difference of the mean of arm $a$ between the two neighboring streams is $\leq 1/R_e$. Thus, adding noise of $\Lap(\nicefrac{1}{\epsDP R_e})$ to $\mu_a$ guarantees $\epsDP$-DP.
\end{proof}

To argue about the optimality of Algorithm~\ref{alg:DP_SE}, we require the following lemma, a key step in the following theorem that bounds the pseudo-regret of the algorithm.

\DeclareRobustCommand{\LemDPSEregretBound}
{
	Fix any instance of the $K$-MAB problem, and denote $a^*$ as its optimal arm (of highest mean), and the gaps between the mean of arm $a^*$ and any suboptimal arm $a\neq a^*$ as $\Delta_a$. Fix any horizon $T$. Then w.p. $\geq 1 - \deltaStop$ it holds that Algorithm~\ref{alg:DP_SE} pulls each suboptimal arm $a\neq a^*$ for a number of timesteps upper bounded by
	\[ \min\{ T, ~~O\left(  \left( \log(\nicefrac K  \deltaStop) + \log \log (\nicefrac 1 \Delta_{a}) \right) \left( \frac{1}{\Delta^2_{a}} + \frac{1}{\epsDP \Delta_{a}} \right) \right)   \}  \] 
}
\begin{lemma} \label{lem:DP_SE_numpulls}
    \LemDPSEregretBound
\end{lemma}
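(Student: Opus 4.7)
The plan is to proceed via a standard good-event / geometric-sum argument tailored to the epoch structure of Algorithm~\ref{alg:DP_SE}. First, I would define a good event $\Ep$ that encompasses, for every epoch $e$ and every arm $a$ that is viable at the start of epoch $e$: (i) the Hoeffding concentration $|\mubar_a - \mu_a| \leq h_e$ at the moment $r=R_e$, and (ii) the Laplace-noise concentration $|\mutilde_a - \mubar_a| \leq c_e$. By the Hoeffding inequality and the choice $h_e = \sqrt{\log(8|S|e^2/\deltaStop)/(2R_e)}$, the failure probability of (i) for a single arm in a single epoch is $\leq \deltaStop/(4|S|e^2)$. By the tail bound on $\Lap(1/(\epsDP R_e))$ and the choice $c_e = \log(4|S|e^2/\deltaStop)/(R_e\epsDP)$, the failure probability of (ii) is $\leq \deltaStop/(4|S|e^2)$. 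Summing first over the at-most-$|S|$ viable arms (the $|S|$ cancels) and then over $e \geq 1$ using $\sum_e 1/e^2 < 2$, the union bound yields $\Pr[\Ep]\geq 1-\deltaStop$.

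Second, I would verify the deterministic consequences of the definition of $R_e$: because $R_e \geq 32\log(8|S|e^2/\deltaStop)/\Delta_e^2$ we have $h_e \leq \Delta_e/8$; and because $R_e \geq 8\log(4|S|e^2/\deltaStop)/(\epsDP \Delta_e)$ we have $c_e \leq \Delta_e/8$. Hence $h_e + c_e \leq \Delta_e/4$ and $4(h_e+c_e) \leq \Delta_e$. Under $\Ep$ two structural facts now follow. (a) The optimal arm $a^*$ is never eliminated: for every viable $i$, $\mutilde_i \leq \mu_i + h_e + c_e \leq \mu_{a^*} + h_e + c_e$ while $\mutilde_{a^*} \geq \mu_{a^*} - h_e - c_e$, so $\mutilde_{\max} - \mutilde_{a^*} \leq 2h_e + 2c_e$, violating the strict elimination rule. (b) Any suboptimal arm $a$ with $\Delta_a > \Delta_e$ is eliminated by the end of epoch $e$: $\mutilde_{\max} - \mutilde_a \geq (\mu_{a^*}-h_e-c_e)-(\mu_a+h_e+c_e) = \Delta_a - 2(h_e+c_e) > \Delta_e - \Delta_e/2 \geq 2(h_e+c_e)$.

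Third, let $e_a$ denote the smallest integer $e$ with $\Delta_e = 2^{-e} < \Delta_a$, so $e_a = O(\log(1/\Delta_a))$ and $\Delta_{e_a} = \Theta(\Delta_a)$. By fact (b), arm $a$ is eliminated by the end of epoch $e_a$, so the total number of pulls of $a$ is bounded by $\sum_{e=1}^{e_a} R_e$. Since $R_e$ scales like $1/\Delta_e^2 + 1/(\epsDP\Delta_e)$ with $\Delta_e$ halving each epoch, the sum is geometric and dominated by its last term, giving a bound of order
\[
  O\!\left(\log\!\bigl(K e_a^2/\deltaStop\bigr)\,\bigl(1/\Delta_{e_a}^2 + 1/(\epsDP\Delta_{e_a})\bigr)\right).
\]
Substituting $\Delta_{e_a} = \Theta(\Delta_a)$ and $\log(e_a^2) = O(\log\log(1/\Delta_a))$ yields the stated bound, and the $\min$ with $T$ is automatic because the algorithm can be stopped at horizon $T$.

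The main obstacle is the bookkeeping of the union bound across epochs and viable arms, especially because $|S|$ shrinks over time; the cleanest way around this is to observe that the $|S|$ in the denominator of each tail bound exactly cancels the number of arms one is summing over in that epoch, so a varying $|S|$ introduces no loss. The only other place requiring care is matching strict versus non-strict inequalities in the elimination rule, which is handled by verifying $\Delta_a > \Delta_e$ (rather than $\geq$) at the chosen $e_a$.
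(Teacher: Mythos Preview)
Your proposal is correct and follows essentially the same route as the paper's own proof: define the per-epoch good event combining the Hoeffding and Laplace tail bounds, union-bound over arms and epochs (with the $|S|$ factor cancelling exactly as you note), verify $h_e,c_e\le\Delta_e/8$ from the definition of $R_e$, deduce that $a^*$ survives while every arm with gap exceeding $\Delta_e$ is eliminated in epoch $e$, and finish with the geometric sum $\sum_{e\le e_a}R_e=O(R_{e_a})$. The only cosmetic difference is that the paper works with $\Delta_a\ge\Delta_e$ (using the strict inequalities $h_e<\Delta_e/8$, $c_e<\Delta_e/8$ coming from the ``$+1$'' in $R_e$) rather than your $\Delta_a>\Delta_e$, which shifts $e_a$ by at most one and does not affect the asymptotics.
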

\DeclareRobustCommand{\pfMainLemmaOldDPSE}{
To bound the number of pulls of arm $a$ by $T$ is trivial; to provide the bound that depends on the gap $\Delta_a$ we bound the number of epochs in all rounds where arm $a$ is still viable. First we introduce some notations for convenience. We sort the arms in terms of their true means in a descending order: $\mu_1 \geq \mu_2 \geq ... \geq \mu_K$, where $\mu_a$ is the mean of the $a$-th arm. Hence, their corresponding suboptimality gaps are sorted in an ascending order: $\Delta_2 \leq ... \leq \Delta_K$, where $\Delta_a = \mu_1 - \mu_a$, and we also denote $\Delta_{a,{a'}} = \mu_a - \mu_{a'}$. We denote by $\muAvg_a$ the empirical average of each arm $a$, and denote the empirical gap by $\DeltaAvg_{a}=\muAvg_1 - \muAvg_a$, and similarly denote $\DeltaAvg_{a,{a'}}=\muAvg_a - \muAvg_{a'}$. Lastly, just like in Algorithm~\ref{alg:DP_SE}, we denote the private estimation of an arm's average by $\muPriv_a$ (the empirical average with added Laplace noise), and analogously denote $\DeltaPriv_a = \muPriv_1 - \muPriv_a$, $\DeltaPriv_{a,a'} = \muPriv_a-\muPriv_{a'}$. We refer to a sequence of pulls of all viable arms (arms in $S$) made by algorithm as a \emph{round}, indexed by $r$. Just like in the proof of Theorem~\ref{thm:DP_exp_NAS_complexity}, since we have at most $K$ epochs and in each epoch $|S|$ viable arms, and since $\sum_{\ell \geq 1}\frac 1 {2\ell^2}\leq 1$, then: (i) The Hoeffding bound gives that in all epochs and in all rounds where we query the SVT and for all viable arms we have $|\mu_a - \muAvg_a|\leq h_r$ w.p. $\geq 1-\frac \deltaStop 4$; (ii) Laplace concentration bounds give that in all epochs and in all rounds where we query the SVT and for each of the $|S|$ viable arms in an epoch, it must hold that $|B| + |A_r| \leq \frac{6\log \left(\nicefrac {4K} \deltaStop \right)+6 \log \left(\nicefrac{8Kl^2} \deltaStop \right)}\epsDP$ (under the same notation introduced in Algorithm~\ref{alg:DP_SE}) w.p. $\geq 1- \frac{\deltaStop}{2}$; and (iii) Laplace concentration bounds give that in all epochs and for each of the $|S|$ viable arms in an epoch we have $|\muAvg_a - \muPriv_a| \leq \frac{2\log(\nicefrac{4K|S|}{\deltaStop})}\epsDP$ w.p. $\geq 1 - \frac \deltaStop 4$. We thus continue assuming all three bounds hold. In particular at the end of each epoch, for all the $|S|$ viable arms in the respective epoch we get that $|\muPriv_a - \mu_a| \leq |\mu_a - \muAvg_a| + |\muAvg_a - \muPriv_a| \leq h_r + \frac{2\log(\nicefrac{4K|S|}{\deltaStop})}\epsDP$; and so it follows that for any pair of arms $a, a'$ we have $|\DeltaPriv_{a,a'} - \Delta_{a,{a'}}| \leq 2h_r +\frac{4\log(\nicefrac{4K|S|}{\deltaStop})}\epsDP$.

Fix an epoch $e$, denote $j_e = \argmax_{i \in S} \Delta_i$~--- the viable arm with the largest gap in this epoch, and denote its gap as $\Delta_e$. Since Algorithm~\ref{alg:DP_SE} applies in each epoch the private stopping rule detailed in Algorithm~\ref{alg:exp_privateNAS} with $\alpha = \nicefrac 1 4$, we can use the bound given in Theorem~\ref{thm:DP_exp_NAS_complexity} and deduce that the epoch terminates within $r_e \leq 40000\left( \log(\nicefrac K  \deltaStop) + \log \log (\nicefrac 1 {\Delta_{e}}) \right) \left( \frac{1}{\Delta^2_{e}} + \frac{1}{\epsDP \Delta_{e}} \right)$ rounds. We show that Algorithm~\ref{alg:DP_SE} eliminates arm $j_e$ as well as any arm $a \in S$ for which $\Delta_a \geq \nicefrac{\Delta_e}{2}$.

Let $a^1$ and $a^2$ denote the pair of arms whose large gap in empirical means causes the SVT to halt. Namely, the arms such that $\DeltaAvg_{a^1,a^2} > 10h_r + \frac{A_r + B + c_r} r > 10h_r + \frac{16\log(\nicefrac{K|S|}{\deltaStop})}r$. Since $|\Delta_{a^1,a^2} - \DeltaAvg_{a^1,a^2}|\leq 2h_r$ it follows that $\Delta_{e} \geq \Delta_{a^1, a^2} >8h_r + \frac{16\log(\nicefrac{K|S|}{\deltaStop})}r$. Now, consider any arm $a\in S$ such that $\Delta_a \geq \nicefrac {\Delta_e}2 > 4h_r + \frac{8\log(\nicefrac{K|S|}{\deltaStop})}r$. By the above discussion we have that for the arm with the highest private mean estimation $\muPriv_{\max}$ it holds that $\muPriv_{\max}-\muPriv_a> \muPriv_1 - \muPriv_a > \Delta_a - 2h_r - \frac{4\log(\nicefrac{4K|S|}{\deltaStop})}\epsDP > 2h_r + \frac{4\log(\nicefrac{4K|S|}{\deltaStop})}\epsDP$, and so the arm $a$ is eliminated by the algorithm. Also note that by the same bound, we have that $\muPriv_{\max} - \muPriv_1 \leq 0 + 2h_r +\frac{4\log(\nicefrac{4K|S|}{\deltaStop})}\epsDP$ so arm 1 (the leading arm) is never eliminated.

We leverage on the above to infer a bound on the number of pulls made on any suboptimal arm $a$. Consider any suboptimal arm $a$ and let $e$ denote the last epoch in which this arm was viable (the last epoch where $a\in S$), and note that it could be that $e$ is the last epoch of the algorithm and arm $a$ is never eliminated. By definition, $\Delta_a \leq \Delta_e$, and so the number of pulls of arm $a$ in epoch $e$ is at most $r_e \leq \left( \frac{40000}{\Delta^2_{a}} + \frac{40000}{\epsDP \Delta_{a}} \right)\left( \log(\nicefrac K  \deltaStop) + \log \log (\nicefrac 1 {\Delta_{a}}) \right)$. Moreover, arm $a$ was pulled during epochs $1,2,... e-1$ as well, but by the above argument we have that the largest gap in epoch $e-1$ had to be at least $2\Delta_e\geq 2\Delta_a$, in epoch $e-2$~--- at least $4\Delta_e\geq 4\Delta_a$, and so on until epoch $1$ where the gap was at least $2^{e-1}\Delta_a$. Thus the total number of pulls of arm $a$ is at most $\sum\limits_{m=1}^e r_m \leq \left( \log(\nicefrac {K\log (\nicefrac 1 {\Delta_{a}})}  \deltaStop)  \right)\sum\limits_{m=0}^{e-1} \left( \frac{40000}{2^{2m}\Delta^2_{a}} + \frac{40000}{2^m\epsDP \Delta_{a}} \right) \leq$\break $\left( \log(\nicefrac {K\log (\nicefrac 1 {\Delta_{a}})}  \deltaStop)  \right) \left( \frac{80000}{\Delta^2_{a}} + \frac{80000}{\epsDP \Delta_{a}} \right)$, where the last inequality follows from a sum of a geometric series.
}
\DeclareRobustCommand{\pfMainLemmaDPSE}{
    The bound of $T$ is trivial so we focus on proving the latter bound. Given an epoch $e$ we denote by $\Ep_e$ the event where for \emph{all arms} $a \in S$ it holds that both (i) $|\mu_a - \mubar_a|\leq h_e$ and (ii) $|\mubar_a - \mutilde_a| \leq c_e$; and also denote $\Ep = \bigcup\limits_{e\geq 1} \Ep_e$. The Hoeffding bound, concentration of the Laplace distribution and the union bound over all arms in $S$ give that $\Pr[\Ep_e] \geq 1 - \left( \frac{\deltaStop}{4 e^2} +\frac{\deltaStop}{4 e^{2}}\right)$, thus $\Pr[\Ep] \geq 1 - \frac \deltaStop 2\left(\sum_{e\geq 1} e^{-2}\right)  \geq 1 - \deltaStop$. The remainder of the proof continues under the assumption the $\Ep$ holds, and so, for any epoch $e$ and any viable arm $a$ in this epoch we have $|\mutilde_a-\mu_a| \leq h_e + c_e$. As a result for any epoch $e$ and any two arms $a^1, a^2$ we have that $|(\mutilde_{a^1}-\mutilde_{a^2})-(\mu_{a^1}-\mu_{a^2})| \leq 2h_e+2c_e$.
    
    Next, we argue that under $\Ep$ the optimal arm $a^*$ is never eliminated. Indeed, for any epoch $e$, we denote the arm $a_e = \argmax_{a \in S} \mutilde_a$ and it is simple enough to see that $\mutilde_{a_e}-\mutilde_{a^*} \leq 0 + 2h_e+2c_e$, so the algorithm doesn't eliminate $a^*$.
    
    Next, we argue that, under $\Ep$, in any epoch $e$ we eliminate all viable arms with suboptimality gap $\geq 2^{-e} = \Delta_e$. Fix an epoch $e$ and a viable arm $a$ with suboptimality gap $\Delta_a \geq \Delta_e$. 
    Note that we have set parameter $R_e$ so that
    \begin{align*}
        h_e &= \sqrt{\frac{{\log \left( \nicefrac{8 |S| \cdot e^2   } {\deltaStop} \right)}}{2R_e}} < \sqrt{\frac{{\log \left( \nicefrac{8 |S| \cdot e^2   } {\deltaStop} \right)}}{2\cdot \frac{32 \log(\nicefrac{8 |S| e^2}{\beta})}{\Delta_e^2}}} = \frac{\Delta_e}8
        \cr 
        c_e &= \frac{\log \left(\nicefrac{4|S| \cdot e^2}{ \deltaStop}\right)}{R_e \epsDP} < \frac{\log \left(\nicefrac{4|S| \cdot e^2}{ \deltaStop}\right)}{\epsDP\cdot \frac{8 \log(\nicefrac{4 |S| e^2}{\beta})}{\epsDP \Delta_e}} = \frac{\Delta_e}8
    \end{align*}
    Therefore, since arm $a^*$ remains viable, we have that $\mutilde_{\max}-\mutilde_a \geq \mutilde_{a^*} - \mutilde_{a} \geq \Delta_a - (2h_e +2c_e) > \Delta_e(1-\tfrac 2 8-  \tfrac 2 8) \geq \frac{\Delta_e} 2 > 2h_e+2c_e$, guaranteeing that arm $a$ is removed from $S$.
    
    Lastly, fix a suboptimal arm $a$ and let $e(a)$ be the first epoch such that $\Delta_a \geq \Delta_{e(a)}$, implying $\Delta_{e(a)} \leq \Delta_a < \Delta_{e(a)-1}=2\Delta_e$. Using the immediate observation that for any epoch $e$ we have $R_e \leq R_{e+1}/ 2$, we have that the total number of pulls of arm $a$ is
    \begin{align*}
    \sum_{e\leq e(a)} R_e \leq \sum_{e\leq e(a)} 2^{e-e(a)} R_{e(a)} \leq R_{e(a)} \sum_{i\geq 0}2^{-i} \leq 2\left( \frac{32 \log(\nicefrac{8 |S|\cdot e(a)^2}{\beta})}{\Delta_e^2} + \frac{8 \log(\nicefrac{4 |S|\cdot e(a)^2}{\beta})}{\epsDP \Delta_e} \right)
    \end{align*}
    The bounds $\Delta_e > \Delta_a/2$, $|S|\leq K$, $e(a) < \log_2(2/\Delta_a)$ and $K \geq 2$ allow us to conclude and infer that under $\Ep$ the total number of pulls of arm $a$ is at most
    $\log (K\log (\nicefrac{2}{\Delta_a}) / \deltaStop)  \left( \frac{1024}{\Delta_a^2} + \frac{96}{\epsDP \Delta_a} \right)$.
}

\begin{proof}[Proof of Lemma~\ref{lem:DP_SE_numpulls}]
\pfMainLemmaDPSE
\end{proof}

\DeclareRobustCommand{\ColDPSEinstanceDependentRegretBound}
{
	Under the same notation as in Lemma~\ref{lem:DP_SE_numpulls} and for sufficiently large $T$, the expected regret of Algorithm~\ref{alg:DP_SE} is at most
	$O \left( \left(\sum\limits_{a\neq a^*}  \frac{\log(T)}{\Delta_{a}}\right) + \frac{K\log(T)}{\epsDP} \right)$. 
}

\DeclareRobustCommand{\pfThmRegretInstanceDependent}{
\begin{proof}
	In order to bound the expected regret based on the high-probability bound given in Lemma~\ref{lem:DP_SE_numpulls}, we must set $\beta = \nicefrac 1 T$. (Alternatively, we use the standard guess-and-double technique when the horizon $T$ is unknown. I.e. we start with a guess of $T$ and on time $\nicefrac T 2$ we multiply the guess $T\gets 2T$.) Thus, with probability at most $\tfrac 1 T$ we may pull a suboptimal on all timesteps incurring expect regret of at most $1 \cdot T \cdot \frac 1 T = 1$; and with probability $\geq 1 - \frac 1 T$, since each time we pull a suboptimal arm $a\neq a^*$ we incur an expected regret of $\Delta_a$, our overall expected regret when $T$ is sufficient large is proportional to at most 
	\begin{align*}
	& \sum_{a\neq a^*}  \left( \log(\nicefrac K {(1/T)}) + \log \log (\nicefrac 1 {\Delta_{a}}) \right) \left( \frac{\Delta_a}{\Delta_{a}^2} + \frac{\Delta_a}{\epsDP\Delta_a} \right) \\
	&~~~= \sum_{a\neq a^*} \left( \log(TK\cdot \log (\nicefrac 1 {\Delta_{a}}) \right) \left( \frac{1}{\Delta_{a}} + \frac{1}{\epsDP} \right)  \\
	&~~~\leq \left(\sum_{a\neq a^*} \frac{3\log(T)}{\Delta_{a}}\right) + \frac{3\log(T)(K-1)}{\epsDP}  
	\end{align*}
	where the last inequality follows from the trivial bounds $T \geq K$ and $T \geq 1/\Delta_a$.
\end{proof}
}

\begin{theorem} \label{thm:DP_SE_instanceDependent_regret}
	\ColDPSEinstanceDependentRegretBound
\end{theorem}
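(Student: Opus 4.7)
The plan is to derive the expected regret bound directly from the high-probability pull-count bound of Lemma~\ref{lem:DP_SE_numpulls}. The first step is to instantiate Lemma~\ref{lem:DP_SE_numpulls} with $\deltaStop = 1/T$, which controls the failure event whose total contribution to expected regret is trivial: even if every one of the $T$ pulls on the failure event is suboptimal, the incurred regret is at most $T$ (since all rewards lie in $[0,1]$), so this contributes at most $T \cdot (1/T) = 1$ to the expectation. (If $T$ is unknown, one can use the standard doubling trick, restarting with a fresh guess $T \gets 2T$ at each doubling point, incurring only a constant overhead.)

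Next, I would handle the success event, which occurs with probability at least $1 - 1/T$. On this event, Lemma~\ref{lem:DP_SE_numpulls} asserts that each suboptimal arm $a \neq a^*$ is pulled at most
\[
O\!\left( \bigl( \log(KT) + \log\log(1/\Delta_a) \bigr) \left( \frac{1}{\Delta_a^2} + \frac{1}{\epsDP \Delta_a} \right) \right)
\]
times. Since each such pull incurs expected instantaneous regret $\Delta_a$, the contribution of arm $a$ to the total expected regret is at most
\[
O\!\left( \bigl( \log(KT) + \log\log(1/\Delta_a) \bigr) \left( \frac{1}{\Delta_a} + \frac{1}{\epsDP} \right) \right).
\]

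The main (very mild) obstacle is simplifying the logarithmic factor. For sufficiently large $T$, one has $T \geq K$ and $T \geq 1/\Delta_a$ for every suboptimal arm $a$ that has a chance of being pulled more than $O(1)$ times; consequently $\log(KT) + \log\log(1/\Delta_a) = O(\log T)$. Under this simplification, summing the per-arm contributions over all $a \neq a^*$ yields
\[
\sum_{a \neq a^*} O\!\left( \frac{\log T}{\Delta_a} \right) + (K-1)\cdot O\!\left( \frac{\log T}{\epsDP} \right) \;=\; O\!\left( \sum_{a \neq a^*} \frac{\log T}{\Delta_a} + \frac{K \log T}{\epsDP} \right),
\]
which, combined with the $O(1)$ contribution from the failure event, gives the claimed bound. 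Thus the theorem reduces almost entirely to plugging $\deltaStop = 1/T$ into Lemma~\ref{lem:DP_SE_numpulls} and invoking these ``sufficiently large $T$'' simplifications.
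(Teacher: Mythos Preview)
Your proposal is correct and follows essentially the same approach as the paper: set $\deltaStop = 1/T$ (with the doubling trick if $T$ is unknown), bound the failure-event contribution by $T\cdot \tfrac{1}{T}=1$, multiply the pull-count bound of Lemma~\ref{lem:DP_SE_numpulls} by $\Delta_a$ on the success event, simplify the logarithms via $T\geq K$ and $T\geq 1/\Delta_a$, and sum over $a\neq a^*$. The paper's proof is the same argument with the same simplifications.
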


\pfThmRegretInstanceDependent

It is worth noting yet again that the expected regret of Algorithm~\ref{alg:DP_SE} meets both the (instance dependent) non-private lower bound~\citep{lai1985asymptotically} of $\Omega \left(\sum_{a\neq a^*}  \frac{\log(T)}{\Delta_{a}}\right)$ and the private lower bound~\citep{shariff2018differentially} of $\Omega\left(\nicefrac{K\log(T)}{\epsDP}\right)$.

\paragraph{Minimax Regret Bound.} The bound of Theorem~\ref{thm:DP_SE_instanceDependent_regret} is an instance-dependent bound, and so we turn our attention to the minimax regret bound of Algorithm~\ref{alg:DP_SE}~--- Given horizon bound $T$, how should an adversary set the gaps between the different arms as to maximize the expected regret of Algorithm~\ref{alg:DP_SE}? We next show that in any setting of the gaps, the following is an instance independent bound on the expected regret of Algorithm~\ref{alg:DP_SE}.

\DeclareRobustCommand{\ThmDPSEinstanceIndependentRegretBound}
{
	The pseudo regret of Algorithm~\ref{alg:DP_SE} is $O \big( \sqrt{T K \log(T)}$ $+ ~\nicefrac{K \log(T)}{\epsDP} \big)$.
}

\DeclareRobustCommand{\pfThemInstanceIndependentRegret}{
\begin{proof}
	Throughout the proof we assume Algorithm~\ref{alg:DP_SE} runs with a parameter $\deltaStop = 1/T$; and since any arm $a$ with $\Delta_a < 1/T$ yields a negligible expected regret bound of at most $1$, then we may assume $\Delta_a\geq 1/T$. Thus, the bound of Lemma~\ref{lem:DP_SE_numpulls} becomes $\min\left\{T,~~C\cdot \log(TK)(\frac{1}{\Delta_{a}^2}+\frac{1}{\epsDP\Delta_a})\right\}$ for some constant $C>0$. It follows that for any suboptimal arm $a$, the expected regret from pulling arm $a$ is therefore at most $\min\left\{\Delta_a T,~~2C\log(T)(\frac{1}{\Delta_{a}}+\frac{1}{\epsDP})\right\}$ (as $K\leq T$). 
	
	Denote by $\Delta^*$ the gap which equates the two possible regret bounds under which all arms are pulled $T/K$ times, namely $\Delta^*\frac T K = 2C\log(T)(\frac{1}{\Delta^*}+\frac{1}{\epsDP})$. While deriving $\Delta^*$ closed form is rather hairy, one can easily verify that $\Delta^* = \Theta(\max\{\sqrt {\nicefrac{K\log(T)}{T}},~~ \frac{K \log(T)}{\epsDP T}\})$.
	First, note that given $T$, in a setting where all suboptimal arms have a gap of precisely $\Delta^*$, then the cumulative expected regret bound is proportional to $O \left( \sqrt{T K \log(T)} + \nicefrac{K \log(T)}{\epsDP} \right)$. We show that regardless of how the different arm-gaps are set by an adversary, the expected regret of our algorithm is still proportional to the required bound. 
	
	Suppose an adversary sets a MAB instance, and again we rearrange arms such that arm $1$ is the leading arm and the gaps are increasing. We partition the set of suboptimal arms $2,3,.., K$ to two sets: $\{2,3,..,k'\}$ and $\{k'+1, k+2,..., K\}$ where $k'$ is the largest index of an arm with a gap $\leq \Delta^*$. Since this is a partition, one of the two sets contributes at least half of the expected regret. We thus break into cases.
	
	-- Each time we pull an arm from the former set, we incur an expected regret of at most $\Delta^*$. Since there are $T$ arm pulls overall, a crude bound on the expected regret obtained from pulling arms $\{2,..,k'\}$ is $\Delta^* T$. Therefore, if it is the case that the regret from pulling arms $\{2,3,..,k'\}$ is at least half of the expected regret, then the entire expected regret is at most $2\Delta^* T$.
	
	-- Based on the above discussion, the upper-bound on the expected regret due to pulling the arms in the set $\{k'+1, k'+2,..., K\}$ is at most 
	\begin{align*}
	2C\log(T)\hspace{-1mm}\sum_{a =k'+1}^{K} \hspace{-1mm} \left(\tfrac{1} {\Delta_a} + \tfrac 1 \epsDP\right) 
	&\leq 2C\log(T)\hspace{-1mm}\sum_{a =k'+1}^{K}\hspace{-1mm} \left(\tfrac 1 {\Delta^*} +\tfrac 1 \epsDP \right) 
	\cr&= (K-k')\Delta^* \frac T K \leq \Delta^* T
	\end{align*}
	Therefore, if it is the case that the regret from pulling arms $\{k'+1, k'+2,...,K\}$ is greater than half of the expected regret, then the entire expected regret is at most $2\Delta^* T$.
	
	In either case, it is simple to see that the expected regret is upper bounded by $O(\Delta^*T) = O \left( \sqrt{T K \log(T)} + \nicefrac{K \log(T)}{\epsDP} \right)$.
\end{proof}
}

\begin{theorem} \label{thm:DP_SE_instanceIndependent_regret}
(Instance Independent Bound) \ThmDPSEinstanceIndependentRegretBound
\end{theorem}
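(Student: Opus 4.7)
The plan is to convert the instance-dependent bound of Lemma~\ref{lem:DP_SE_numpulls} into a worst-case (minimax) bound using the standard threshold-balancing technique. As a preliminary step I would set $\deltaStop = 1/T$ in Algorithm~\ref{alg:DP_SE} (or use a guess-and-double if $T$ is unknown), so that the low-probability failure event contributes at most an additive $1$ to the expected regret. Under this choice, Lemma~\ref{lem:DP_SE_numpulls} bounds the number of pulls of any suboptimal arm $a$ by $\min\{T,~O(\log(TK)(\tfrac{1}{\Delta_a^2} + \tfrac{1}{\epsDP \Delta_a}))\}$, and the expected regret contributed by arm $a$ is therefore at most $\min\{\Delta_a T,~2C\log(T)(\tfrac{1}{\Delta_a} + \tfrac{1}{\epsDP})\}$ for some absolute constant $C$.

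Next, I would define a \emph{critical gap} $\Delta^*$ at which these two per-arm bounds coincide under the constraint that each arm is pulled exactly $T/K$ times; i.e., $\Delta^*$ solves $\Delta^* \cdot T/K = 2C\log(T)(1/\Delta^* + 1/\epsDP)$. A closed form is slightly hairy, but one can directly verify that $\Delta^* = \Theta(\max\{\sqrt{K\log(T)/T},~K\log(T)/(\epsDP T)\})$. Reordering the arms so that the suboptimality gaps are non-decreasing and letting $k'$ be the largest index with $\Delta_{k'} \leq \Delta^*$, I split the suboptimal arms into a \emph{small-gap} set $\{2,\ldots,k'\}$ and a \emph{large-gap} set $\{k'+1,\ldots,K\}$. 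Since one of the two sets must contribute at least half of the total expected regret, it suffices to bound each case separately by $O(\Delta^* T)$.

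For the small-gap arms I use the $\Delta_a T$ branch of the per-arm bound: since the total number of pulls is at most $T$ and each such pull costs at most $\Delta^*$, the contribution is at most $\Delta^* T$. For the large-gap arms I use the logarithmic branch: summing $O(\log(T)/\Delta_a) \leq O(\log(T)/\Delta^*)$ over the at most $K-k' \leq K$ arms gives $O(K\log(T)/\Delta^*)$, while summing the privacy term gives $O(K\log(T)/\epsDP)$. By the very definition of $\Delta^*$, both $K\log(T)/\Delta^*$ and $K\log(T)/\epsDP$ are $O(\Delta^* T)$, so this case is also bounded by $O(\Delta^* T)$. Substituting $\Delta^* = \Theta(\max\{\sqrt{K\log(T)/T},~K\log(T)/(\epsDP T)\})$ and using $\max\{a,b\} \leq a+b$, the total pseudo-regret is $O(\sqrt{T K \log(T)} + K\log(T)/\epsDP)$, as required.

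The main obstacle here is primarily bookkeeping: one must verify that the definition $\Delta^* \cdot T/K = 2C\log(T)(1/\Delta^* + 1/\epsDP)$ correctly yields a threshold that is simultaneously $\Omega(\sqrt{K\log(T)/T})$ and $\Omega(K\log(T)/(\epsDP T))$, so that the ``non-private'' and ``private'' portions of the regret separately get absorbed by $\Delta^* T$ in the large-gap case. Once this algebraic check is settled, the partition of arms and the inequality $\max\{a,b\} \leq a+b$ combine cleanly into the desired additive-form minimax bound, with no further probabilistic or privacy-specific argument required beyond Lemma~\ref{lem:DP_SE_numpulls}.
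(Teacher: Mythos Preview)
Your proposal is correct and follows essentially the same approach as the paper: set $\deltaStop=1/T$, define the critical gap $\Delta^*$ via $\Delta^*\cdot T/K = 2C\log(T)(1/\Delta^*+1/\epsDP)$, split the suboptimal arms at $\Delta^*$, and bound each part by $O(\Delta^* T)$. The only cosmetic difference is that for the large-gap arms the paper keeps the two terms together and invokes the defining equation of $\Delta^*$ in one line, whereas you bound the $1/\Delta_a$-sum and the $1/\epsDP$-sum separately before using the two lower bounds on $\Delta^*$.
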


\pfThemInstanceIndependentRegret

Again, we comment on the optimality of the bound in Theorem~\ref{thm:DP_SE_instanceIndependent_regret}. The non-private minimax bound~\citep{auer2002nonstochastic} is known to be $\Omega(\sqrt{TK})$ and combining it with the private bound of $\Omega(K\log(T)/\epsDP)$ we see that the above minimax bound is just $\sqrt{\log(T)}$-factor away from being optimal.

\cut{

\begin{lemma}
	Fix any instance of the $K$-MAB problem, and denote $a^*$ as its optimal arm (of highest mean), and the gaps between the mean of arm $a^*$ and any suboptimal arm $a\neq a^*$ as $\Delta_a$. Fix any horizon $T$. Then w.p. $\geq 1 - \deltaStop$ it holds that Algorithm~\ref{alg:DP_SE1} pulls each suboptimal arm $a\neq a^*$ for a number of timesteps upper bounded by
	\[ \min\{ T, ~~O\left(  \left( \log(\nicefrac K  \deltaStop) + \log \log (\nicefrac 1 \Delta_{a}) \right) \left( \frac{1}{\Delta^2_{a}} + \frac{1}{\epsDP \Delta_{a}} \right) \right)   \}  \]
\end{lemma}

\begin{proof}
    
\end{proof}

\begin{theorem}
Algorithm~\ref{alg:DP_SE1} preserves $\epsDP$- Differential Privacy.
\end{theorem}

\begin{proof}
    
\end{proof}
}
\section{Empirical Evaluation}
\label{sec:experiments}

\paragraph{Goal.} In this section, we empirically compare the DP-UCB algorithm \citep{mishra2015nearly} and our DP-SE algorithm (Algorithm~\ref{alg:DP_SE}). Our goal is two fold. First, we would like to assert that indeed there \emph{exists} some setting of parameters under which our DP-SE algorithm outperforms (achieves smaller expected regret than) the DP-UCB baseline. Afterall, the improvement we introduce is over $\poly\log(T)$ factors and does incur an increase in the constants repressed by the big-$O$ notation. Hence, our primary goal is to verify that indeed the asymptotic improvement in performance is reflected in actual empirical performance. Second, assuming the former is answered on the affirmative, we would like to see under which region of parameters our DP-SE algorithm outperforms the DP-UCB baseline. 

\paragraph{Setting and Experiments.} By Default, we set $T = 5\times 10^7$, $\epsDP = 0.25$ and $K=5$. 
We assume $T$ is a-priori known to both algorithms and set $\beta = 1/T$.
We consider four instances, denoted by $C_1$, $C_2$, $C_3$, $C_4$, where in all the settings the reward of any arm is drawn from a Bernoulli distribution. In $C_1$ all suboptimal gaps are the same, and the arms' mean-rewards are $\{ 0.75, \underbrace{0.7,... 0.7}_{K-1} \}$; whereas in $C_2$ the suboptimal arms' gaps decrease linearly, where the largest mean is always $0.75$ and the smallest mean is always $0.25$ (so for $K=5$ the means are $\{ 0.75, 0.625, 0.5, 0.375, 0.25\}$) \footnote{Constraining the means within $[0.25,0.75]$ ensures the variance of the arms are similar (upto a constant of $\nicefrac 4 3$)}. We considered $C_3$ to compare the performances for the case that a larger fraction of arms have large suboptimality gaps, hence we chose to use a quadratic \emph{convex} function of the form: $\mu_i = a (i-K)^2 + c$ such that $\mu_1 = 0.75$, $\mu_K = 0.25$ and $a > 0$ (so for $K=5$ the means are $\{ 0.75, 0.53125, 0.375, 0.28125, 0.25\}$). $C_4$ was chosen to illustrate the performance for the case that a larger faction of arms have small suboptimality gaps, hence it suffices to use a quadratic \emph{concave} function: $\mu_i = a (i-1)^2 + c$ such that $\mu_1 = 0.75$, $\mu_K = 0.25$ and $a < 0$ (so for $K=5$ the means are $\{ 0.75, 0.71875, 0.625, 0.46875, 0.25\}$). 
Using $a^*$ to denote the optimal arm, we measure the algorithms' performances in terms of their pseudo regret, so upon pulling a suboptimal arm $a\neq a^*$ each algorithm incurs a cost $\Delta_a = \mu_{a^*}-\mu_a$. For each setting, $30$ runs of the algorithms were carried out and their average pseudo regrets are plotted.

Under all four settings we conduct two sets of experiments. First, we vary $\epsDP \in \{0.1, 0.25, 0.5, 1\}$, and the results in settings $C_1$, $C_2$, $C_3$, $C_4$, are given in the Figures ~\ref{fig:K=5|varyEps|setting1}, ~\ref{fig:K=5|varyEps|setting2}, ~\ref{fig:K=5|varyEps|setting3}, ~\ref{fig:K=5|varyEps|setting4} respectively. Then we vary $K \in \{3,5,10,20\}$, and the results under $\epsDP=0.5, 1$ in settings $C_1$, $C_2$, $C_3$, $C_4$ are given in Figures ~\ref{fig:varyK|eps0.5&1|setting1}, ~\ref{fig:varyK|eps0.5&1|setting2}, ~\ref{fig:varyK|eps0.5&1|setting3}, ~\ref{fig:varyK|eps0.5&1|setting4} respectively, while the results under $\epsDP=0.1, 0.25$ in settings $C_1$, $C_2$, $C_3$, $C_4$ are given in Figures ~\ref{fig:varyK|eps0.1&0.25|setting1}, ~\ref{fig:varyK|eps0.1&0.25|setting2}, ~\ref{fig:varyK|eps0.1&0.25|setting3}, ~\ref{fig:varyK|eps0.1&0.25|setting4} respectively

\paragraph{Results and discussion.} 
The results conclusively show that DP-SE outperforms DP-UCB. Subject to the caveat that our experiments are proof-of-concept only and we did not conduct a thorough investigation of the entire hyper-parameter space, we \emph{could not find even a single setting where DP-UCB is even comparable to our DP-SE}. I.e. in \emph{all} settings we tested, we outperform DP-UCB by at least 5 times. We also comment as to the difference in the shape of the two pseudo-regret curves --- while the DP-UCB curve is smooth (attesting to the fact it pulls suboptimal arms even for fairly large values of $T$), the DP-SE is piece-wise linear (exhibiting the fact that at some point it eliminates all suboptimal arms).

\cut{
\paragraph{Goal.} In this section, we empirically compare the DP-UCB algorithm \citep{mishra2015nearly} and DP-SE algorithm (Algorithm~\ref{alg:DP_SE}). Our goal is two fold. First, we would like to assert that indeed there \emph{exists} some setting of parameters under which our DP-SE algorithm outperforms (achieves smaller expected regret than) the DP-UCB baseline. Afterall, the improvement we introduce is over $\poly\log(T)$ factors and does incur an increase in the constants repressed by the big-$O$ notation. Hence, our primary goal is to verify that indeed the asymptotic improvement in performance is reflected in actual empirical performance. Second, assuming the former is answered on the affirmative, we would like to empirically assess the region of parameters under which our DP-SE algorithm outperforms the DP-UCB baseline, or provide ``guidelines'' as to when should one prefer the DP-SE algorithm to the DP-UCB algorithm. 

In addition, we also experiment with a variant of our algorithm. Recall, Algorithm~\ref{alg:DP_SE} sets the SVT mechanism to halt when the largest empirical reward is greater than $10$ times the Hoeffding bound (see line 18 of Algorithm~\ref{alg:DP_SE}), in order to have the worst-case guarantee that all arms of substantial gap from the leading arm are removed. We thus consider a modification of Algorithm~\ref{alg:DP_SE} where the halting condition is
\begin{align*}
    \max_{i,j \in S} \left( \mubar_i - \mubar_j \right) > 2h_r + \frac{A_r + B + c_{r}}{r}
\end{align*}
for $c_r = \frac{6\log \left(\nicefrac {4K} \deltaStop \right)}\epsDP + \frac{6 \log \left(\nicefrac{8Kl^2} \deltaStop \right)}{\epsDP} + \frac{4\log \left(\nicefrac{4K|S|} \deltaStop \right)}{\epsDP}$ and $h_r$ denoting the Hoeffding bound after $r$ arm-pulls. Such a condition assures  w.h.p. that the worst viable arm is eliminated in each epoch $e$, yet doesn't guarantee \emph{all} arms of noticeable gaps are removed. Moreover, such a halting condition ``evens the playing field'' as the bounds in the DP-UCB algorithm also depend solely on $2h_r$. We refer to this as the ``modified DP-SE'' algorithm in our experiments.

\paragraph{Setting and Experiments.} By Default, we set $T = 5\times 10^7$, $\epsDP = 0.25$ and $K=5$. Recall, the DP-UCB's suboptimality is reflected in the $\poly\log(T)/\epsDP$  factor in its regret bound. Thus, it stands to reason we focus our empirical study on a setting where $T$ is fairly large and $\epsDP$ is fairly small. We assume $T$ is a-priori known to the algorithms and set $\beta = 1/T$.
We consider four instances, denoted by $C_1$, $C_2$, $C_3$, $C_4$, where in all the settings the reward of any arm is drawn from a Bernoulli distribution. In $C_1$ all suboptimal gaps are the same, and the arms' mean-rewards are $\{ 0.75, \underbrace{0.7,... 0.7}_{K-1} \}$; whereas in $C_2$ the suboptimal arms' gaps decrease linearly, where the largest mean is always $0.75$ and the smallest mean is always $0.25$ (so for $K=5$ the means are $\{ 0.75, 0.625, 0.5, 0.375, 0.25\}$) \footnote{Constraining the means within $[0.25,0.75]$ ensures the variance of the arms are similar (upto a constant of $\nicefrac 4 3$)}. We considered $C_3$ to compare the performances for the case that a larger fraction of arms have large suboptimality gaps, hence we chose to use a quadratic \emph{convex} function of the form: $\mu_i = a (i-K)^2 + c$ such that $\mu_1 = 0.75$, $\mu_K = 0.25$ and $a > 0$ (so for $K=5$ the means are $\{ 0.75, 0.53125, 0.375, 0.28125, 0.25\}$). $C_4$ was chosen to illustrate the performance for the case that a larger faction of arms have small suboptimality gaps, hence it suffices to use a quadratic \emph{concave} function: $\mu_i = a (i-1)^2 + c$ such that $\mu_1 = 0.75$, $\mu_K = 0.25$ and $a < 0$ (so for $K=5$ the means are $\{ 0.75, 0.71875, 0.625, 0.46875, 0.25\}$). Let $a^*$ be the optimal arm and we measure the performance in terms of their pseudo regret, so upon pulling a suboptimal arm $a\neq a^*$ each algorithm incurs a cost $\Delta_a = \mu_{a^*}-\mu_a$. For each setting, $30$ runs of the algorithms were carried out and their average pseudo regrets are plotted.

Under all four settings we conduct two sets of experiments. First, we vary $\epsDP \in \{0.1, 0.25, 0.5, 1\}$, and the results in setting $C_1$, $C_2$, $C_3$, $C_4$, are given in the Figures ~\ref{fig:K=5|varyEps|setting1}, ~\ref{fig:K=5|varyEps|setting2}, ~\ref{fig:K=5|varyEps|setting3}, ~\ref{fig:K=5|varyEps|setting4} respectively. Then we vary $K \in \{3,5,10,20\}$, and the results under $\epsDP=0.25, 1$ in setting $C_1$, $C_2$, $C_3$, $C_4$ are given in Figures ~\ref{fig:varyK|eps0.5&1|setting1}, ~\ref{fig:varyK|eps0.5&1|setting2}, ~\ref{fig:varyK|eps0.5&1|setting3}, ~\ref{fig:varyK|eps0.5&1|setting4} respectively.

\paragraph{Results and discussion.} 
A few observations are immediately clear. First, in setting $C_1$, where all gaps are the same and quite small, the DP-SE algorithm outperforms the DP-UCB baseline when either $\epsDP$ is small ($ \leq 0.25$ in our experiments) or for large values of $K$ (see Figure \ref{fig:K=5|varyEps|setting1} and \ref{fig:varyK|eps0.5&1|setting1}). In other words, the conditions for which DP-UCB is better than DP-SE are when $\epsDP$ is fairly large, the number of arms is moderate, \emph{and} all arms have identical and fairly small gaps. We would like to point out that in some of the plots, there are overlappings between the curves (we put a (*) in their captions). The setting $C_4$ further adds mounting evidence that naive DP-SE can be outperformed by DP-UCB for small suboptimality gaps and large values of $\epsDP$ (see Figure \ref{fig:K=5|varyEps|setting4} and \ref{fig:varyK|eps0.5&1|setting4}), only that slightly larger $\epsDP$ and smaller $K$ can be forgiving unlike in $C_1$ due to the presence of some large suboptimality gaps. In all other cases~--- and especially note the consistency throughout all experiments in setting $C_2$ and $C_3$~--- \emph{our naive DP-SE outperforms the DP-UCB baseline}. Moreover, in \emph{all} settings the modified DP-SE is the best algorithm of all three, even though we cannot prove that the modified DP-SE algorithm eliminates \emph{all} suboptimal arms of comparable gap from the leading arm. We believe our experiments unequivocally show that the asymptotic improvement in the analysis of the DP-SE over DP-UCB is evident in actual, empiric performance.

We postulate that the reason for the improved performance across setting $C_2$ is the fact that under DP-UCB arms of large gaps remain effectively viable (i.e. are pulled relatively frequently) for a longer period of time than under DP-SE, which  eliminates noticeably suboptimal arms early on. In other words, the large gaps play to the advantage of DP-SE. Moreover, comparing the curves for DP-SE and the modified DP-SE in Figure~\ref{fig:K=5|varyEps|setting2}, we see that standard DP-SE eliminates all suboptimal arms not long after the modified DP-SE eliminates all suboptimal arms. This implies that in setting $C_2$, the number of arm pulls required to create a noticeable gap between arms is mostly due to the privacy-dependent gap of $O(\frac{\log(T)}{\epsDP})$ rather than the Hoeffding bound. Since in $C_3$ there are more large suboptimality gaps than smaller suboptimality gaps, similar reasoning should apply under $C_3$ too.

In contrast, in setting $C_1$ (see Figure~\ref{fig:K=5|varyEps|setting1}) we see drastic performance difference between the standard DP-SE and the modified DP-SE. Here the gap between arms is small enough s.t.~the key component in the decision to halt is the Hoeffding bound (unless $\epsDP$ is quite small). Indeed, the modified DP-SE algorithm, which sets the dependency on the Hoeffding bound to be 5-times smaller than in the standard DP-SE algorithm, also happens to eliminate all suboptimal arms in (roughly) $\nicefrac 1 5$ of the time it takes the standard DP-SE algorithm to eliminate all arms. We would like to also comment that one thing that plays to the potential advantage of the modified DP-SE algorithm is the use of exponentially growing intervals. It is likely that the round $r$ which is \emph{also a power of $2$} under which the modified DP-SE halts is large enough to allow some slackness, that helps the algorithm to overcome the random noise and assert that all arms of noticeable suboptimality gap at round $r$ are indeed eliminated.

We conclude by repeating the high-level message. Unless $\epsDP$ is large or there are many arms with small suboptimality gaps, the added cost of privacy places a noticeable role in the accumulated pseudo-regret, and so our DP-SE algorithm outperforms the DP-UCB baseline.
}

\newgeometry{margin=0.8in}
\newpage
\twocolumn


\begin{figure}[!]
    \begin{subfigure}[h]{0.45\textwidth}
    	\includegraphics[width=0.97\textwidth]{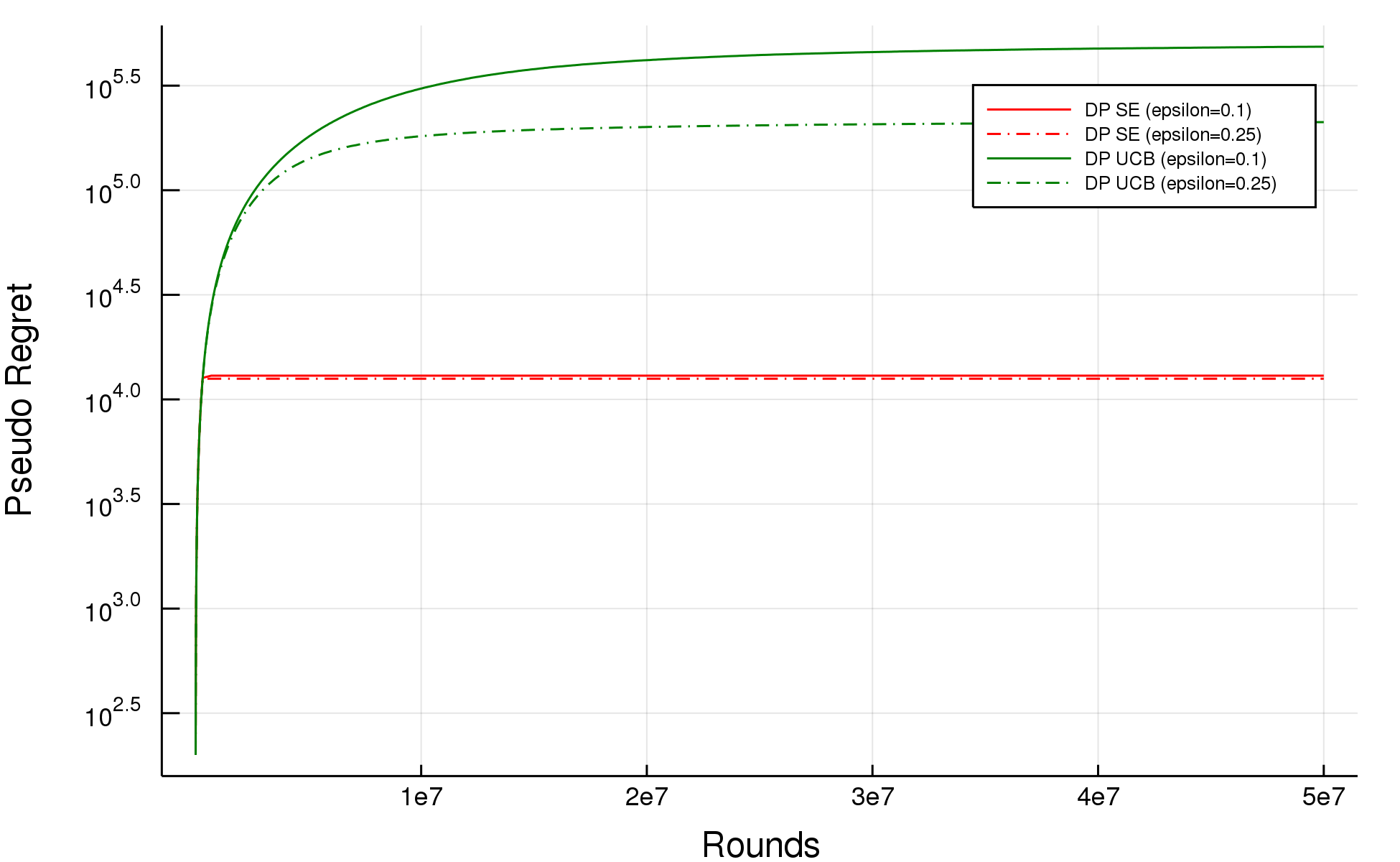}
    	\caption{$\epsDP=0.1$ and $0.25$}
    \end{subfigure} 
    
    \begin{subfigure}[h]{0.45\textwidth}
    	\includegraphics[width=0.97\textwidth]{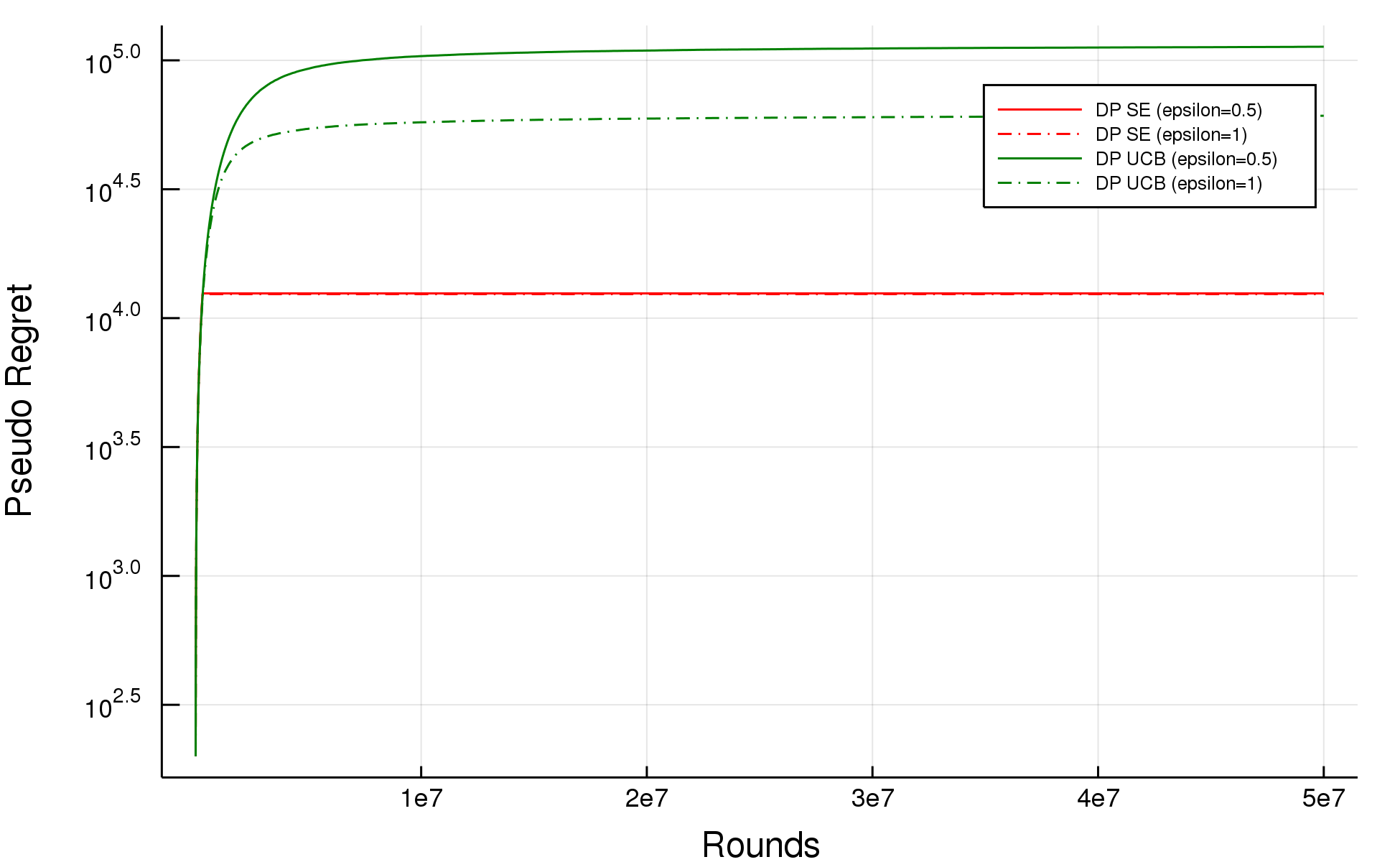}
    	\caption{$\epsDP=0.5$ and $1$}
    \end{subfigure}
    
    \caption{\label{fig:K=5|varyEps|setting1} Under $C_1$ with $K=5, T=5 \times 10^7$}
\end{figure}

\begin{figure}[!]
    \begin{subfigure}[h]{0.45\textwidth}
    	\includegraphics[width=0.97\textwidth]{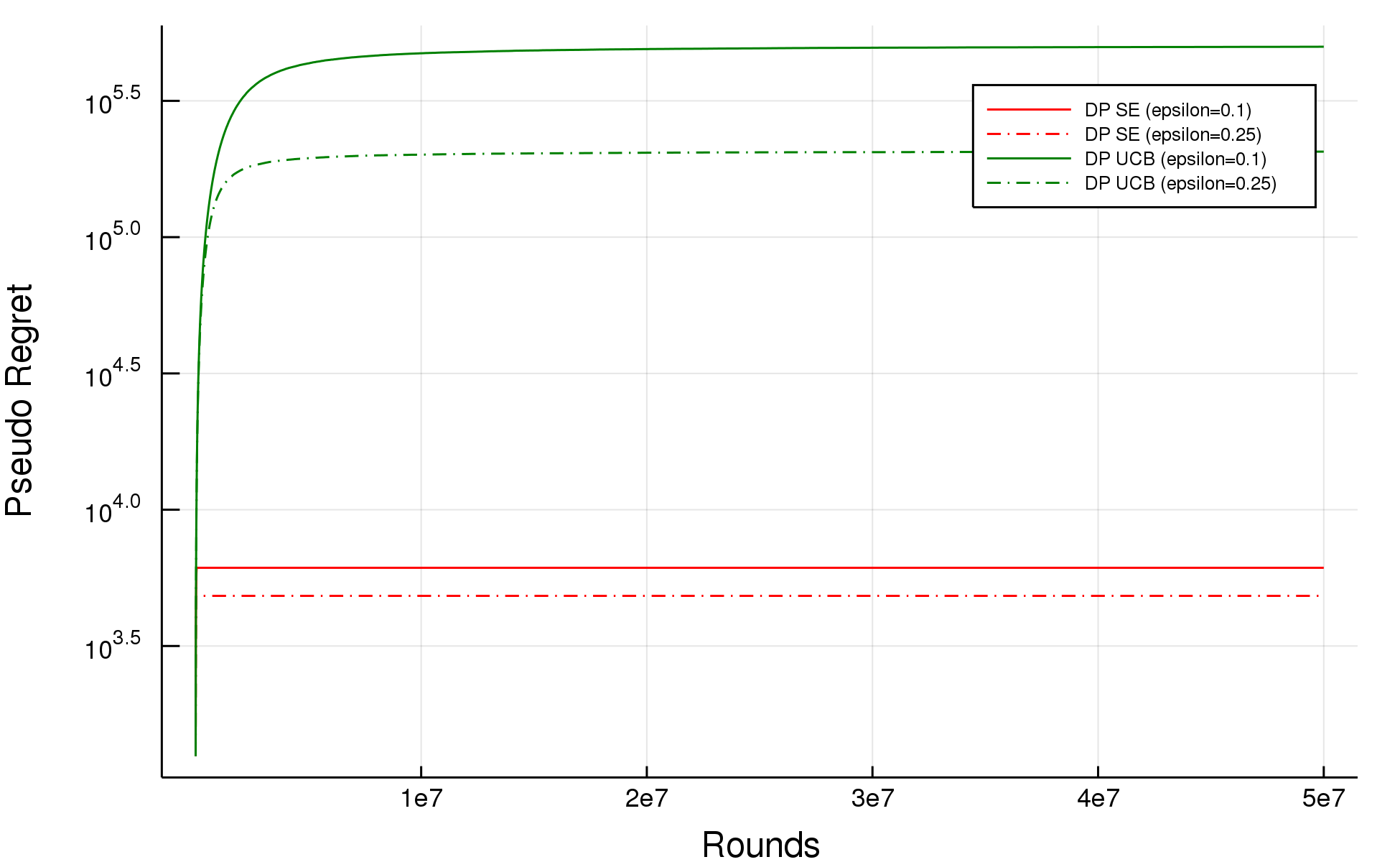}
    	\caption{$\epsDP=0.1$ and $0.25$}
    \end{subfigure}
    
    \begin{subfigure}[h]{0.45\textwidth}
    	\includegraphics[width=0.97\textwidth]{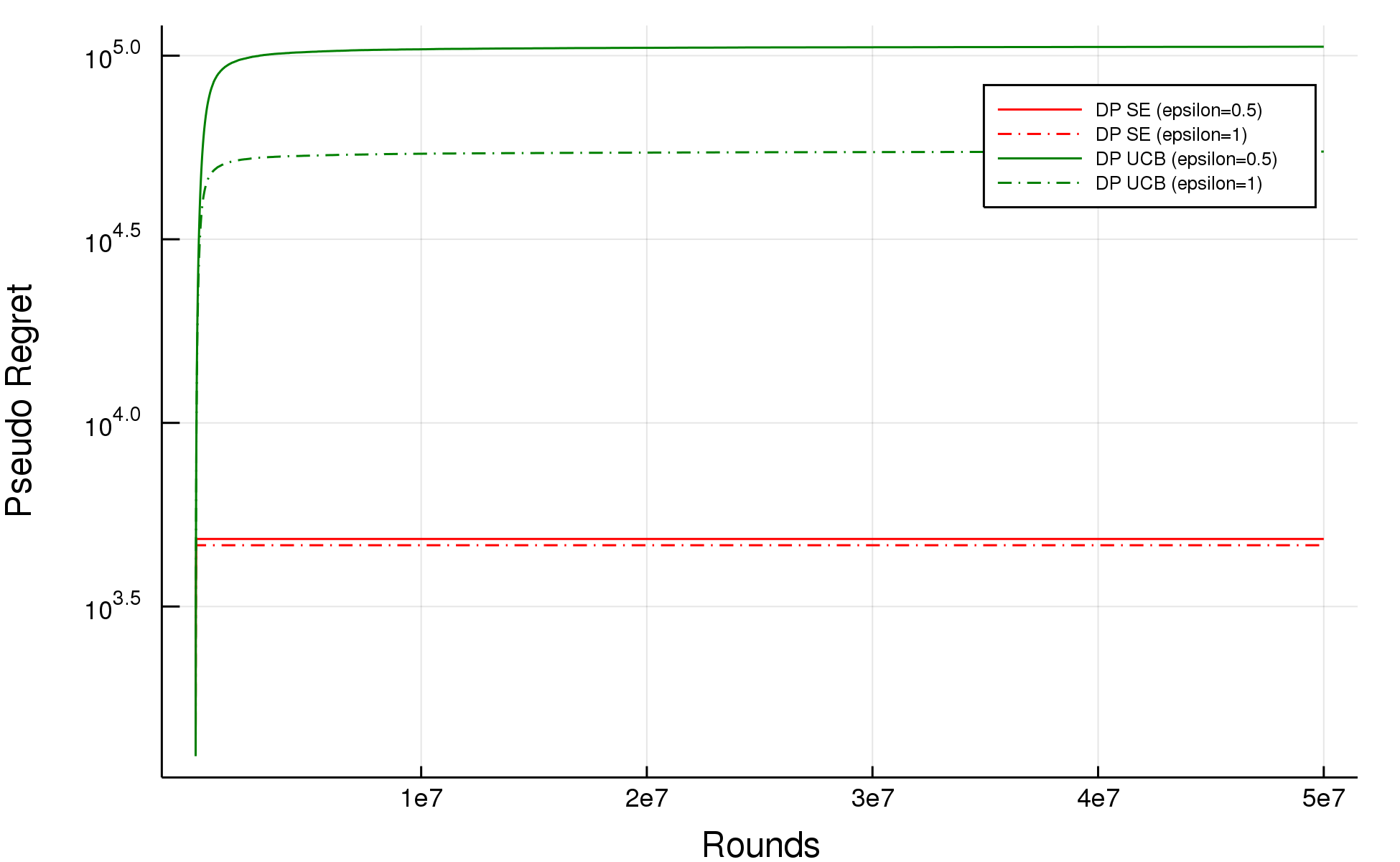}
    	\caption{$\epsDP=0.5$ and $1$}
    \end{subfigure}
    
    \caption{\label{fig:K=5|varyEps|setting2} Under $C_2$ with $K=5, T=5 \times 10^7$}
\end{figure}

\begin{figure}[!]
    \begin{subfigure}[h]{0.45\textwidth}
    	\includegraphics[width=0.97\textwidth]{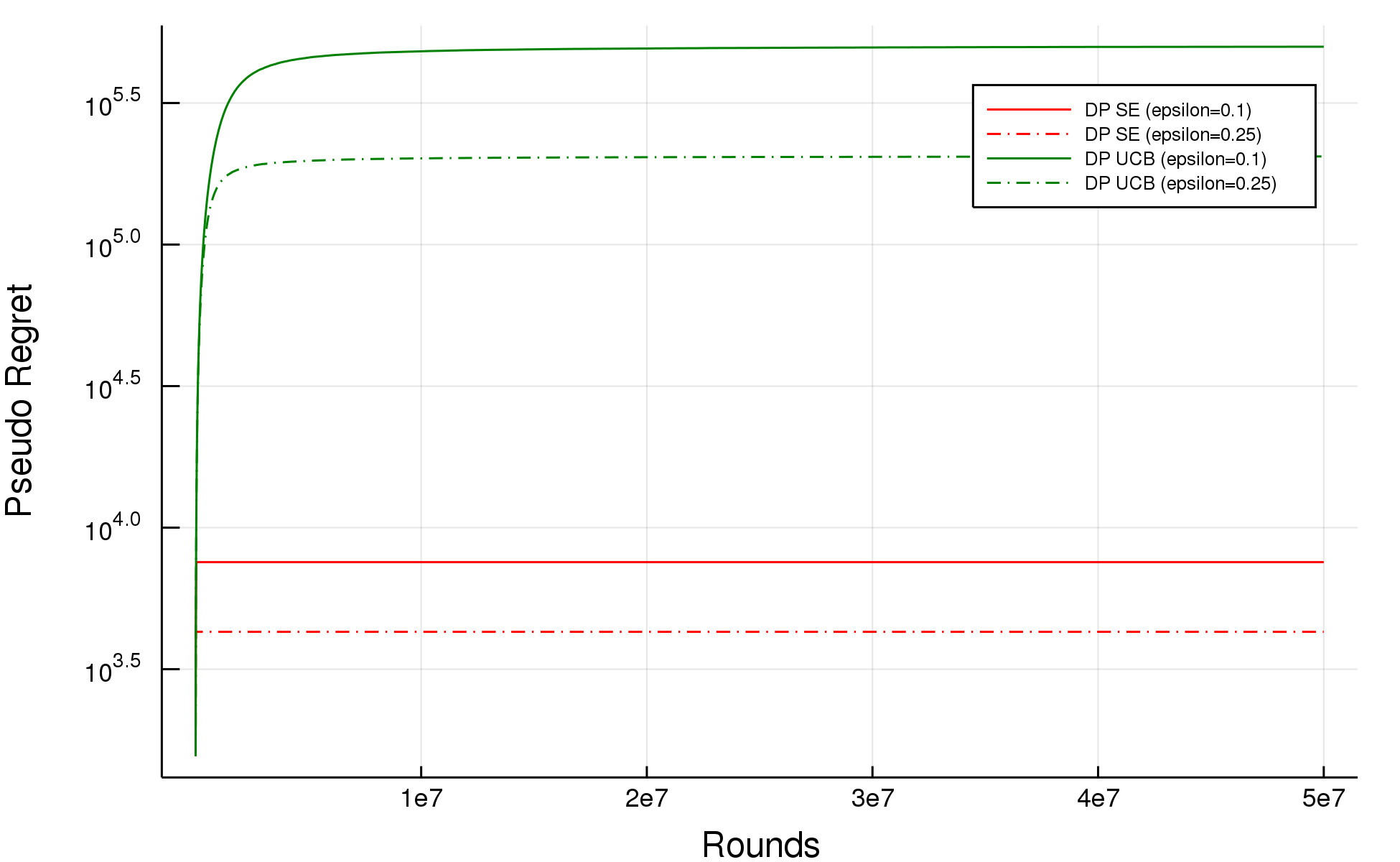}
    	\caption{$\epsDP=0.1$ and $0.25$}
    \end{subfigure} 
    
    \begin{subfigure}[h]{0.45\textwidth}
    	\includegraphics[width=0.97\textwidth]{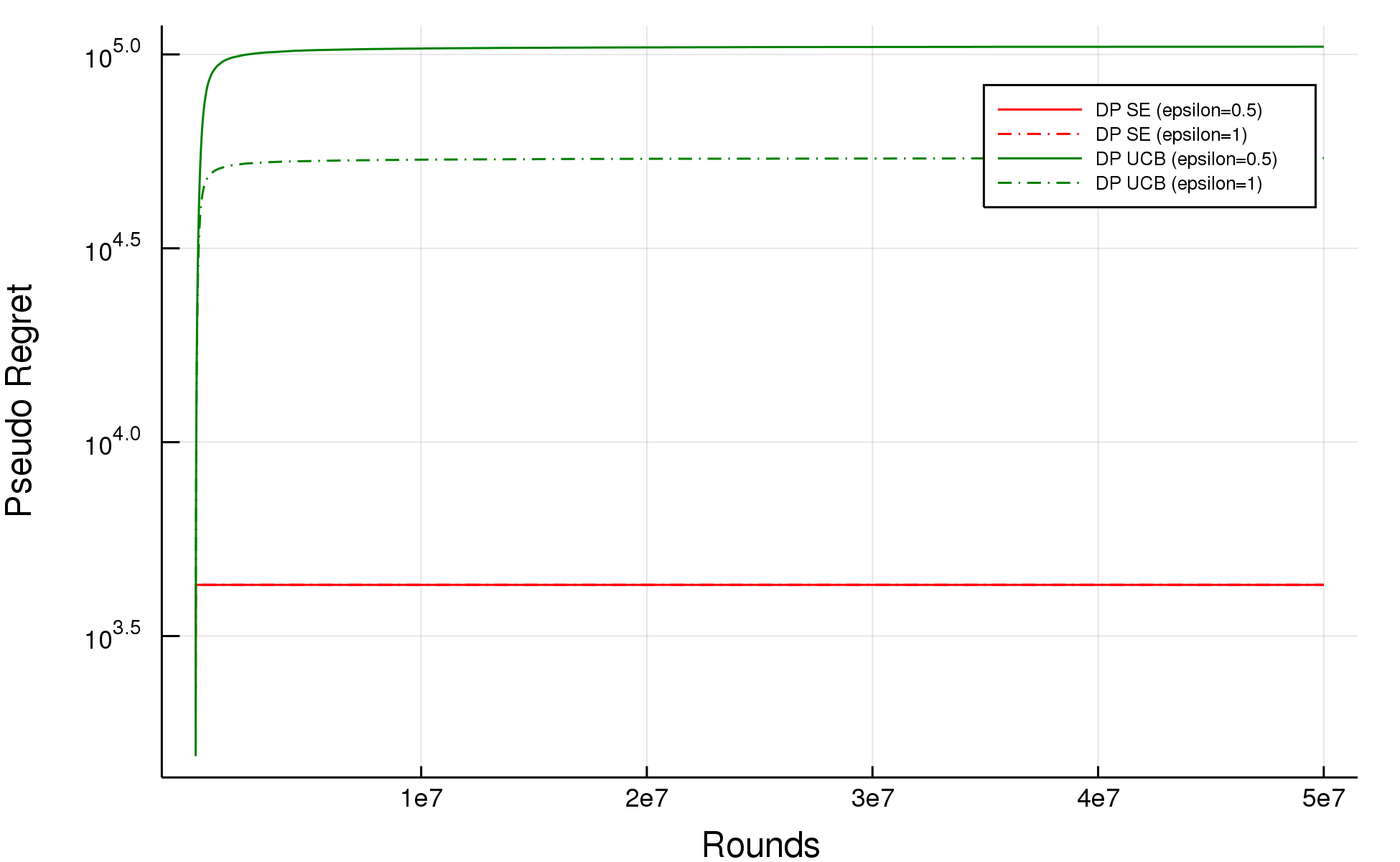}
    	\caption{$\epsDP=0.5$ and $1$}
    \end{subfigure}
    
    \caption{\label{fig:K=5|varyEps|setting3} Under $C_3$ with $K=5, T=5 \times 10^7$}
\end{figure}
    
\begin{figure}[!]
    \begin{subfigure}[h]{0.45\textwidth}
    	\includegraphics[width=0.97\textwidth]{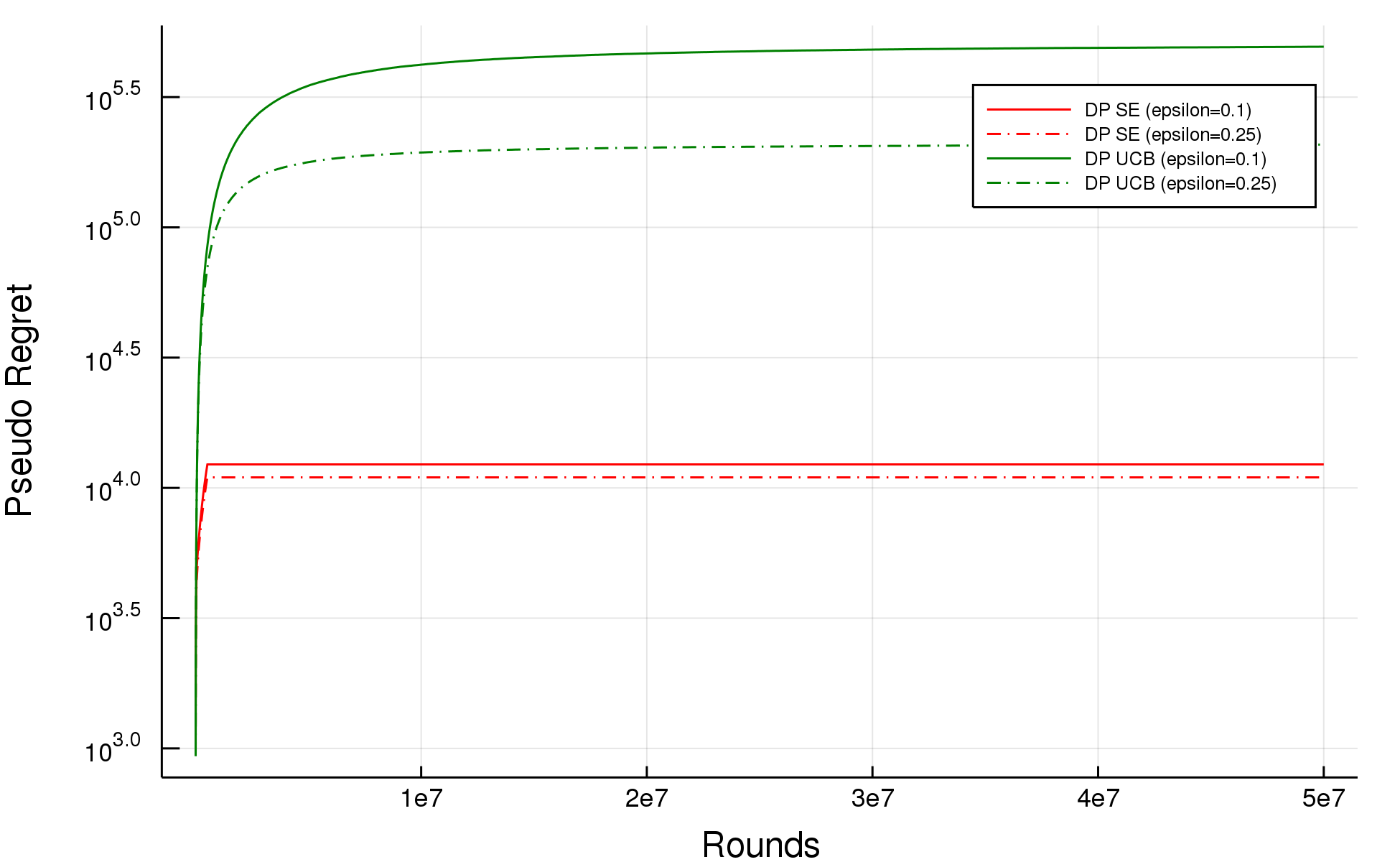}
    	\caption{$\epsDP=0.1$ and $0.25$}
    \end{subfigure}
    
    \begin{subfigure}[h]{0.45\textwidth}
    	\includegraphics[width=0.97\textwidth]{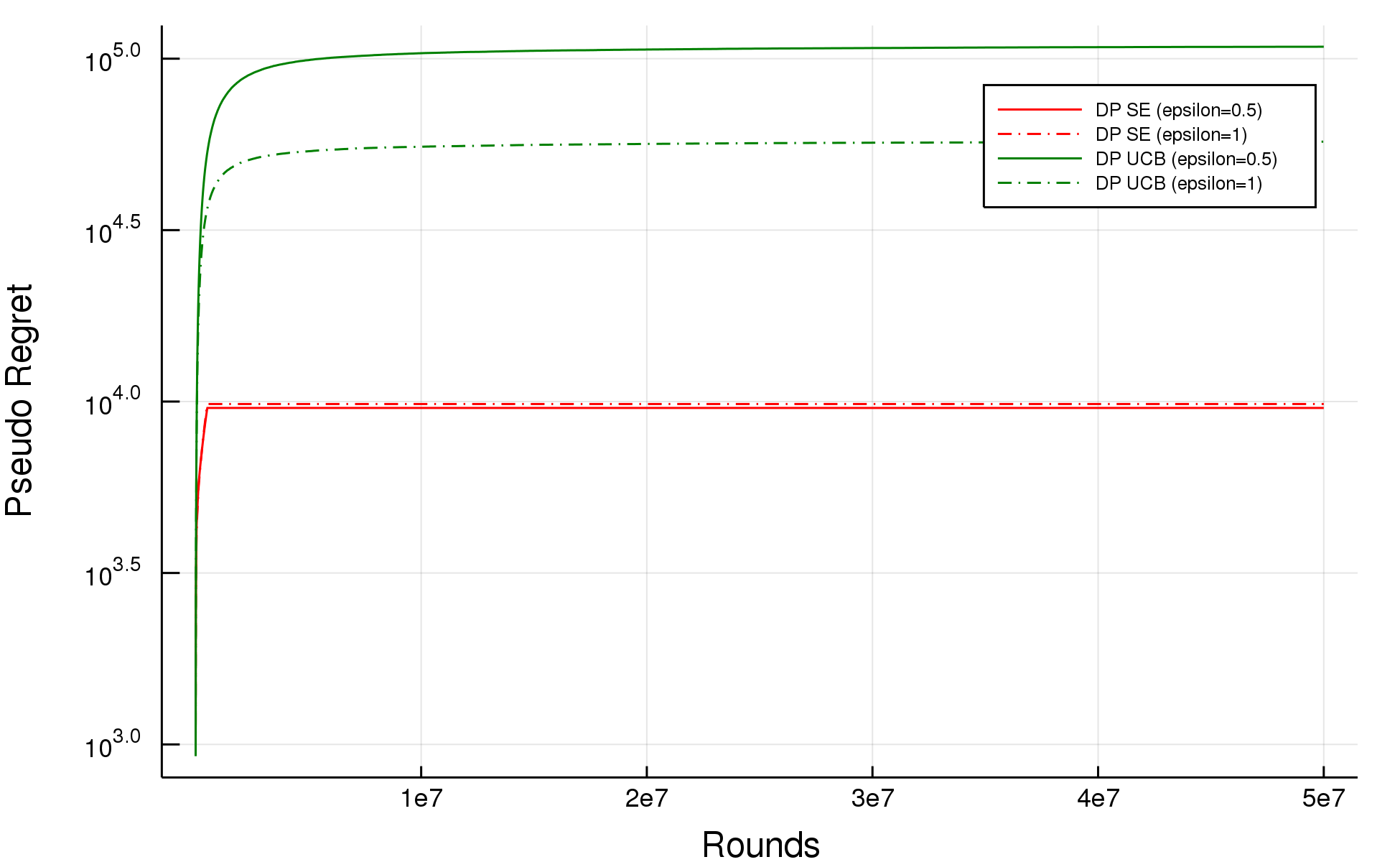}
    	\caption{$\epsDP=0.5$ and $1$}
    \end{subfigure}
    
    \caption{\label{fig:K=5|varyEps|setting4} Under $C_4$ with $K=5, T=5 \times 10^7$}
\end{figure}


\begin{figure}[h!]
    \begin{center}
    \begin{subfigure}[h]{0.5\textwidth}
    	\includegraphics[width=0.95\textwidth]{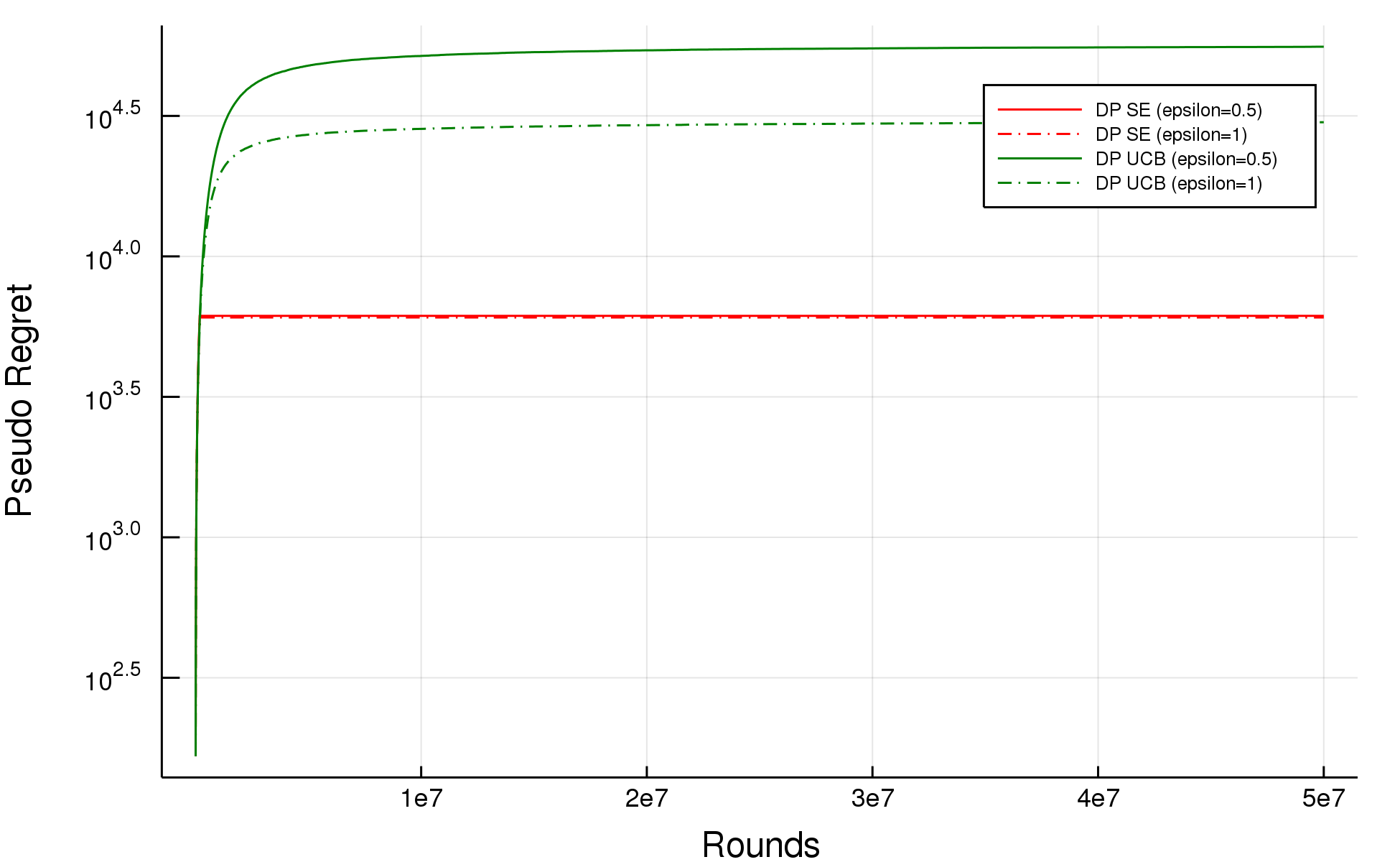}
    	\caption{$K=3$}
    \end{subfigure} 
    
    \begin{subfigure}[h]{0.5\textwidth}
    	\includegraphics[width=0.95\textwidth]{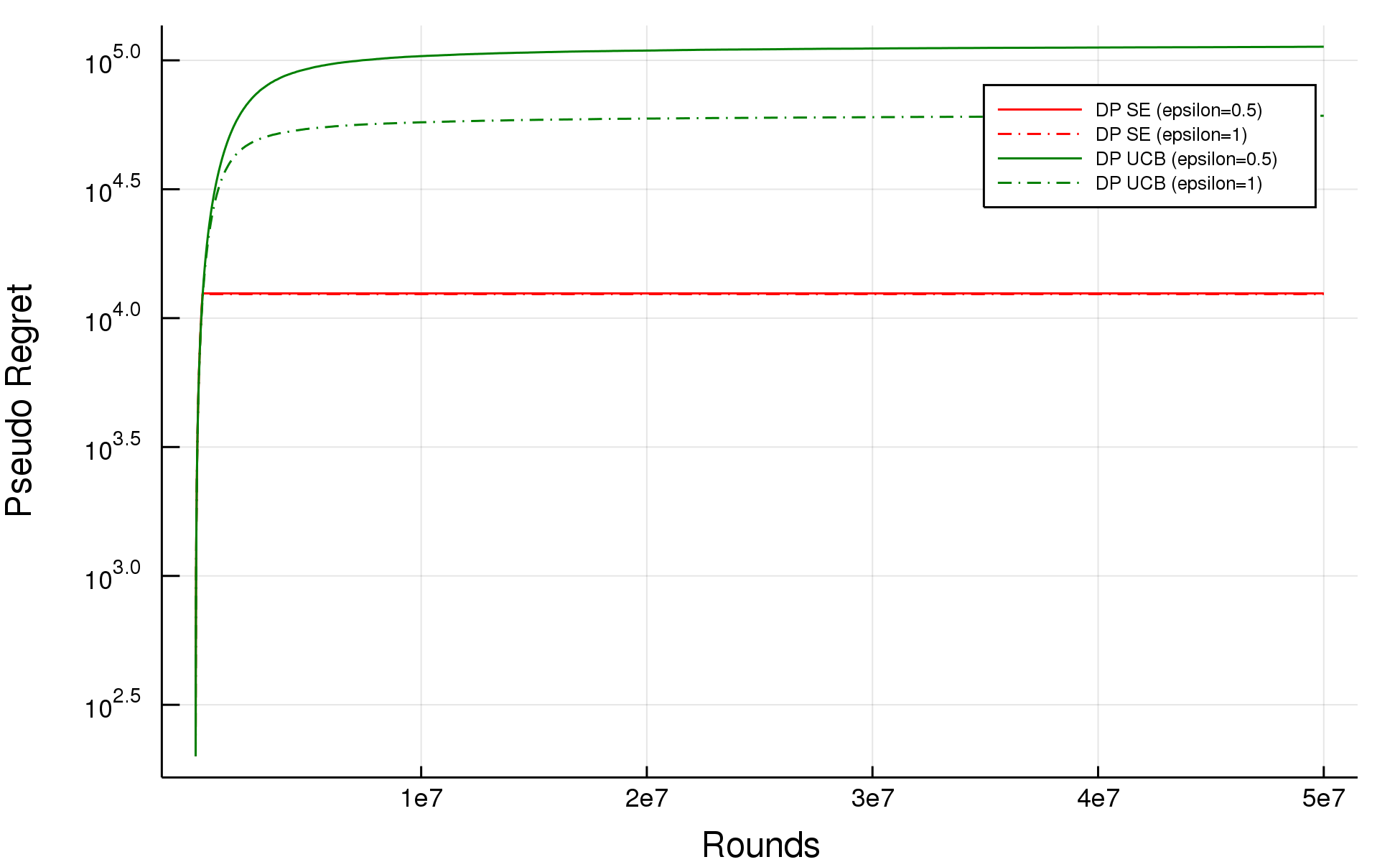}
    	\caption{$K=5$}
    \end{subfigure}
    
    \begin{subfigure}[h]{0.5\textwidth}
    	\includegraphics[width=0.95\textwidth]{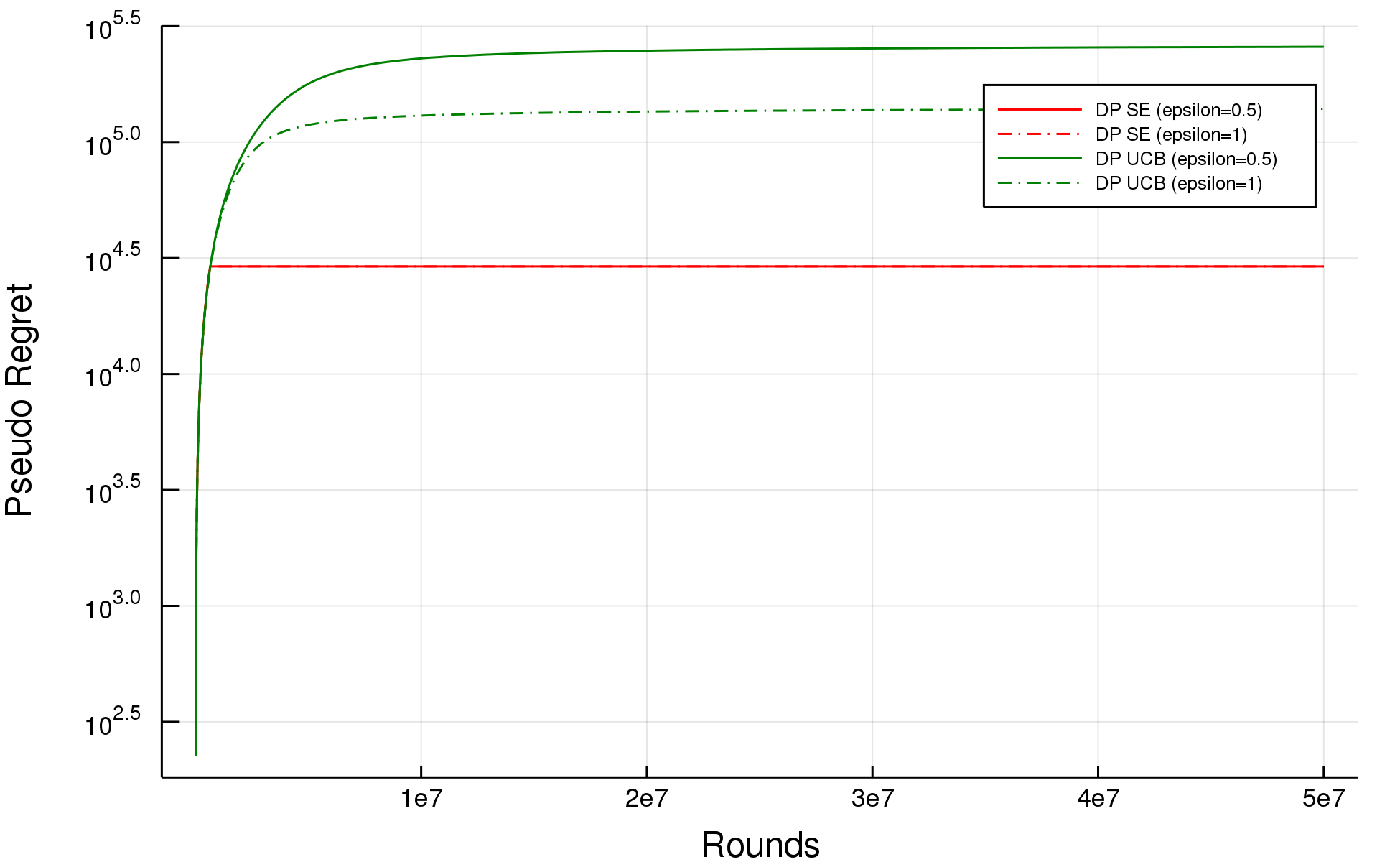}
    	\caption{$K=10$}
    \end{subfigure}
    
    \begin{subfigure}[h]{0.5\textwidth}
    	\includegraphics[width=0.95\textwidth]{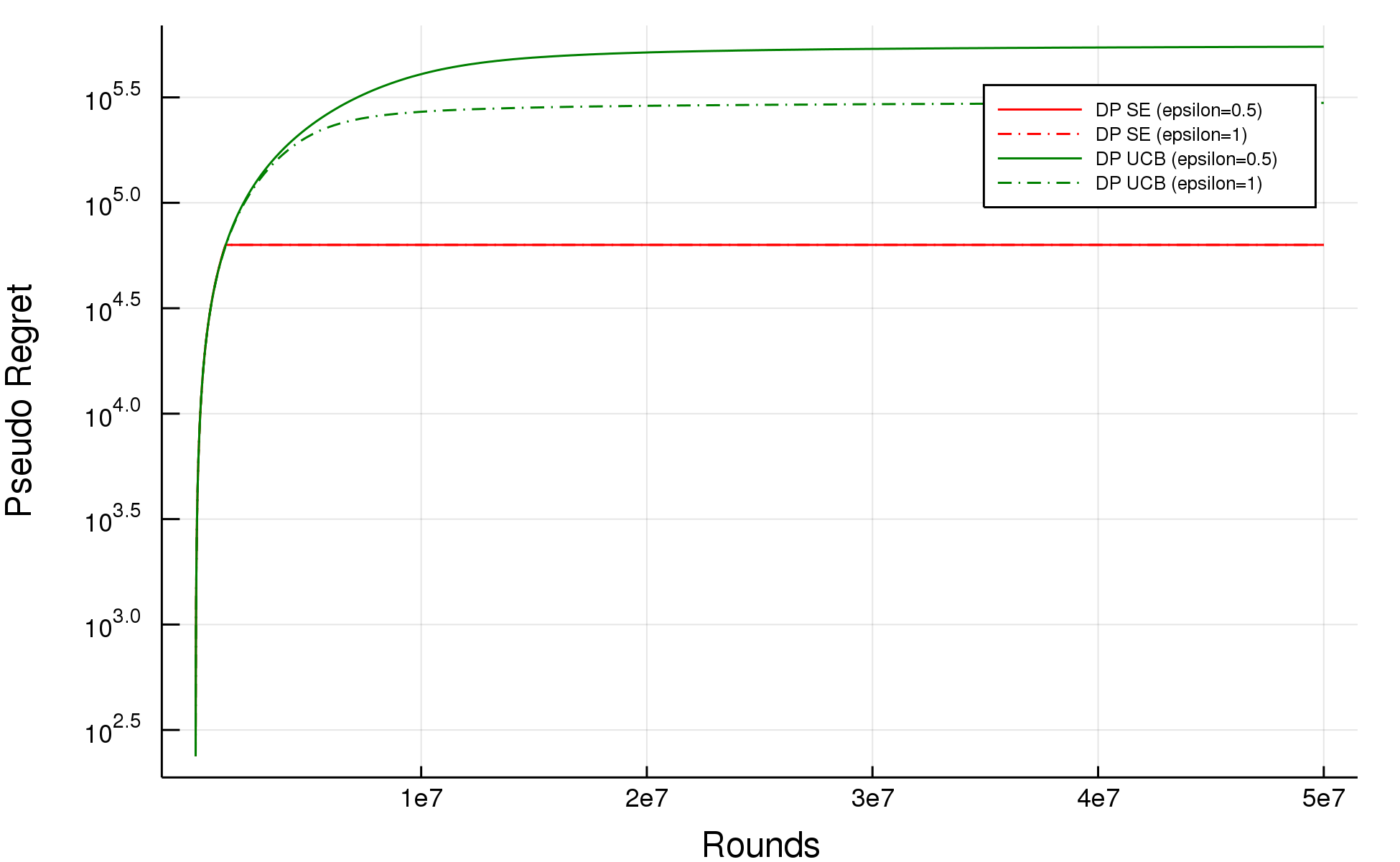}
    	\caption{$K=20$}
    \end{subfigure}
    
    \caption{\label{fig:varyK|eps0.5&1|setting1} Under $C_1$ with $\epsDP \in \{0.5,1\}, T=5 \times 10^7$}
    \end{center}
\end{figure}

\begin{figure}[h!]
    \begin{center}
    \begin{subfigure}[h]{0.5\textwidth}
    	\includegraphics[width=0.95\textwidth]{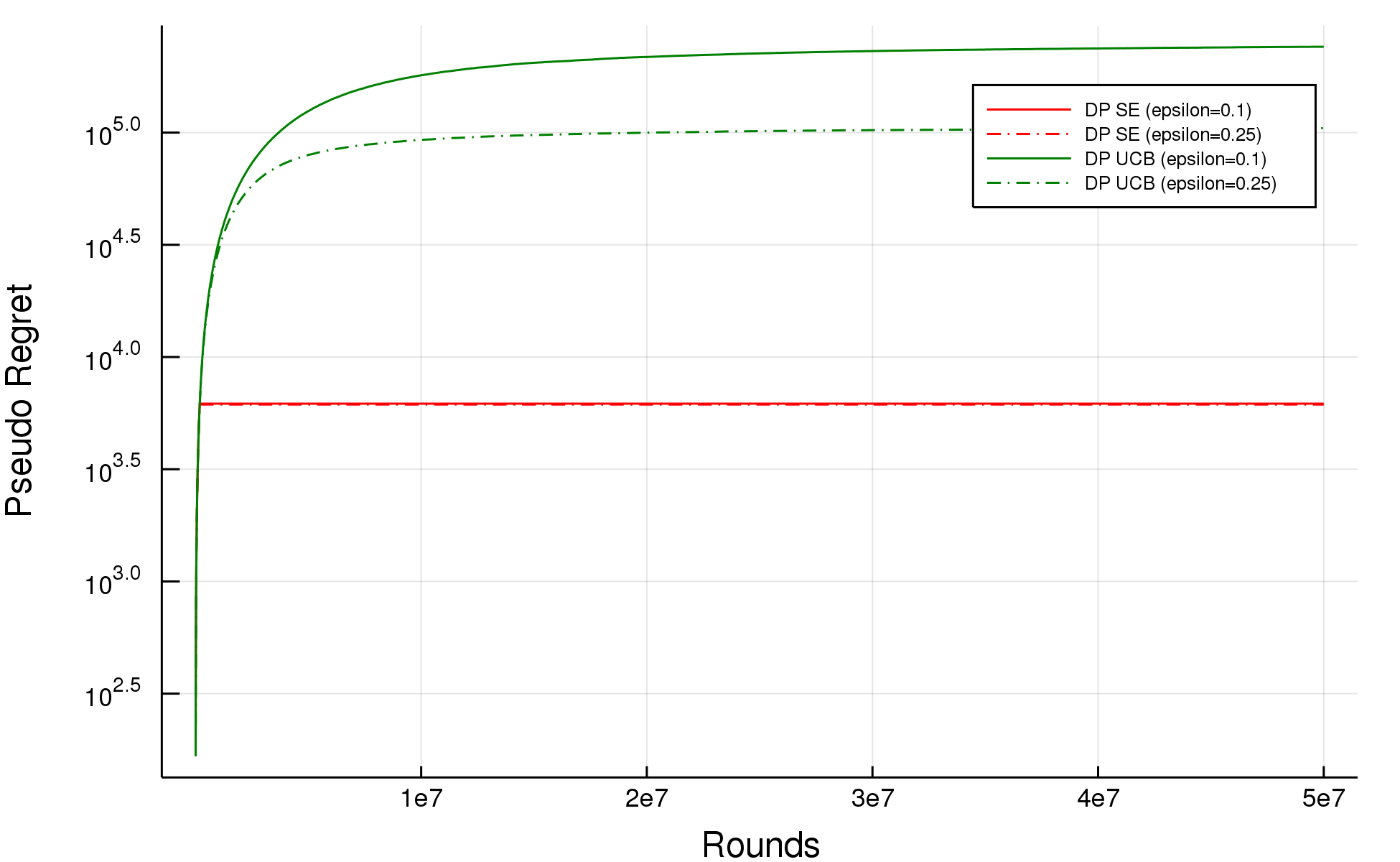}
    	\caption{$K=3$}
    \end{subfigure} 
    
    \begin{subfigure}[h]{0.5\textwidth}
    	\includegraphics[width=0.95\textwidth]{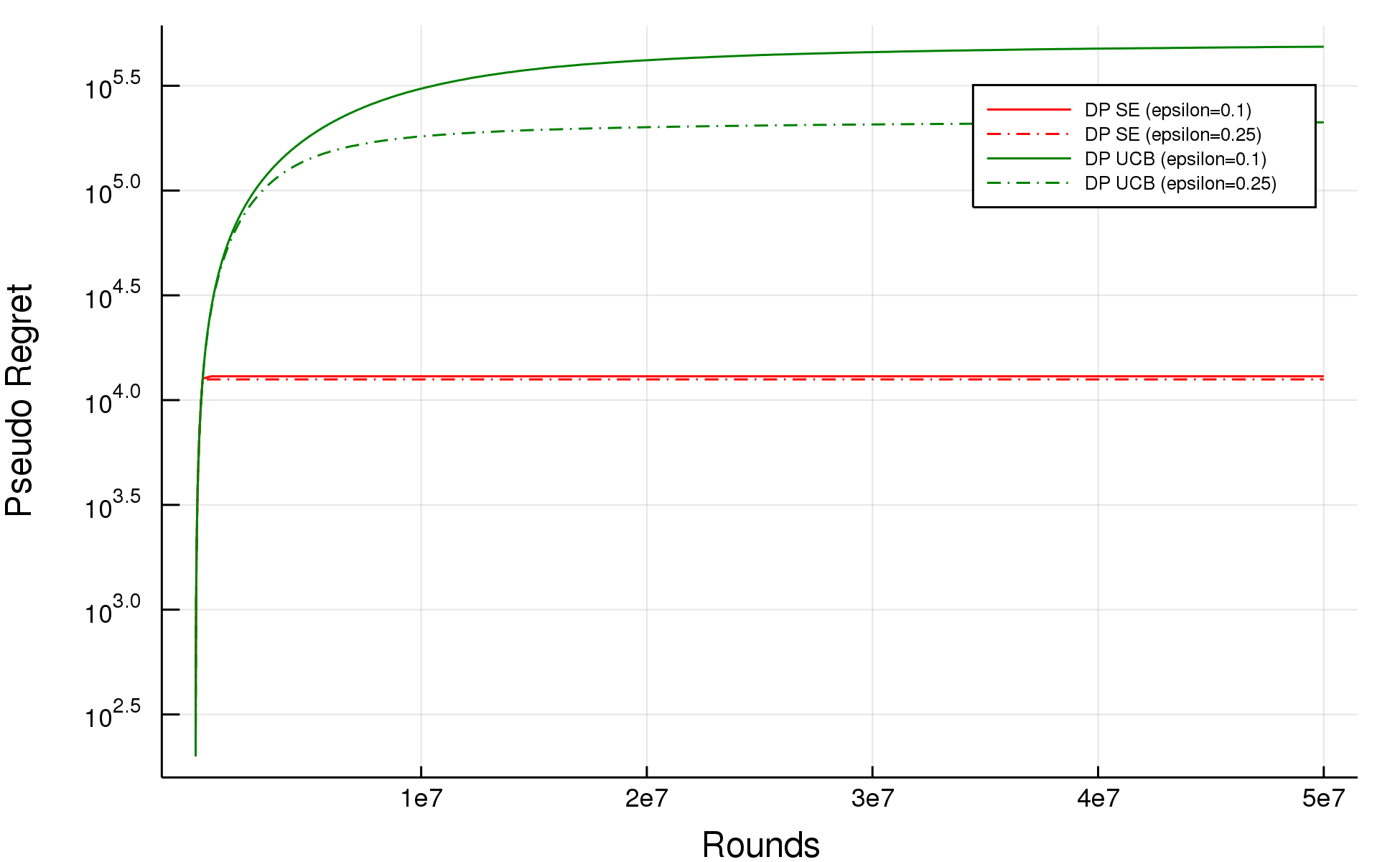}
    	\caption{$K=5$}
    \end{subfigure}
    
    \begin{subfigure}[h]{0.5\textwidth}
    	\includegraphics[width=0.95\textwidth]{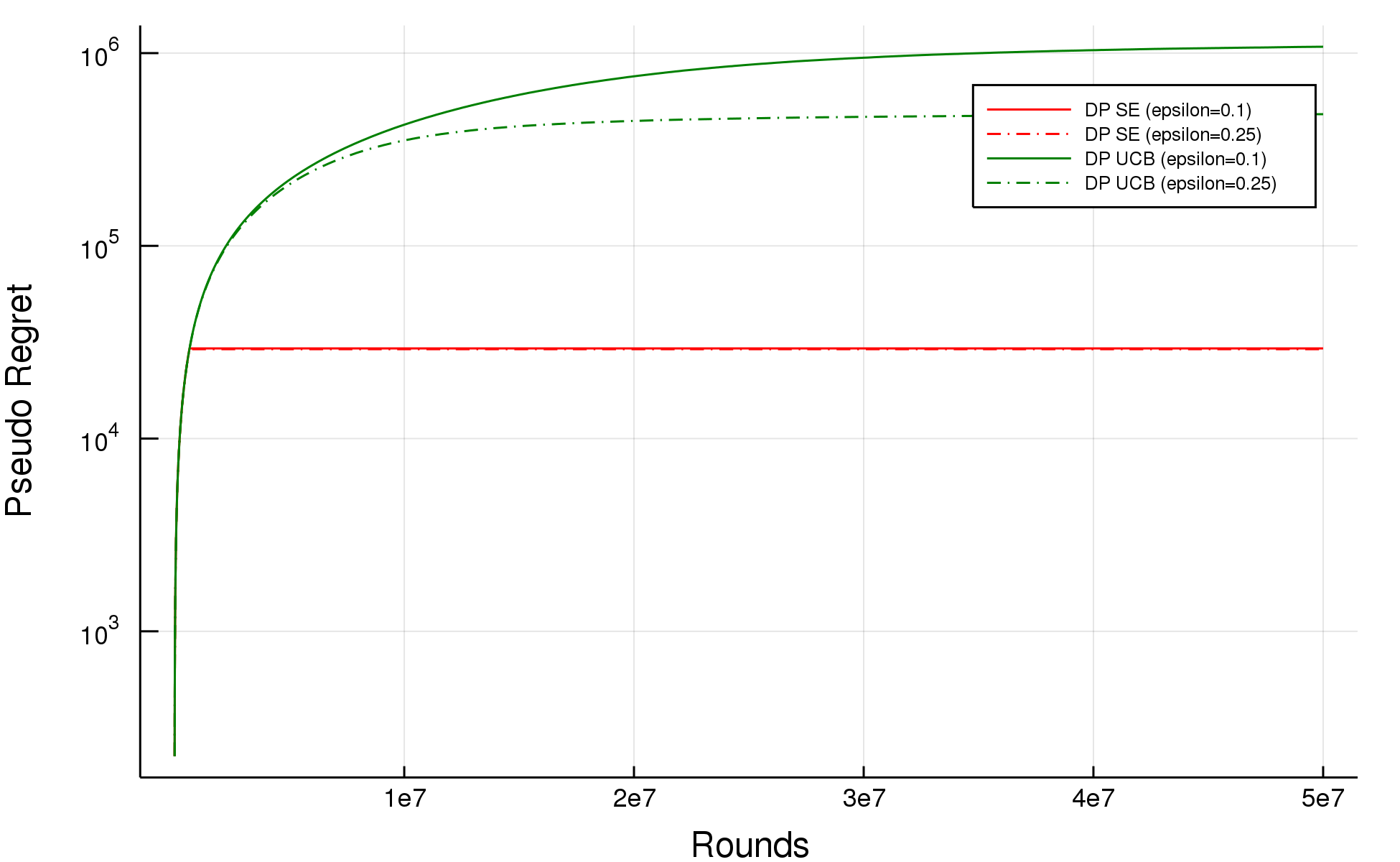}
    	\caption{$K=10$}
    \end{subfigure}
    
    \begin{subfigure}[h]{0.5\textwidth}
    	\includegraphics[width=0.95\textwidth]{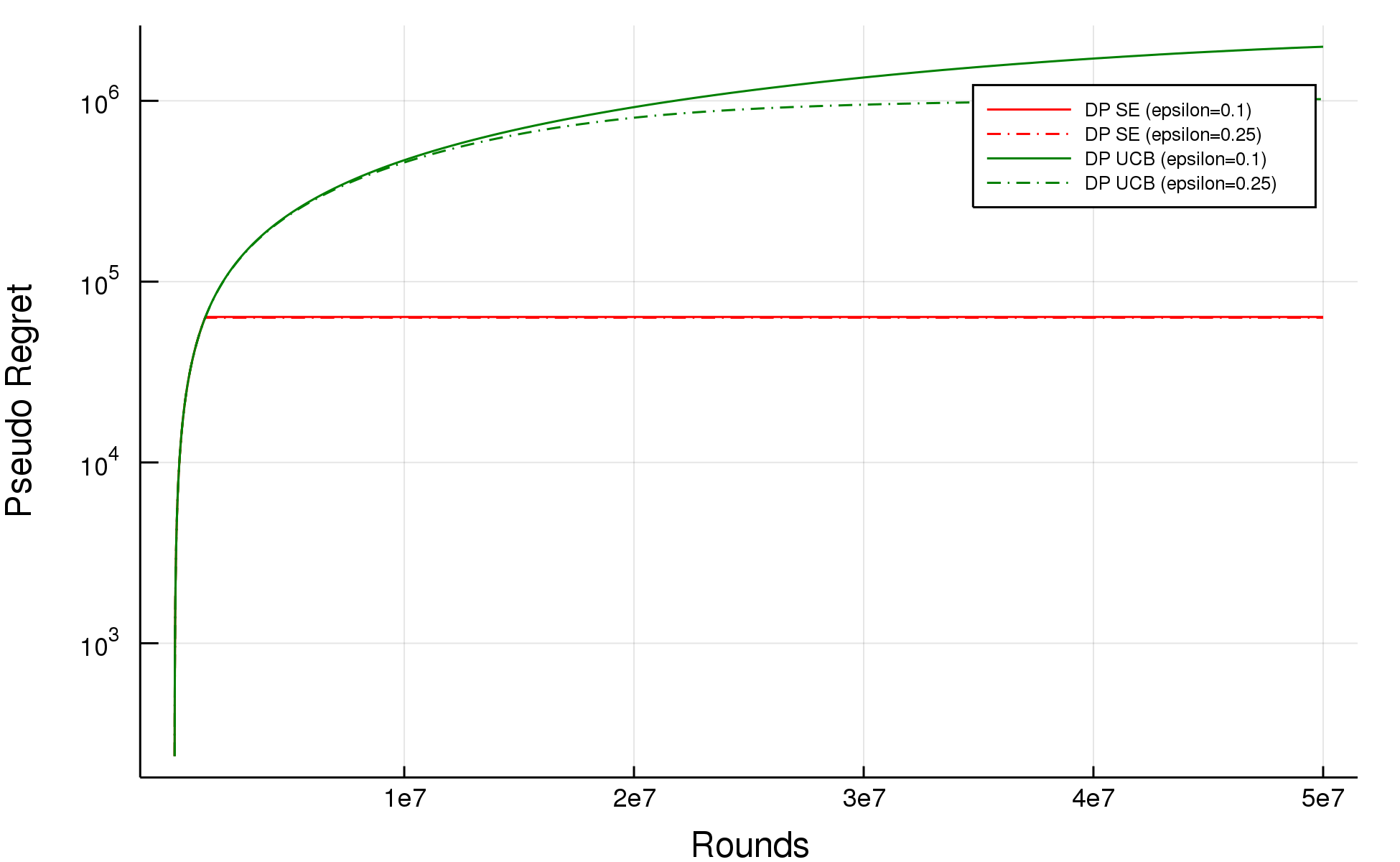}
    	\caption{$K=20$}
    \end{subfigure}
    
    \caption{\label{fig:varyK|eps0.1&0.25|setting1} Under $C_1$ with $\epsDP \in \{0.1,0.25\}, T=5 \times 10^7$}
    \end{center}
\end{figure}

\begin{figure}[h!]
    \begin{center}
    \begin{subfigure}[h]{0.5\textwidth}
    	\includegraphics[width=0.95\textwidth]{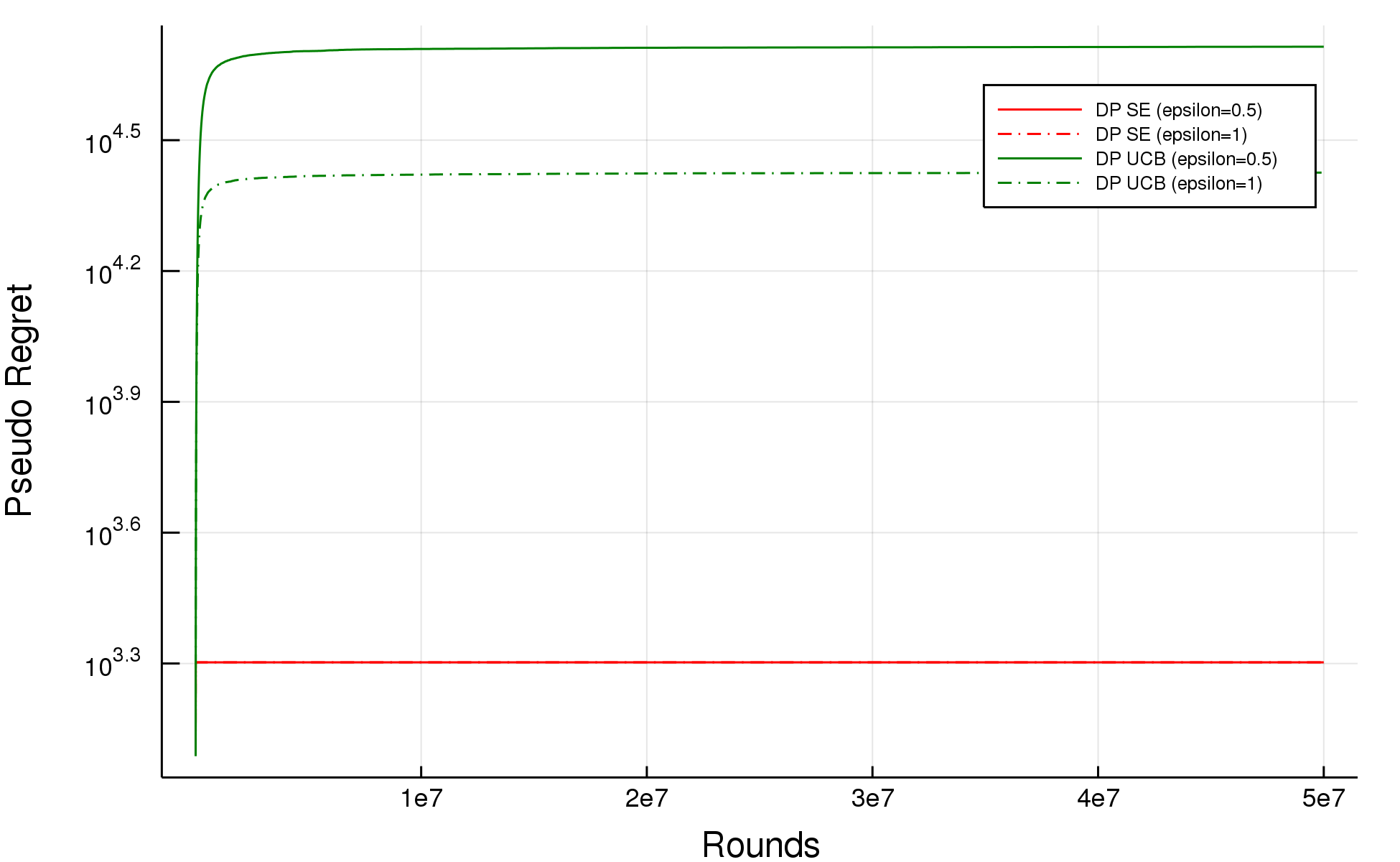}
    	\caption{$K=3$}
    \end{subfigure} 
    
    \begin{subfigure}[h]{0.5\textwidth}
    	\includegraphics[width=0.95\textwidth]{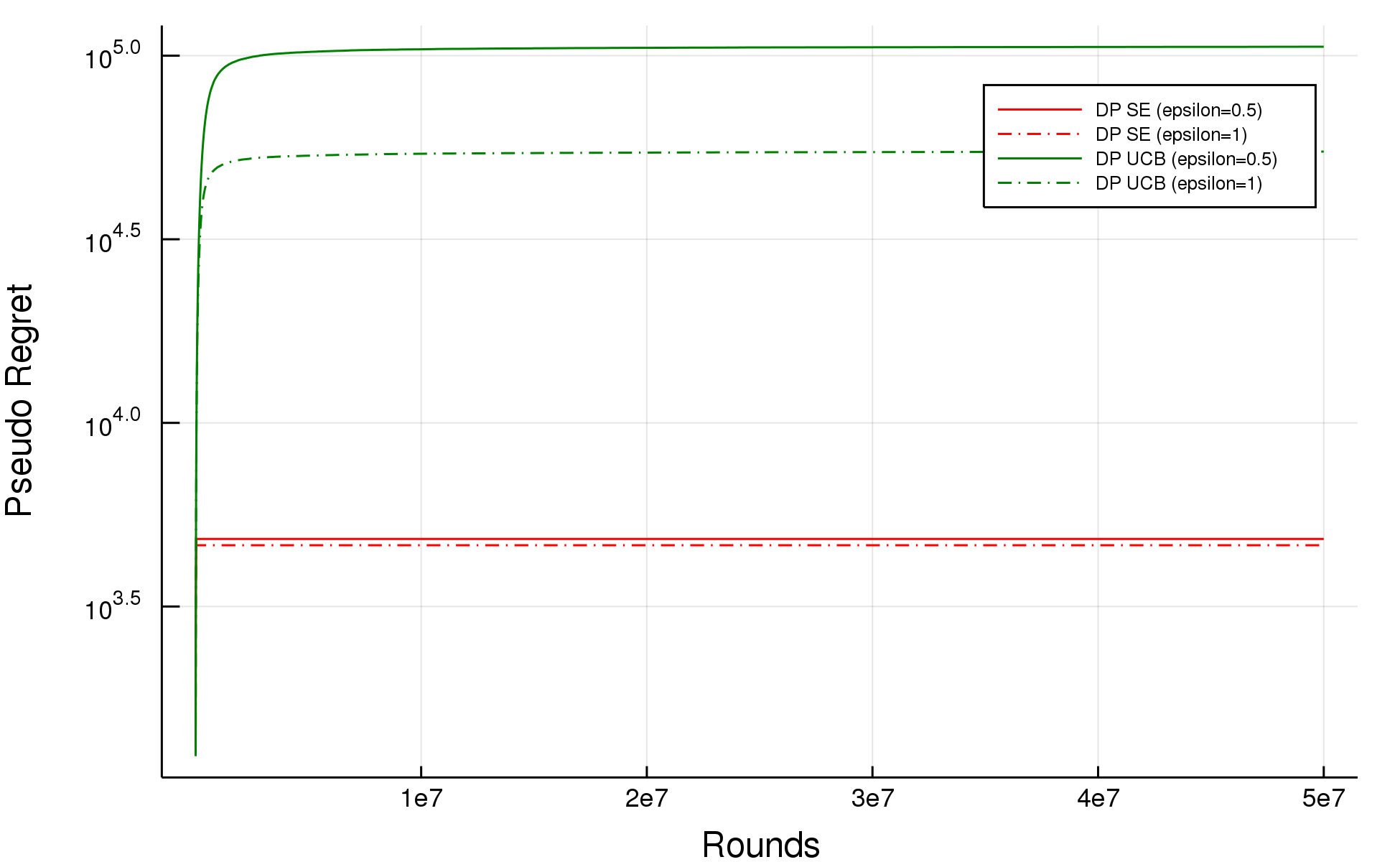}
    	\caption{$K=5$}
    \end{subfigure}
    
    \begin{subfigure}[h]{0.5\textwidth}
    	\includegraphics[width=0.95\textwidth]{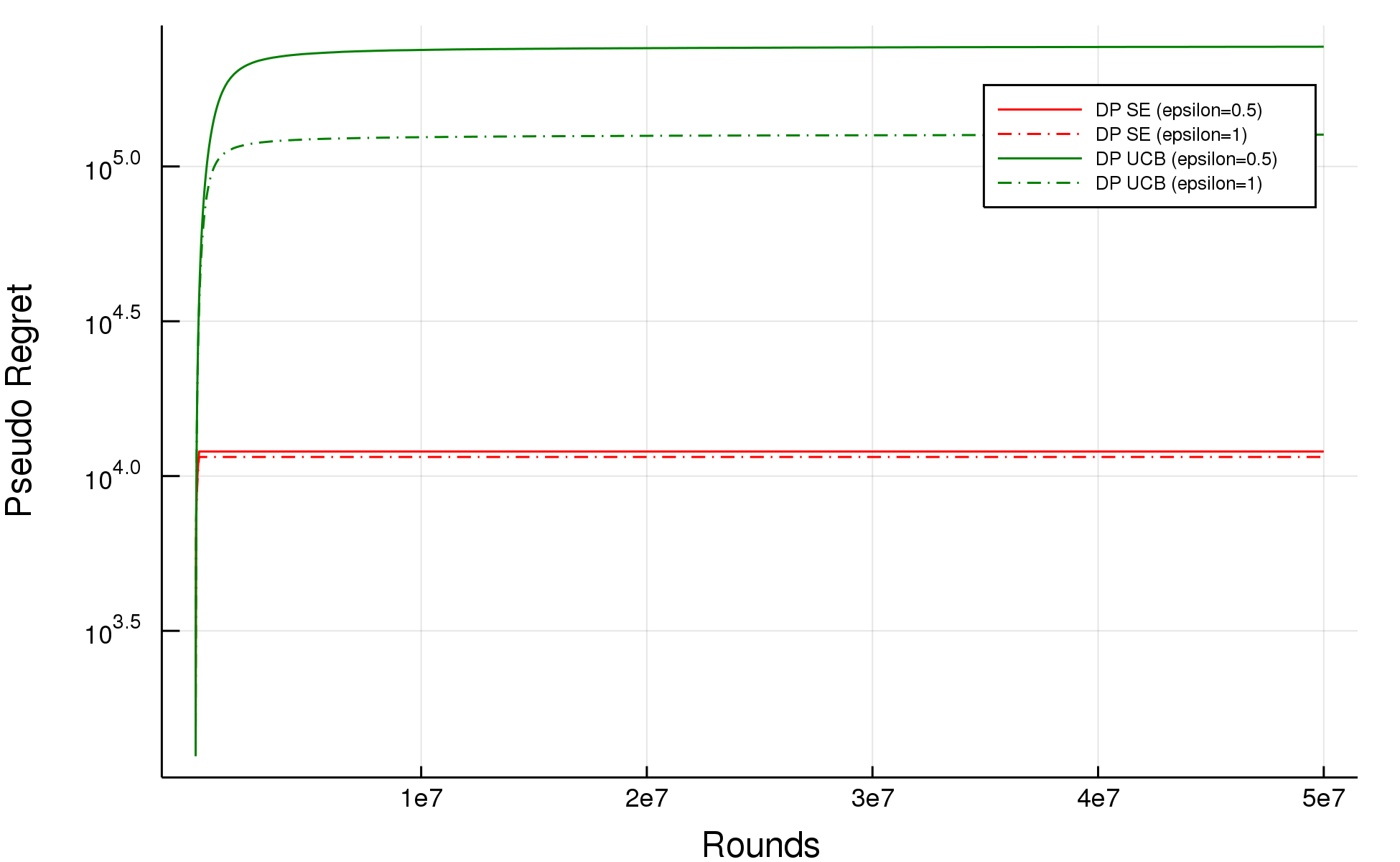}
    	\caption{$K=10$}
    \end{subfigure}
    
    \begin{subfigure}[h]{0.5\textwidth}
    	\includegraphics[width=0.95\textwidth]{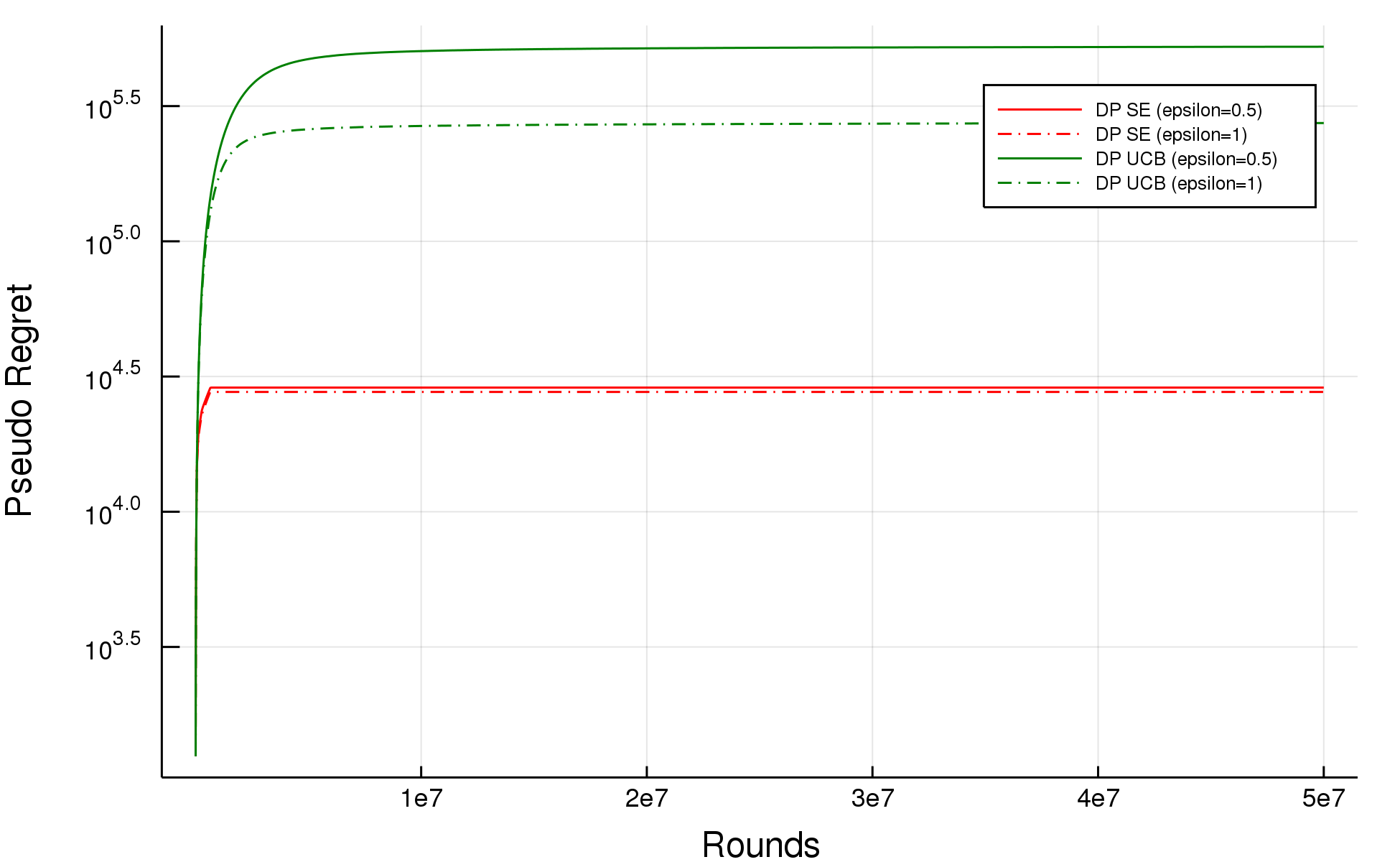}
    	\caption{$K=20$}
    \end{subfigure}
    
    \caption{\label{fig:varyK|eps0.5&1|setting2} Under $C_2$ with $\epsDP \in \{0.5,1\}, T=5 \times 10^7$}
    \end{center}
\end{figure}

\begin{figure}[h!]
    \begin{center}
    \begin{subfigure}[h]{0.5\textwidth}
    	\includegraphics[width=0.95\textwidth]{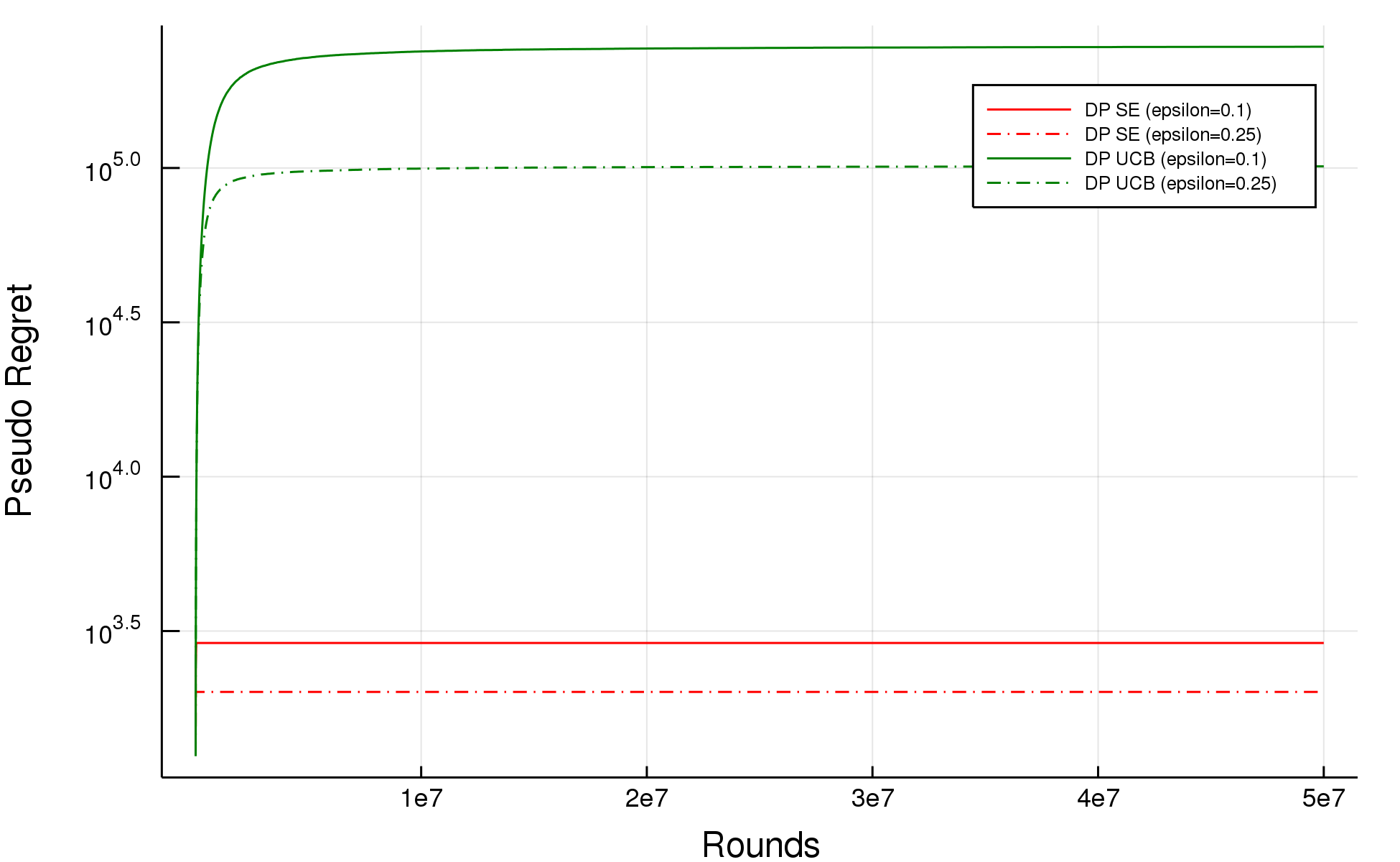}
    	\caption{$K=3$}
    \end{subfigure} 
    
    \begin{subfigure}[h]{0.5\textwidth}
    	\includegraphics[width=0.95\textwidth]{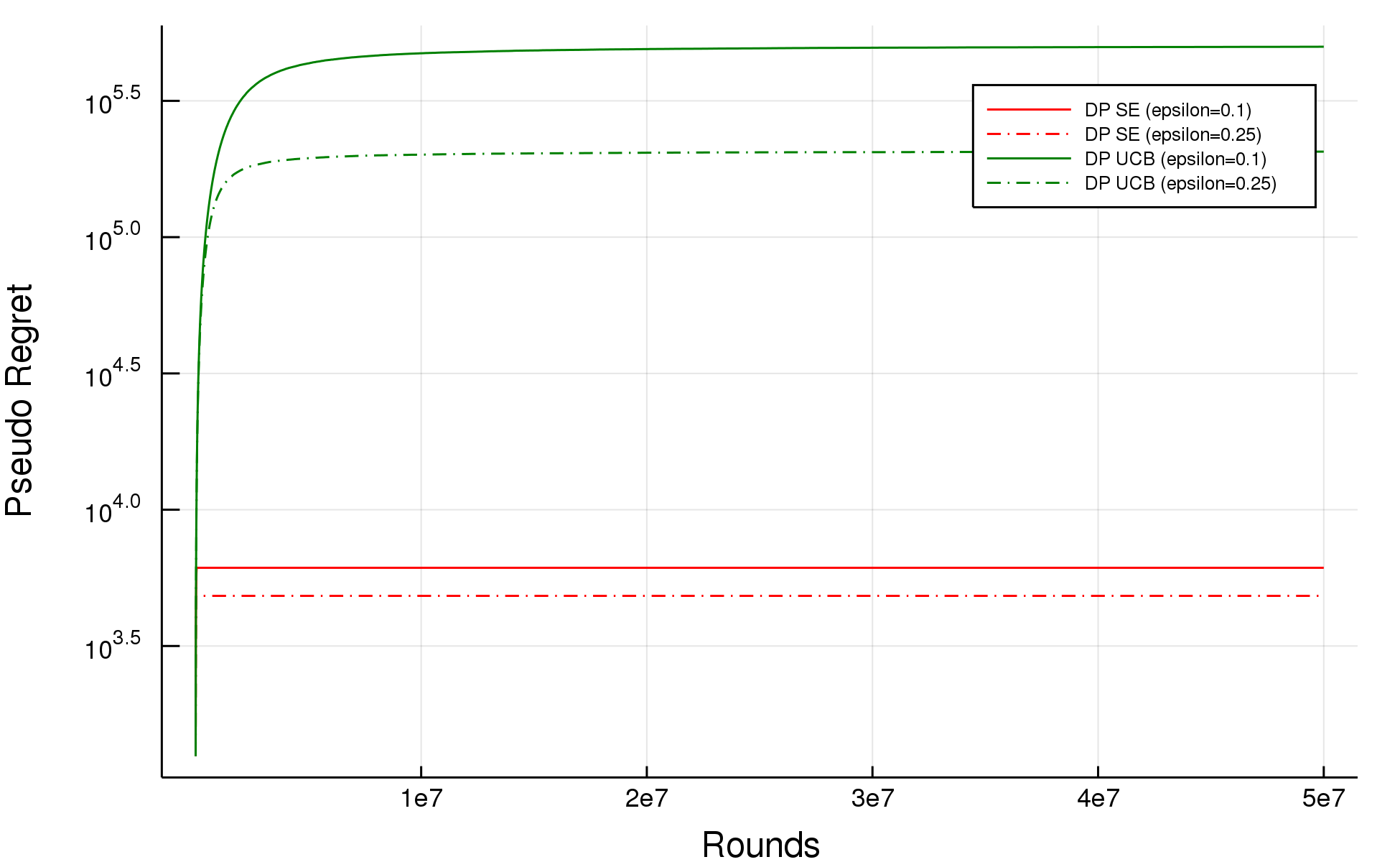}
    	\caption{$K=5$}
    \end{subfigure}
    
    \begin{subfigure}[h]{0.5\textwidth}
    	\includegraphics[width=0.95\textwidth]{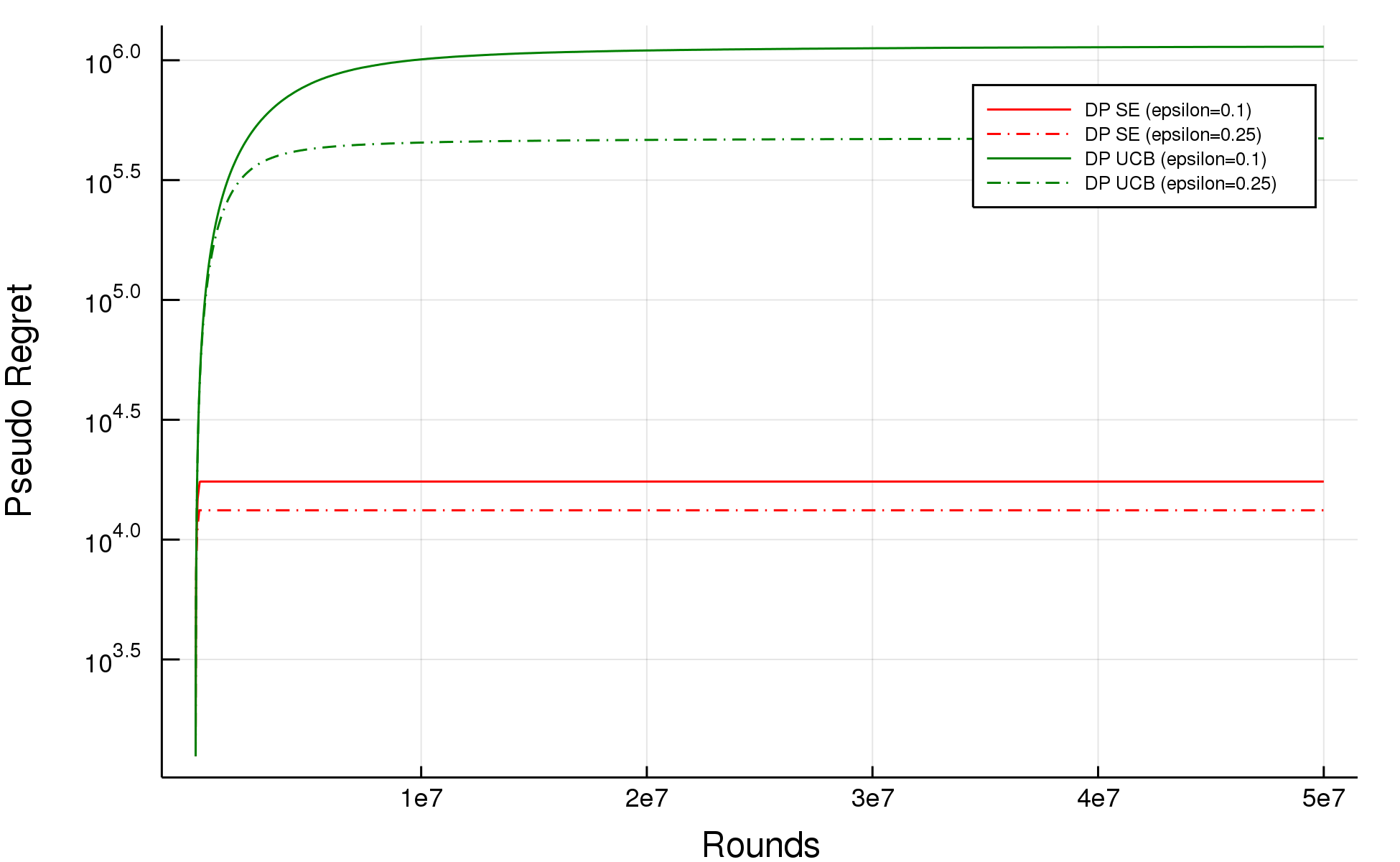}
    	\caption{$K=10$}
    \end{subfigure}
    
    \begin{subfigure}[h]{0.5\textwidth}
    	\includegraphics[width=0.95\textwidth]{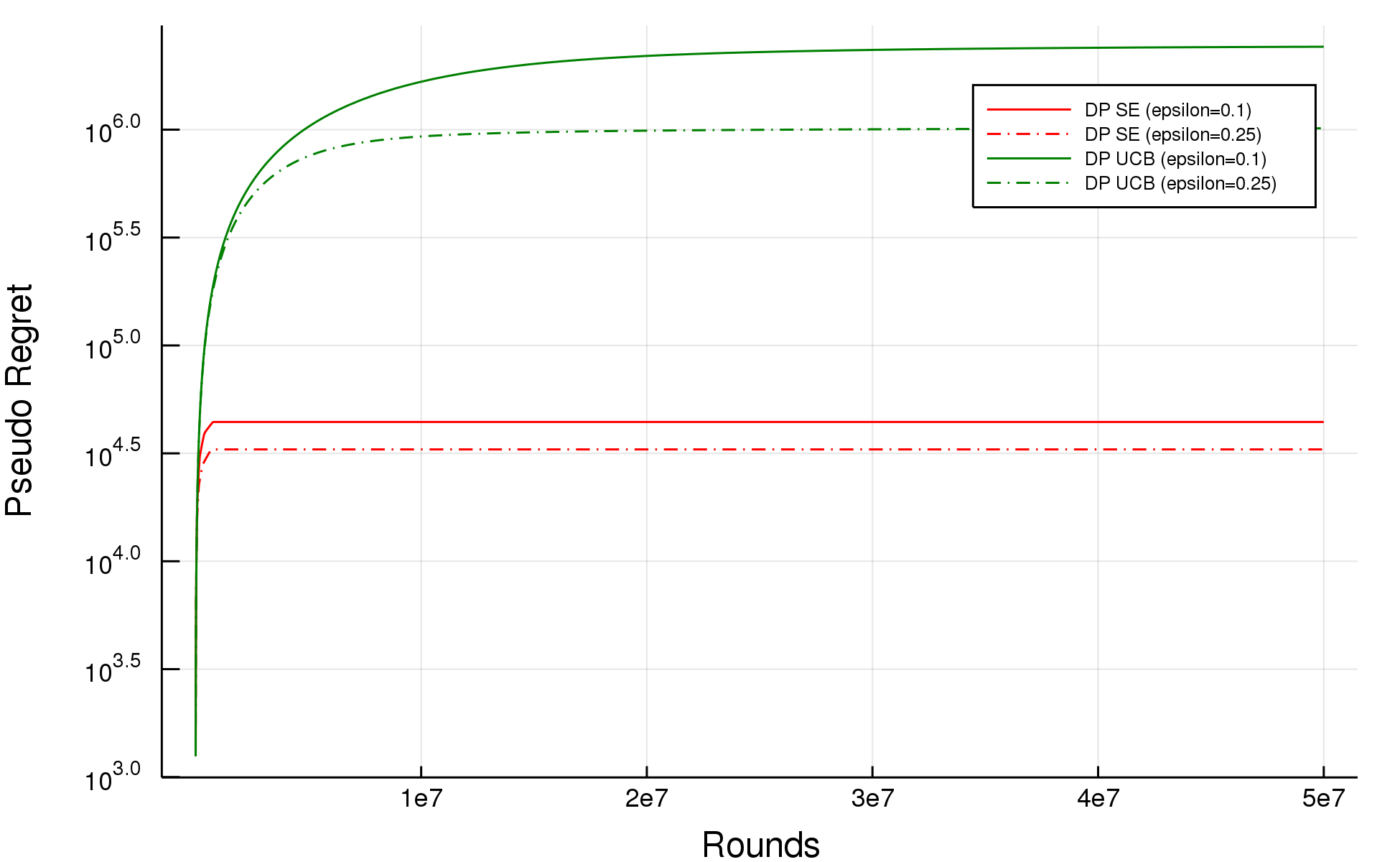}
    	\caption{$K=20$}
    \end{subfigure}
    
    \caption{\label{fig:varyK|eps0.1&0.25|setting2} Under $C_2$ with $\epsDP \in \{0.1,0.25\}, T=5 \times 10^7$}
    \end{center}
\end{figure}

\begin{figure}[h!]
    \begin{center}
    \begin{subfigure}[h]{0.5\textwidth}
    	\includegraphics[width=0.95\textwidth]{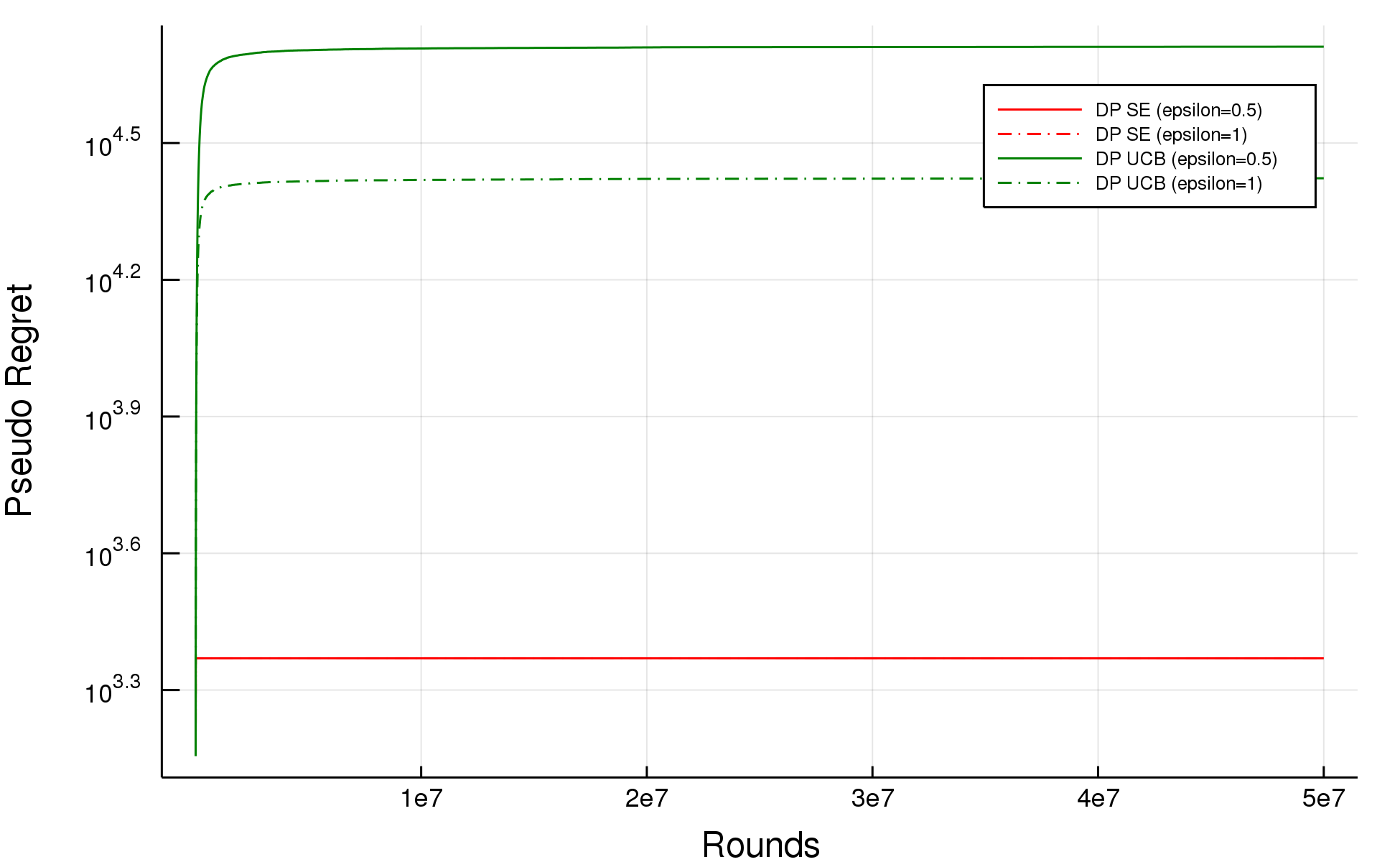}
    	\caption{$K=3$}
    \end{subfigure} 
    
    \begin{subfigure}[h]{0.5\textwidth}
    	\includegraphics[width=0.95\textwidth]{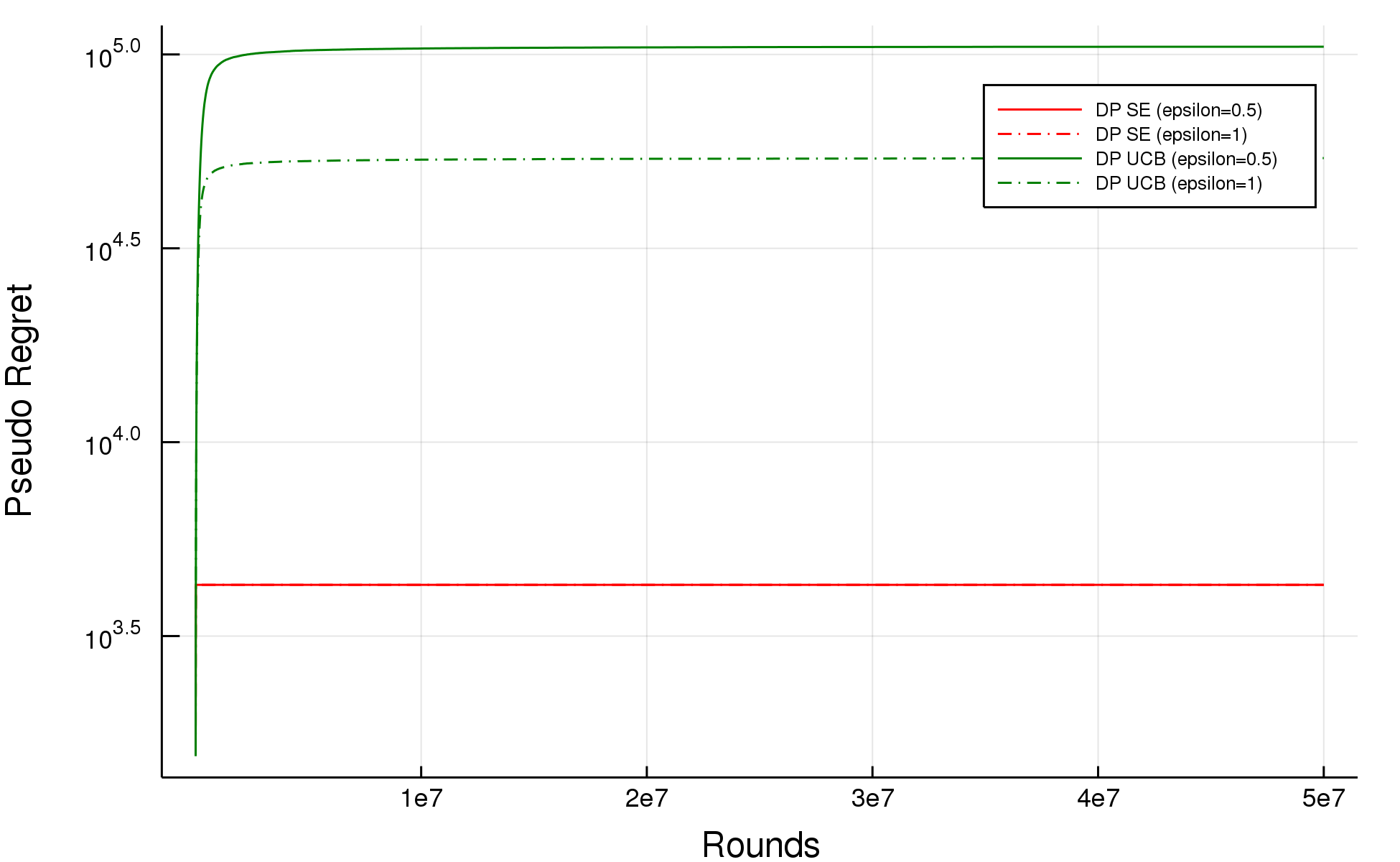}
    	\caption{$K=5$}
    \end{subfigure}
    
    \begin{subfigure}[h]{0.5\textwidth}
    	\includegraphics[width=0.95\textwidth]{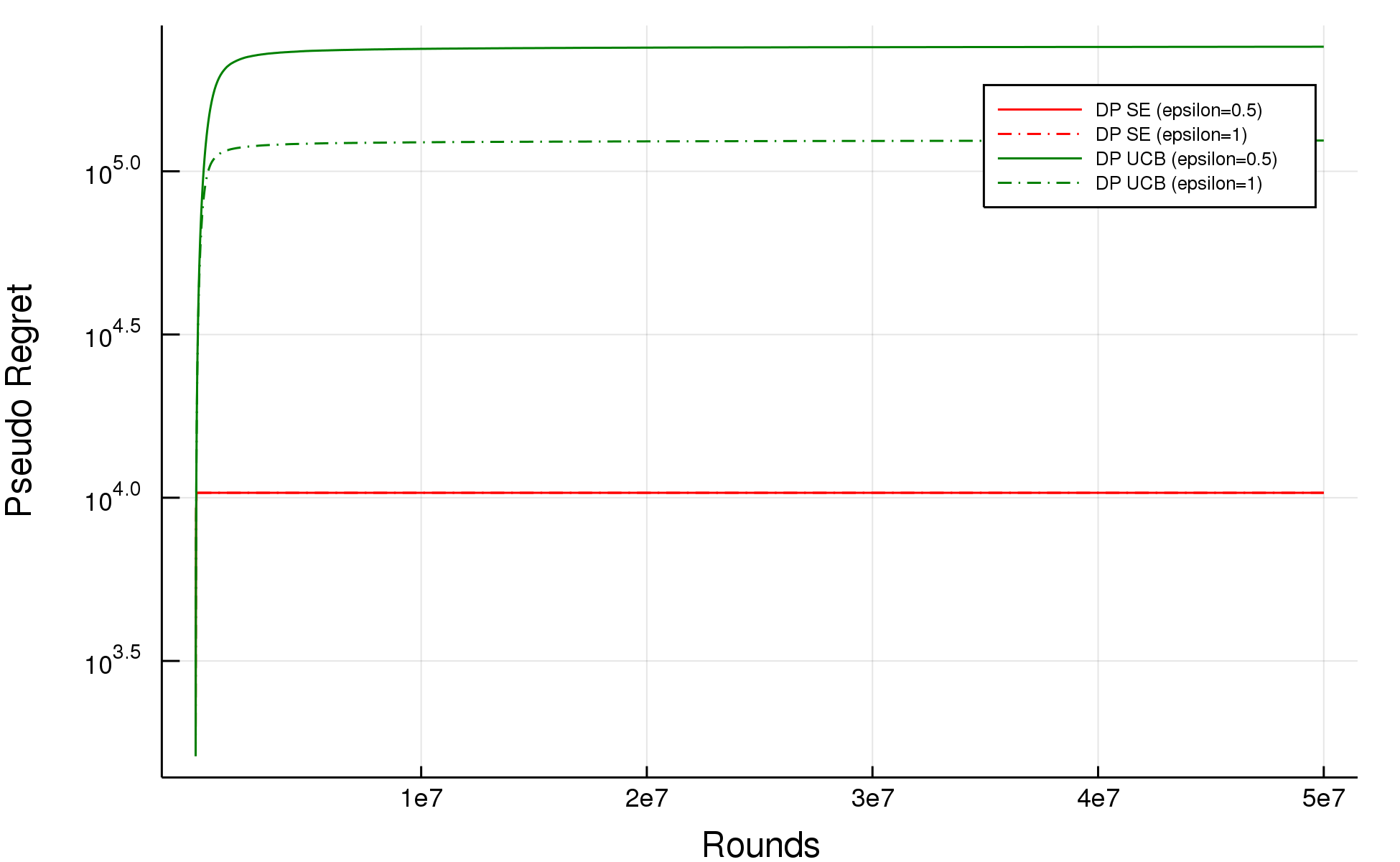}
    	\caption{$K=10$}
    \end{subfigure}
    
    \begin{subfigure}[h]{0.5\textwidth}
    	\includegraphics[width=0.95\textwidth]{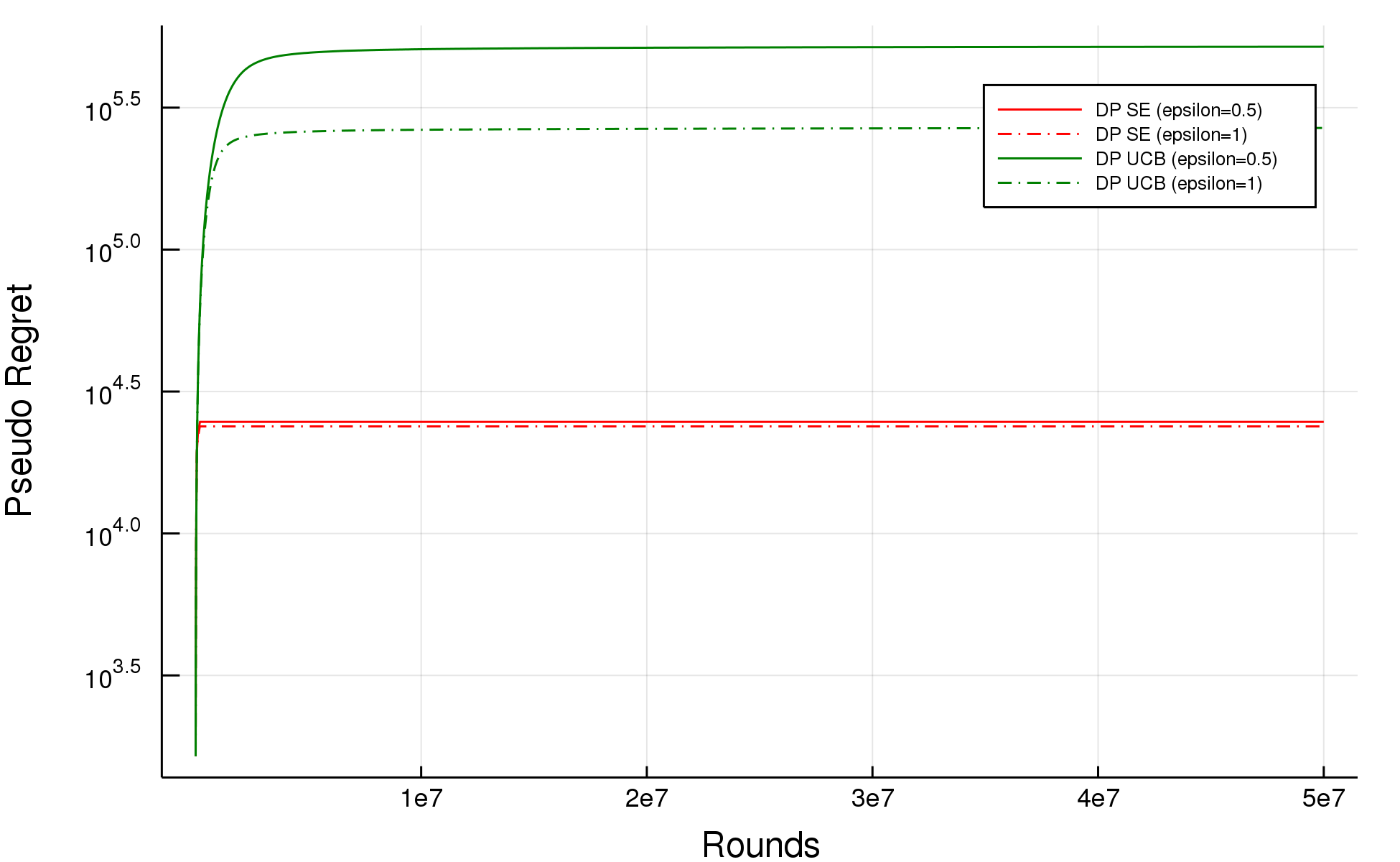}
    	\caption{$K=20$}
    \end{subfigure}
    
    \caption{\label{fig:varyK|eps0.5&1|setting3} Under $C_3$ with $\epsDP \in \{0.5,1\}, T=5 \times 10^7$}
    \end{center}
\end{figure}

\begin{figure}[h!]
    \begin{center}
    \begin{subfigure}[h]{0.5\textwidth}
    	\includegraphics[width=0.95\textwidth]{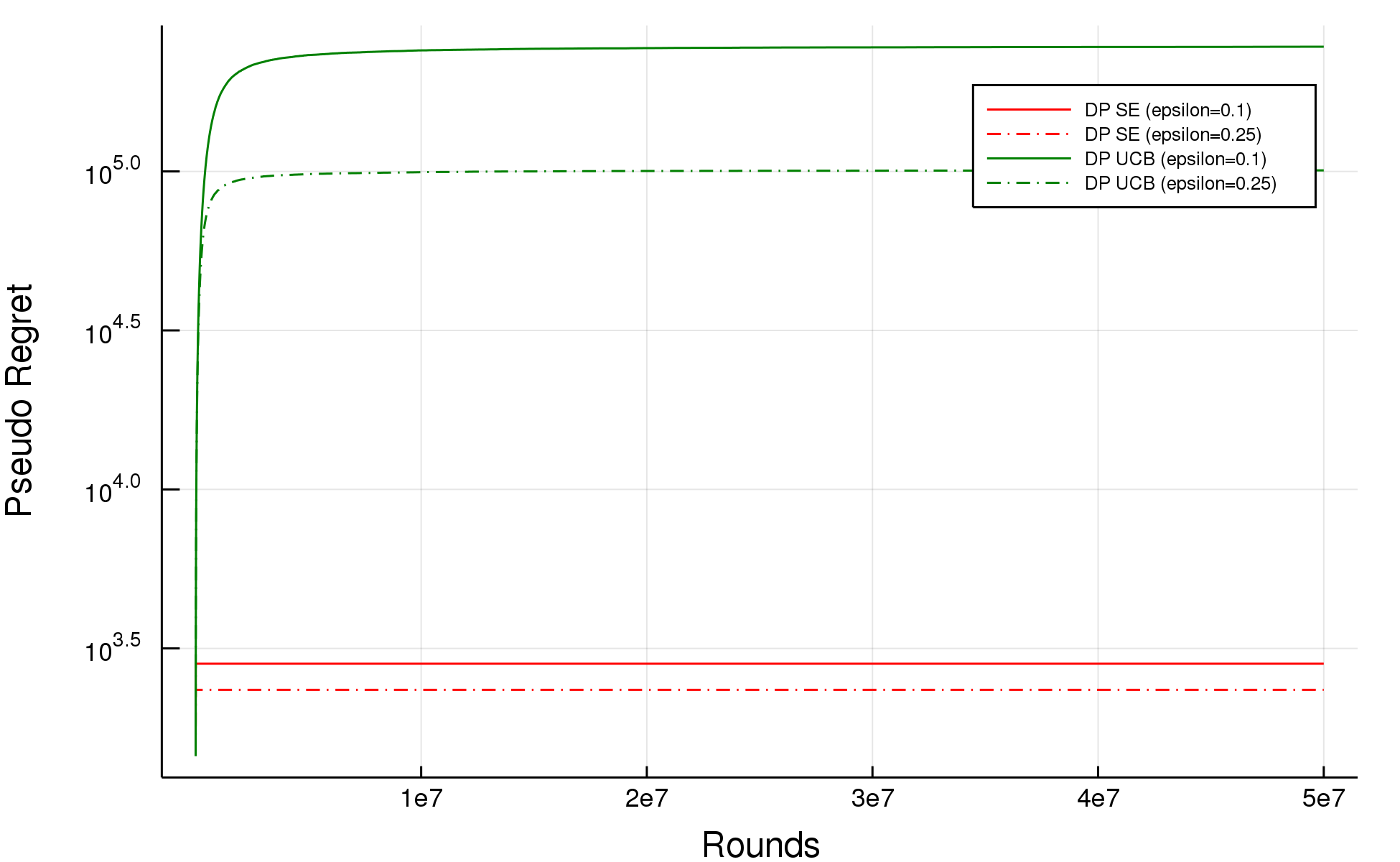}
    	\caption{$K=3$}
    \end{subfigure} 
    
    \begin{subfigure}[h]{0.5\textwidth}
    	\includegraphics[width=0.95\textwidth]{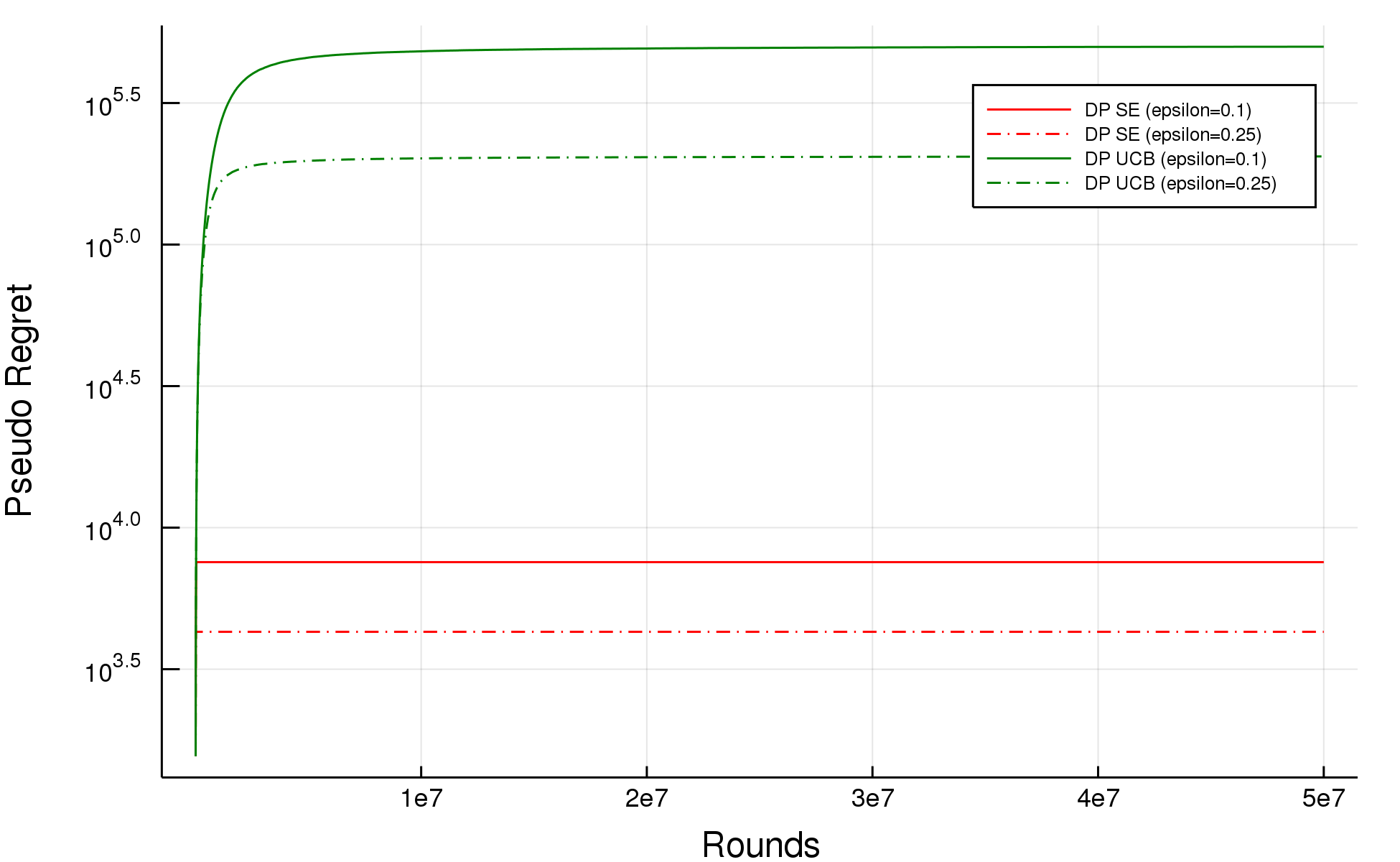}
    	\caption{$K=5$}
    \end{subfigure}
    
    \begin{subfigure}[h]{0.5\textwidth}
    	\includegraphics[width=0.95\textwidth]{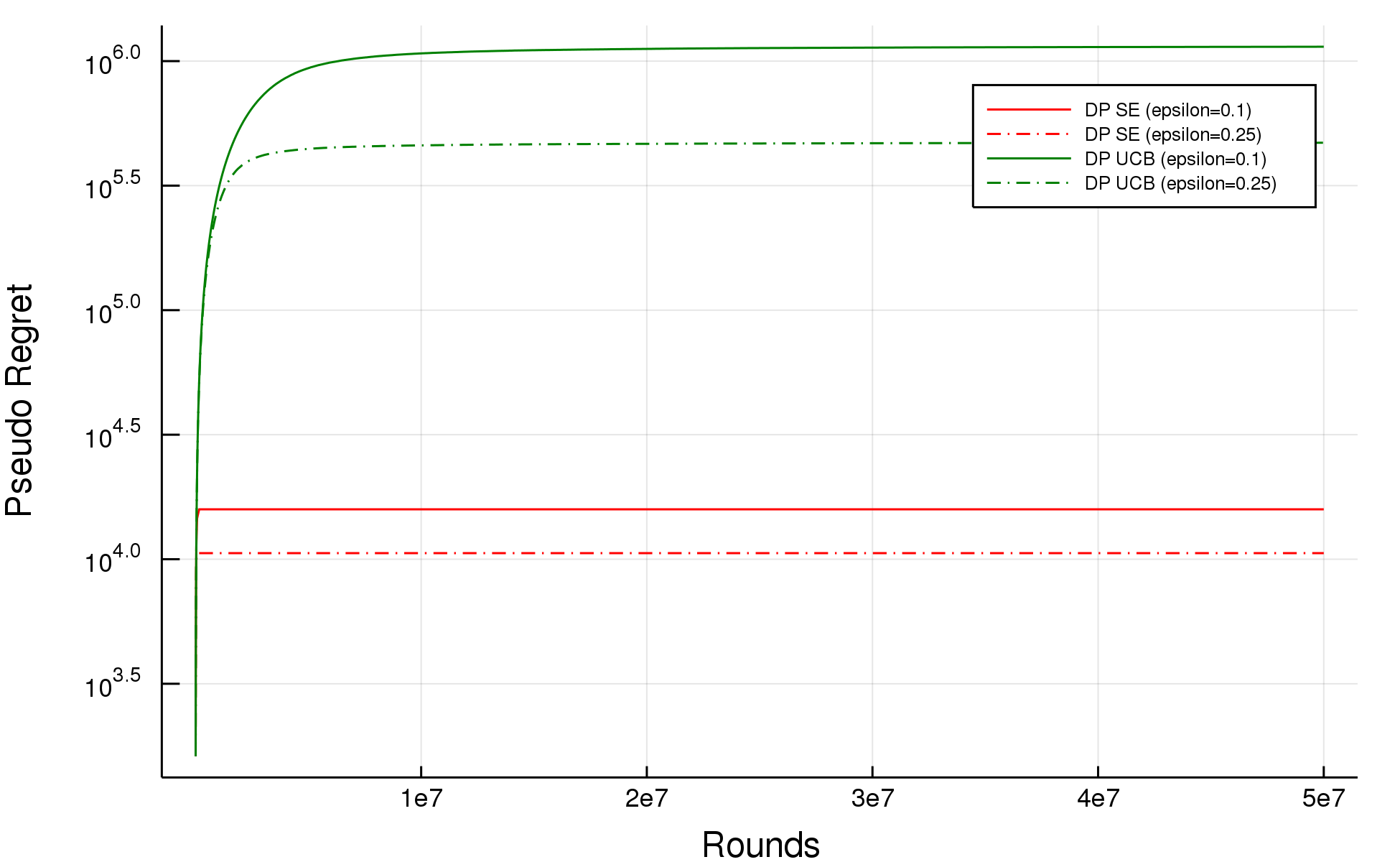}
    	\caption{$K=10$}
    \end{subfigure}
    
    \begin{subfigure}[h]{0.5\textwidth}
    	\includegraphics[width=0.95\textwidth]{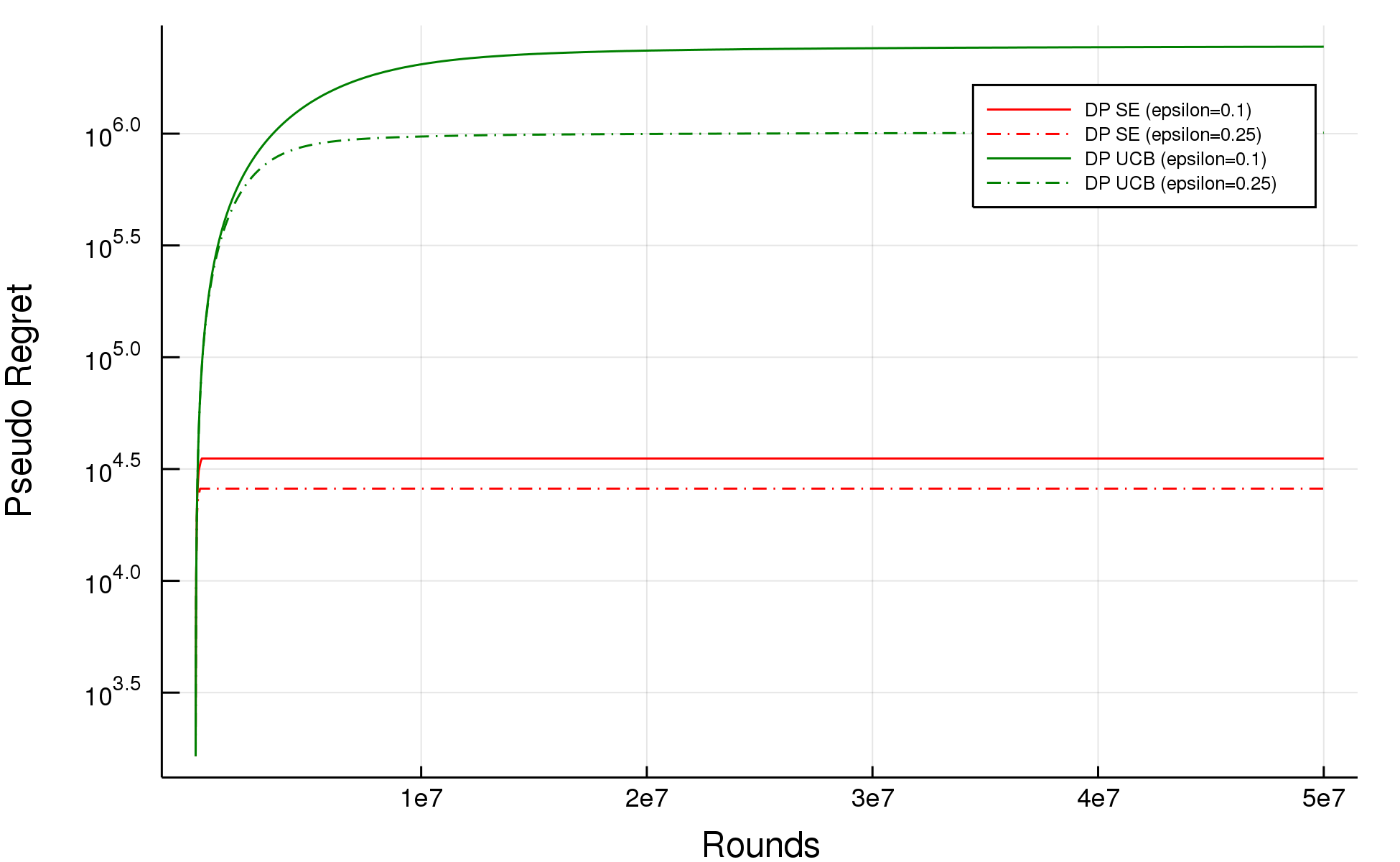}
    	\caption{$K=20$}
    \end{subfigure}
    
    \caption{\label{fig:varyK|eps0.1&0.25|setting3} Under $C_3$ with $\epsDP \in \{0.1,0.25\}, T=5 \times 10^7$}
    \end{center}
\end{figure}

\begin{figure}[h!]
    \begin{center}
    \begin{subfigure}[h]{0.5\textwidth}
    	\includegraphics[width=0.95\textwidth]{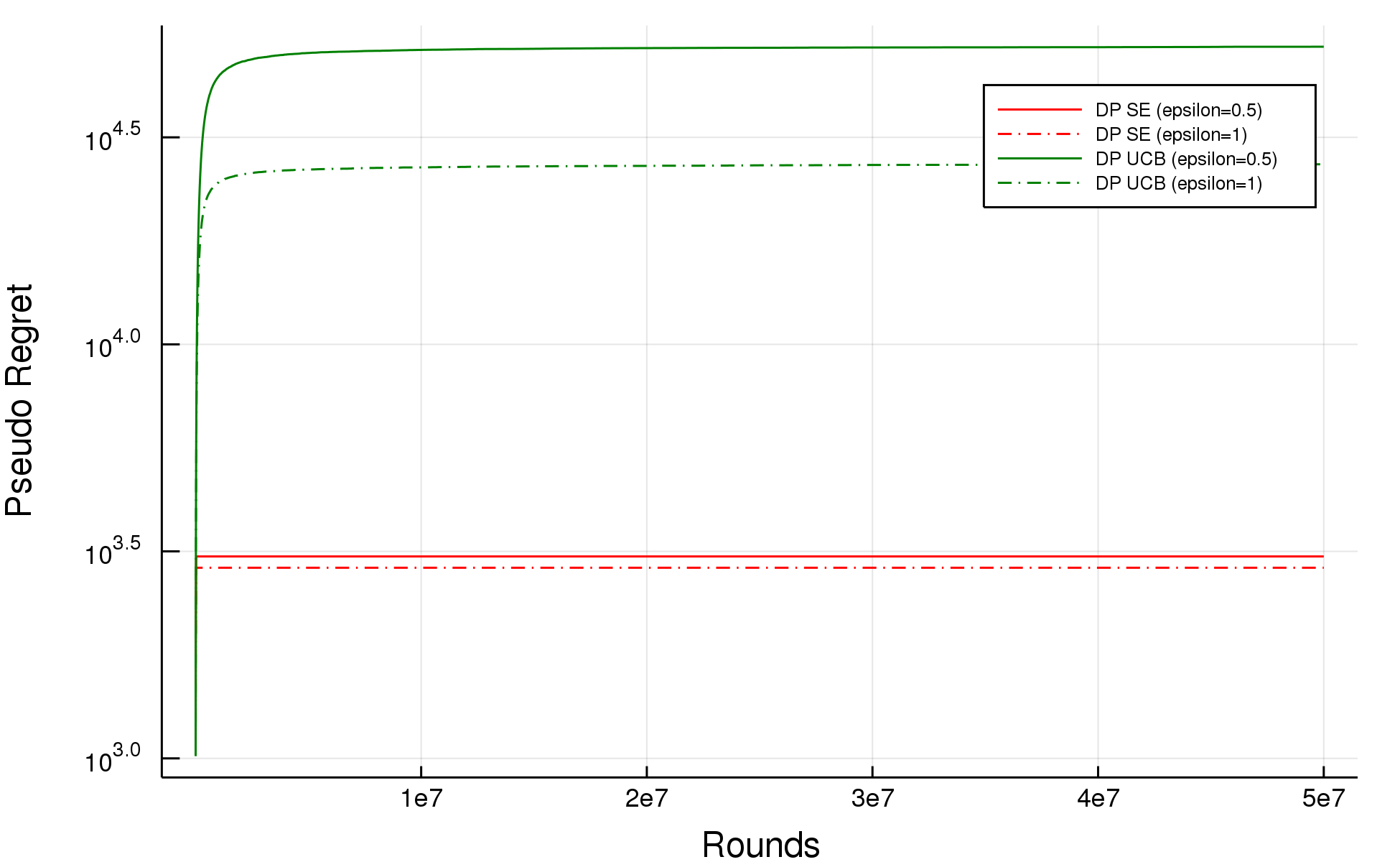}
    	\caption{$K=3$}
    \end{subfigure} 
    
    \begin{subfigure}[h]{0.5\textwidth}
    	\includegraphics[width=0.95\textwidth]{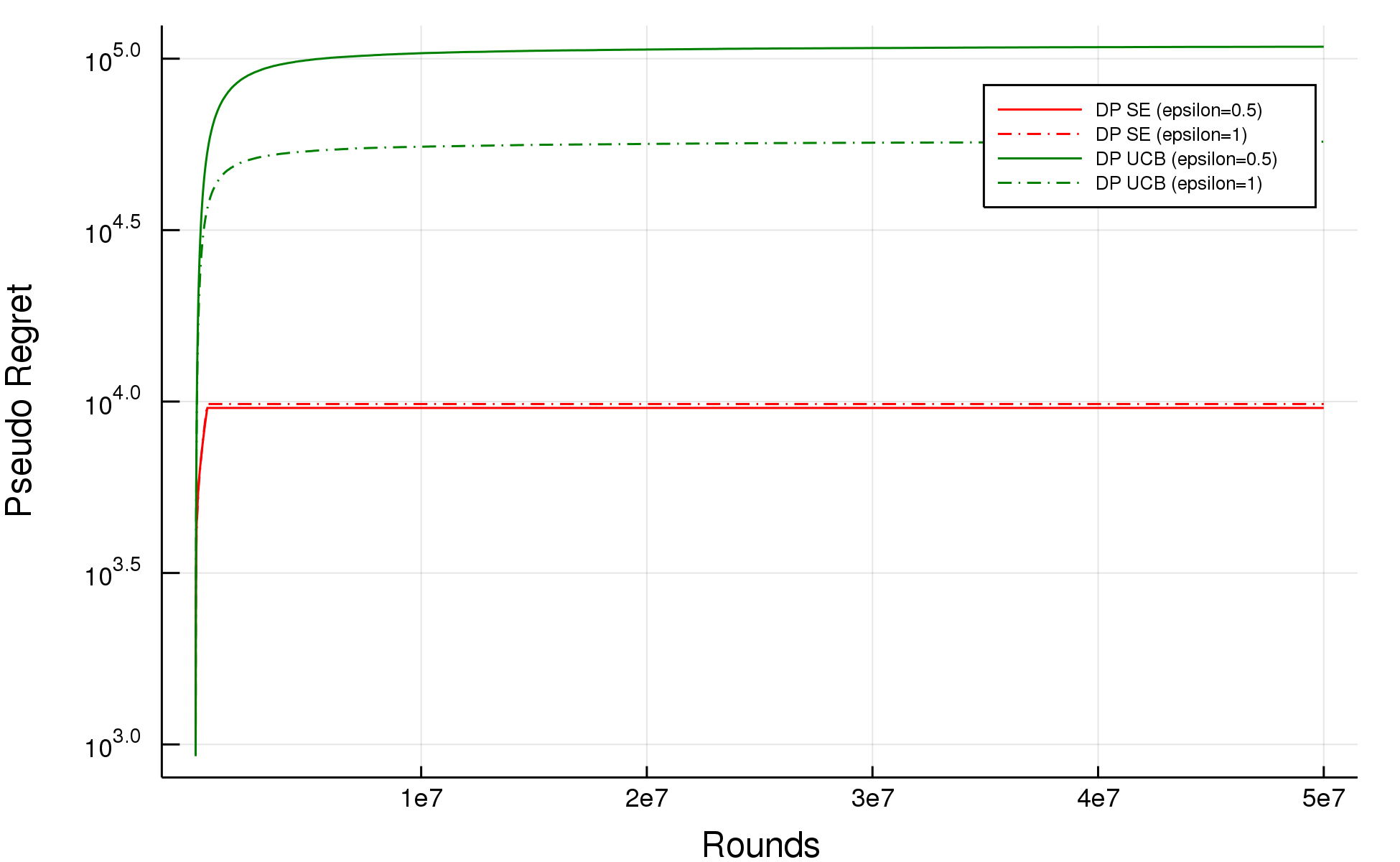}
    	\caption{$K=5$}
    \end{subfigure}
    
    \begin{subfigure}[h]{0.5\textwidth}
    	\includegraphics[width=0.95\textwidth]{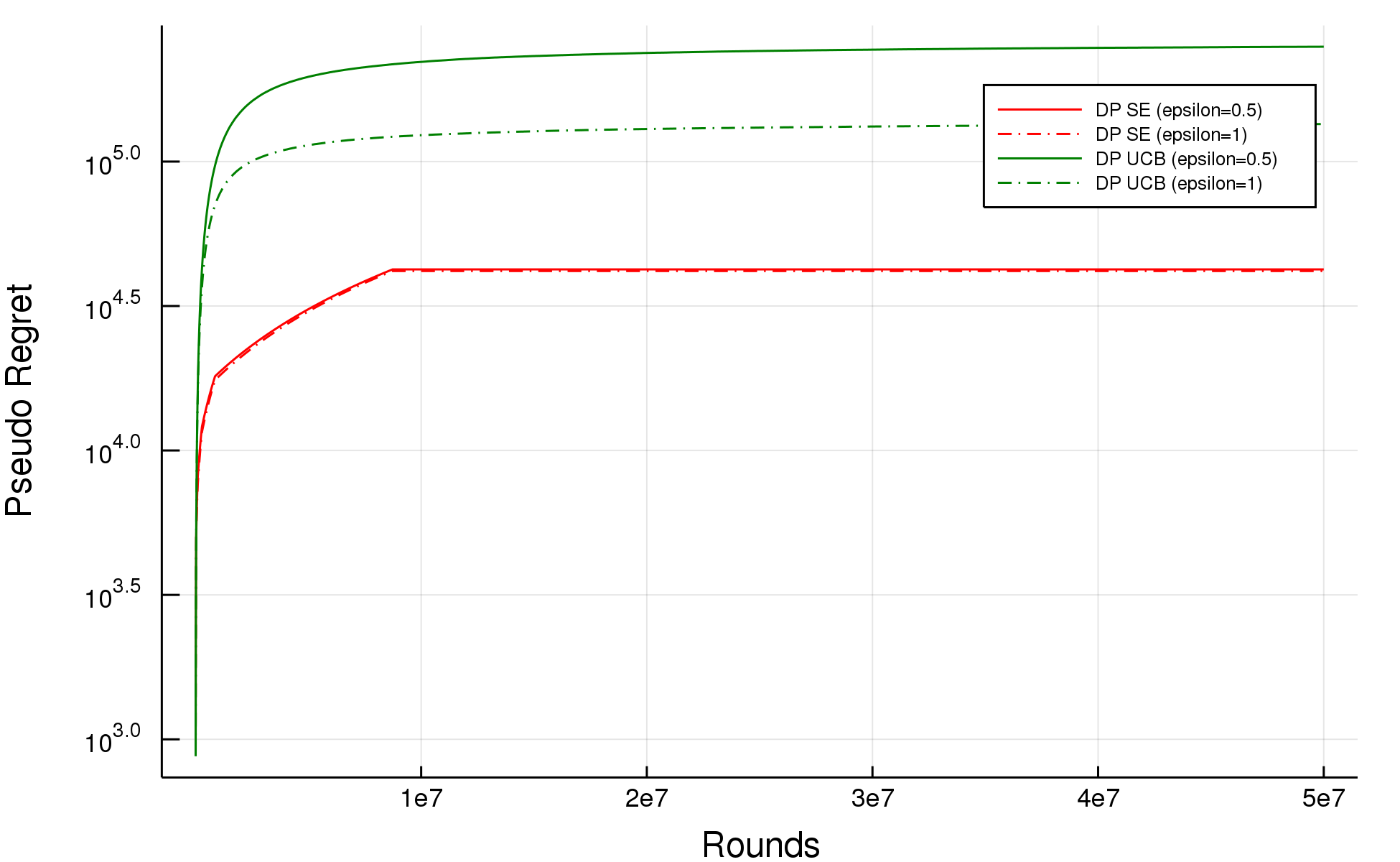}
    	\caption{$K=10$}
    \end{subfigure}
    
    \begin{subfigure}[h]{0.5\textwidth}
    	\includegraphics[width=0.95\textwidth]{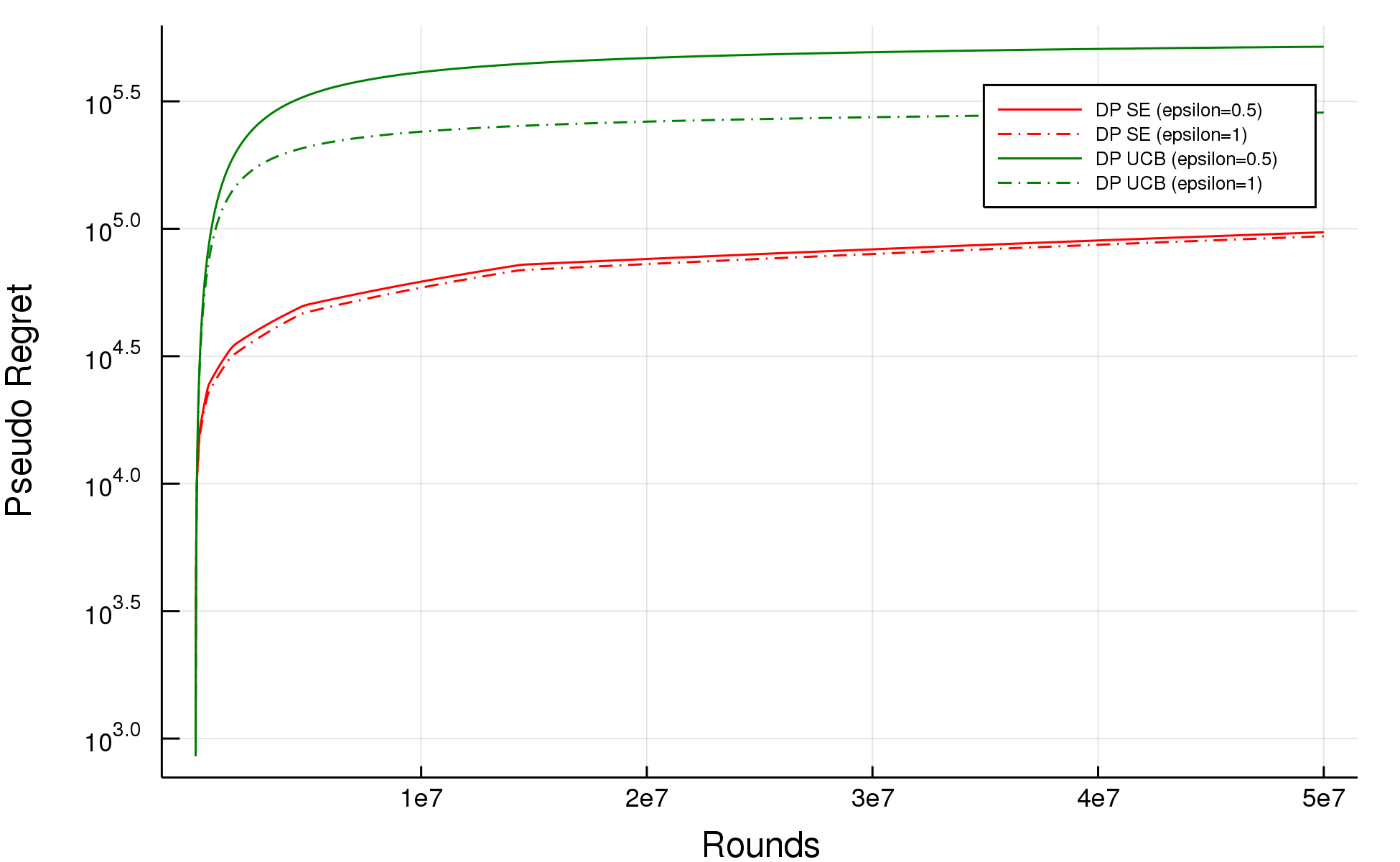}
    	\caption{$K=20$}
    \end{subfigure}
    
    \caption{\label{fig:varyK|eps0.5&1|setting4} Under $C_4$ with $\epsDP \in \{0.5,1\}, T=5 \times 10^7$}
    \end{center}
\end{figure}

\begin{figure}[h!]
    \begin{center}
    \begin{subfigure}[h]{0.5\textwidth}
    	\includegraphics[width=0.95\textwidth]{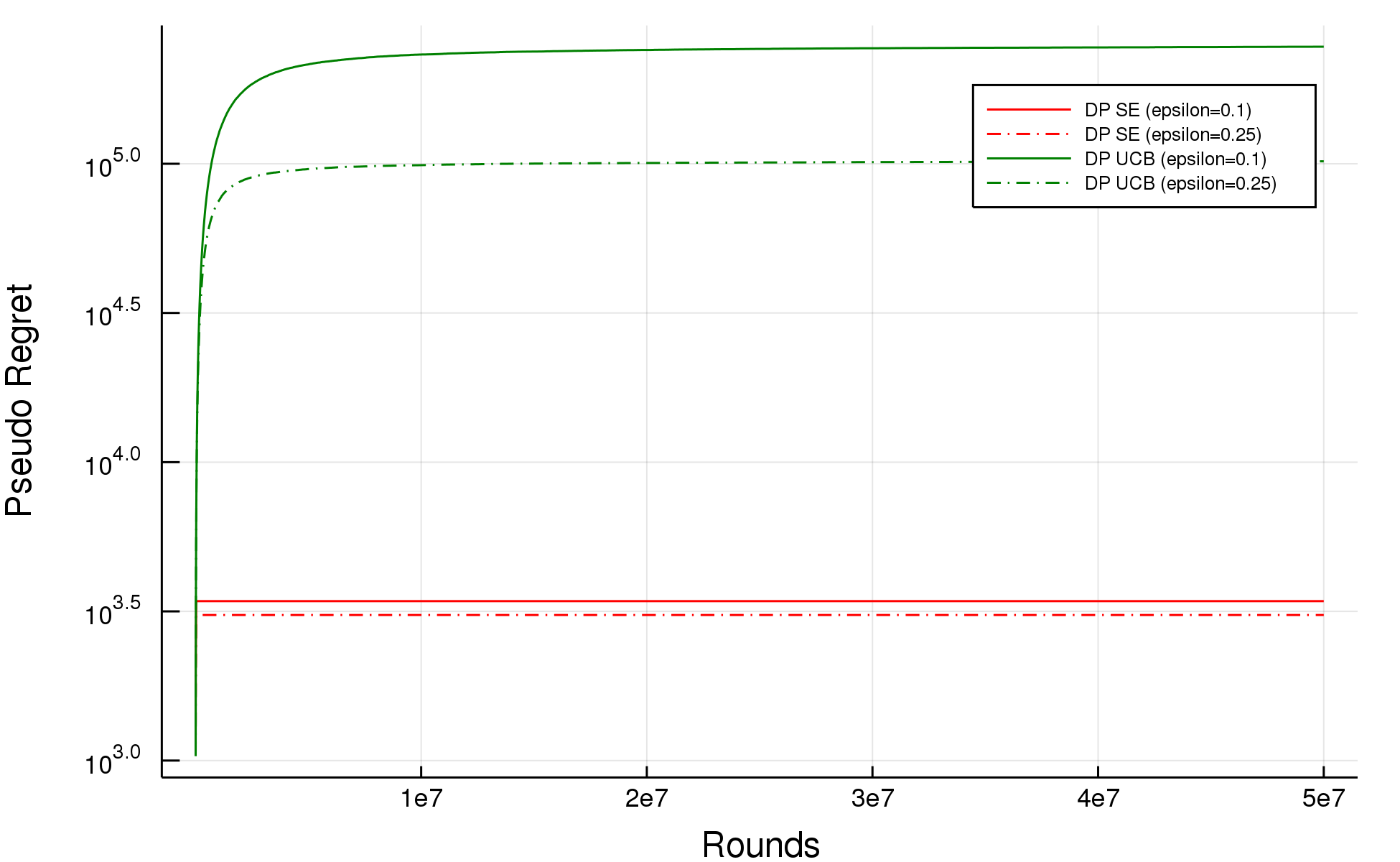}
    	\caption{$K=3$}
    \end{subfigure} 
    
    \begin{subfigure}[h]{0.5\textwidth}
    	\includegraphics[width=0.95\textwidth]{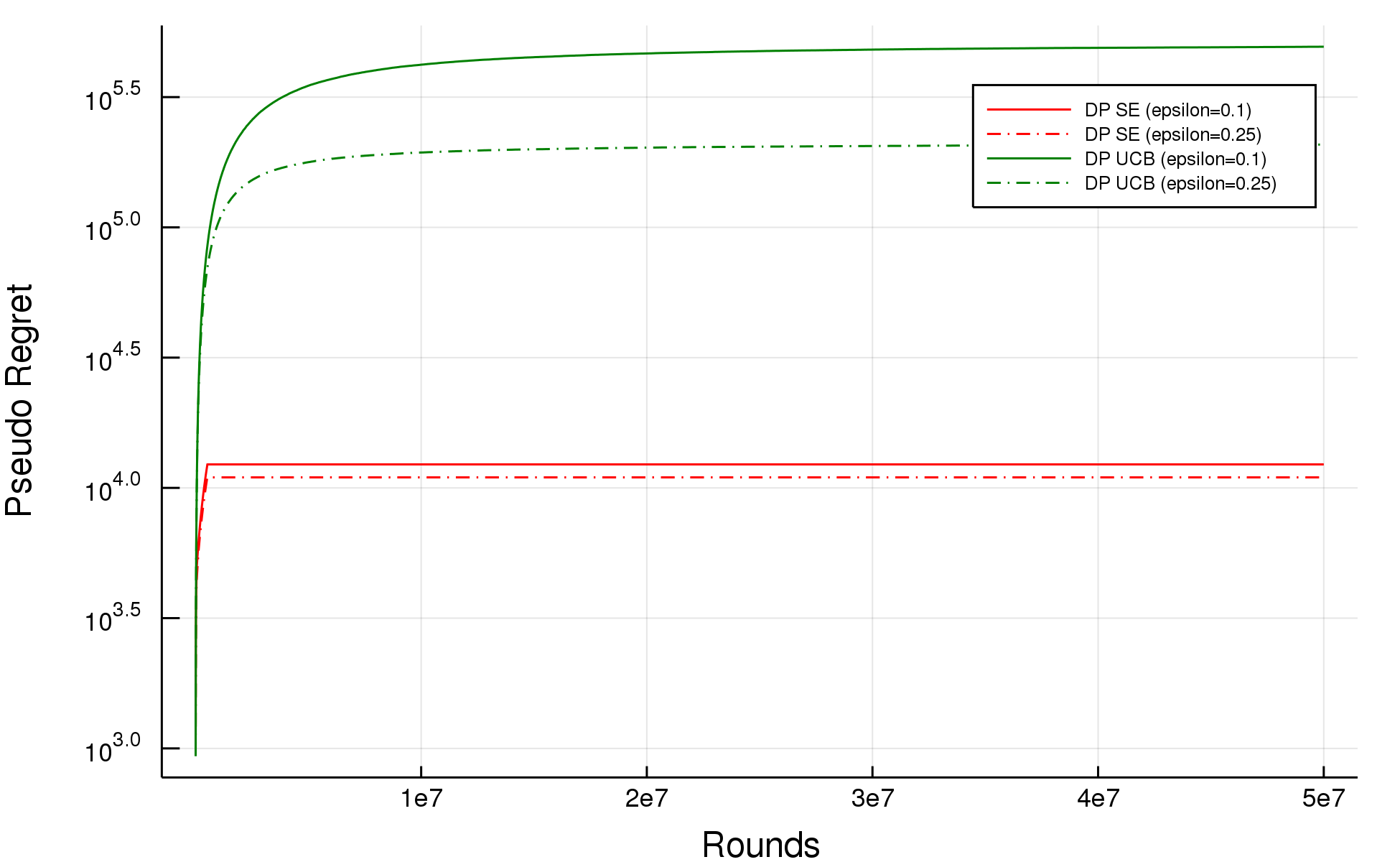}
    	\caption{$K=5$}
    \end{subfigure}
    
    \begin{subfigure}[h]{0.5\textwidth}
    	\includegraphics[width=0.95\textwidth]{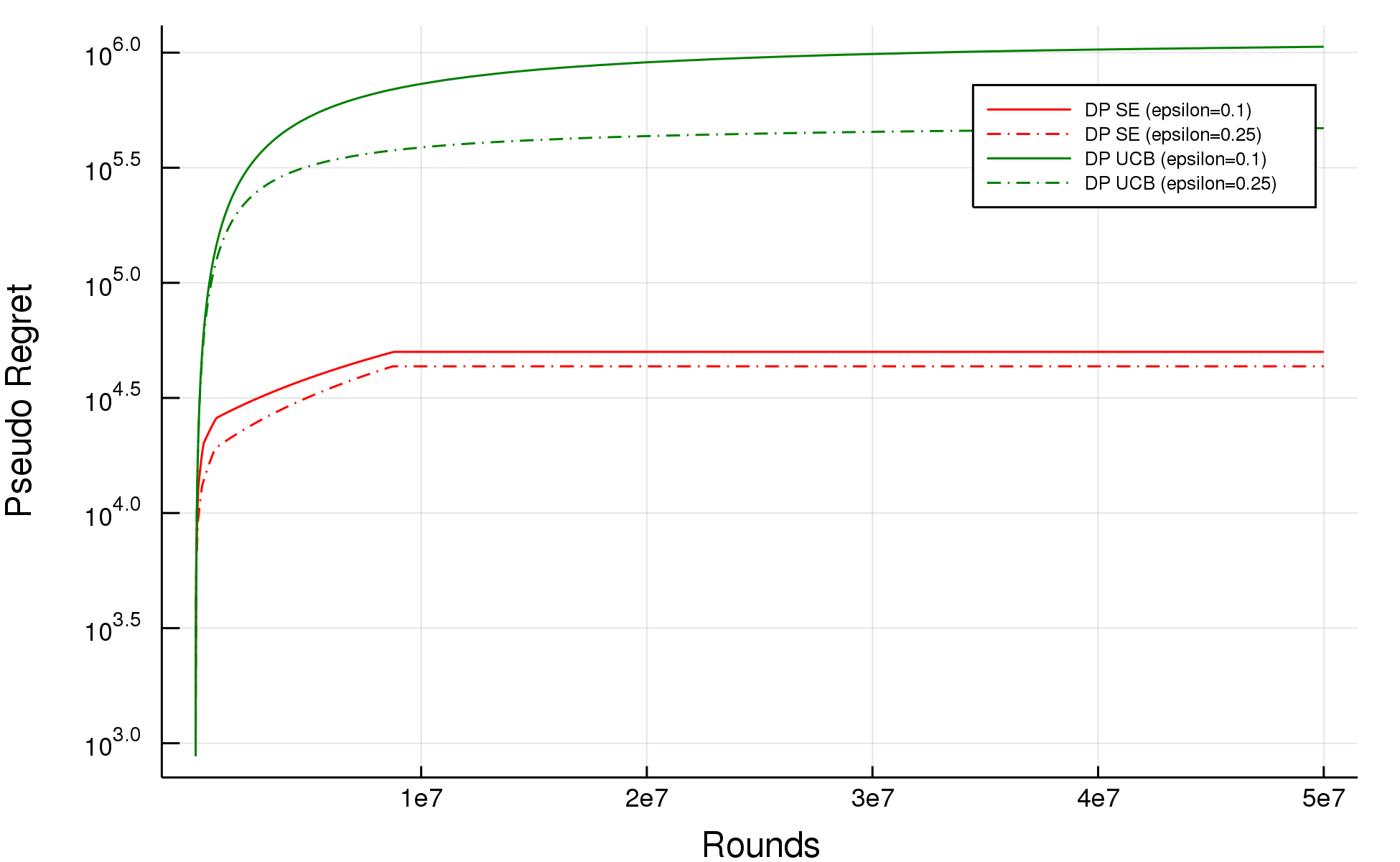}
    	\caption{$K=10$}
    \end{subfigure}
    
    \begin{subfigure}[h]{0.5\textwidth}
    	\includegraphics[width=0.95\textwidth]{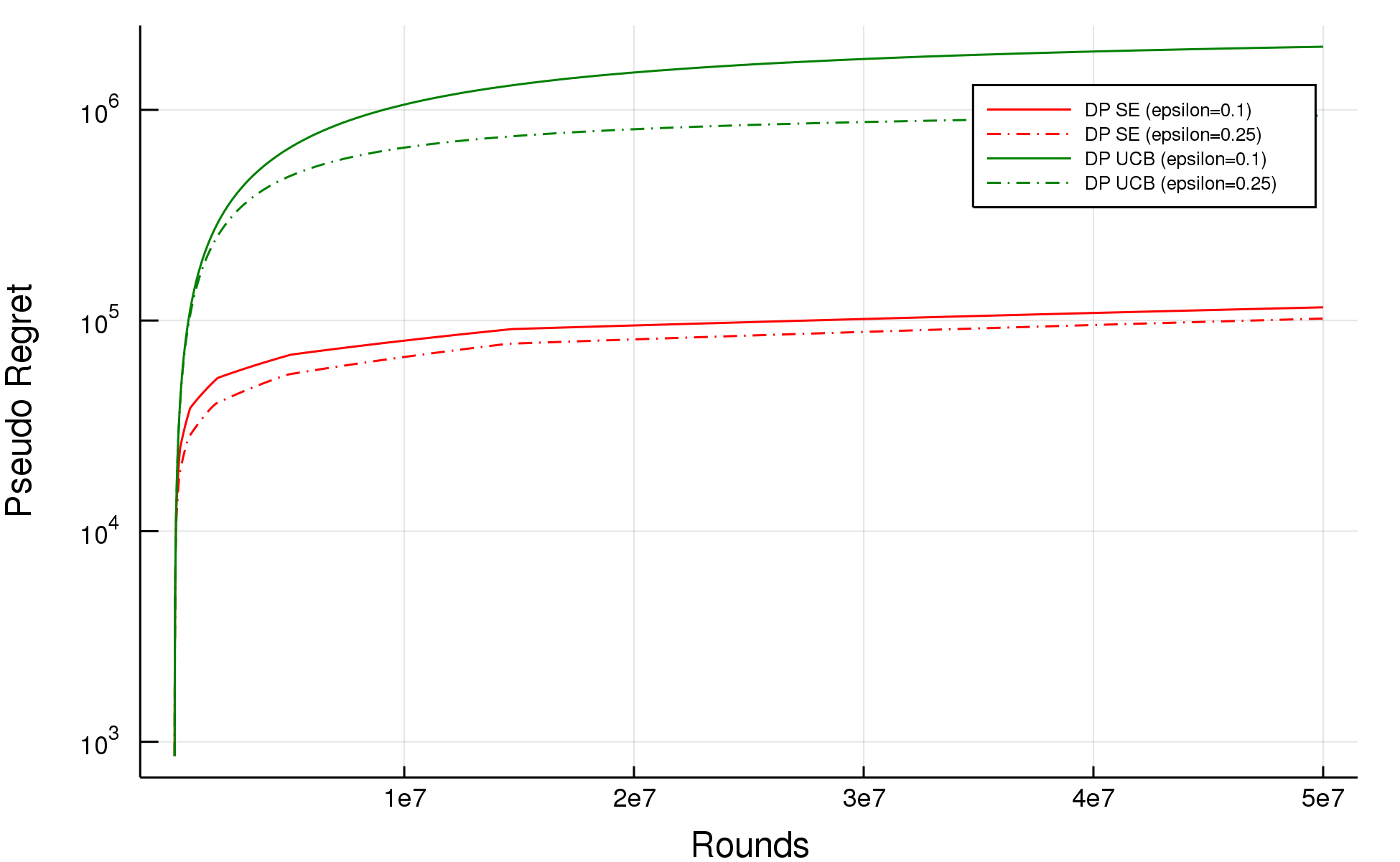}
    	\caption{$K=20$}
    \end{subfigure}
    
    \caption{\label{fig:varyK|eps0.1&0.25|setting4} Under $C_4$ with $\epsDP \in \{0.1,0.25\}, T=5 \times 10^7$}
    \end{center}
\end{figure}

\onecolumn
\restoregeometry

\section{Future Directions}
\label{sec:future_work}

While it seems this work ``closes the book'' on the private stochastic-MAB problem, we wish to point out a few future research directions. First, the MAB problem has actually multiple lower-bounds, where even low-order terms in the lower bound have been devised under different setting (see for example~\citep{Bubeck13}); so studying the lower-order terms of the pseudo regret of the private MAB problem may be of importance. Secondly, much of the work on stopping rules is devoted to the case where the variance $\sigma^2$ of the distribution is significantly smaller than its range. E.g. \citep{mnih2008empirical} give an algorithm whose sample complexity is actually $O\left(\max\{\frac{\sigma^2}{\epsStop^2\mu^2},\frac{R}{\epsStop|\mu|}\}(\log(\nicefrac{1}{\deltaStop} + \log\log(\nicefrac{R}{\epsStop|\mu|})) \right)$. Note that the lower-bound in Theorem~\ref{thm:stoppingrule_lowerbound_privacy} deals with a distribution of variance $\Theta(R^2)$, so by restricting our attention to distributions with much smaller variances we may bypass this lower-bound. We leave the problem of designing privacy-preserving analogues of the Bernstein stopping rule~\citep{mnih2008empirical} as an interesting open-problem.

Also, note that our entire analysis is restricted to $\epsDP$-DP. While our results extend to the more-recent notion of concentrated differential privacy~\citep{BunS16}, we do not know how to extend them to $(\epsDP,\delta)$-DP, nor do we know the lower-bounds for this setting. Similarly, we do not know the concrete privacy-utility bounds of the MAB problem in the local-model of DP. Lastly, it would be interesting to see if the overall approach of private Successive Elimination is applicable, and yields better bounds then currently known, for natural extensions of the MAB, such as in the linear and contextual settings. \citep{even2002pac} themselves motivated their work by various applications in a Markov-chain related setting. It is an interesting open problem of adjusting this work to such applications.

{
\bibliographystyle{abbrvnat}
\bibliography{paper}
}

\newpage
\appendix
\phantomsection{}
\addcontentsline{toc}{chapter}{Supplementary Material}
\begin{center}
  \LARGE\bf Supplementary Material
\end{center}


\section{Missing Proofs}
\label{apx_sex:proofs}

For completeness, we provide the proof of Fact \ref{fact:loglog_solution} below.

\paragraph{Fact from Preliminaries.}

\begin{fact}
\label{apx_fact:loglogsolution}
[Fact~\ref{fact:loglog_solution} restated.] \factLogLogSolution
\end{fact}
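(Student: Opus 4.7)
Write $g(x) \defeq \log(a\log x) - bx$, so that $\tfrac{\log(a\log x)}{x} \gtrless b$ if and only if $g(x) \gtrless 0$. Abbreviate $L \defeq \log(1/b)$ and $L' \defeq \log(aL)$; then the two thresholds in the statement are $x_1 \defeq L'/b$ and $x_2 \defeq 2L'/b$. My plan is to (i) establish that $g$ is unimodal on $(1,\infty)$, (ii) show $g(x_1) > 0$ and $g(x_2) < 0$ by direct evaluation, and (iii) combine these to localize the sign of $g$ on the two intervals claimed.

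For unimodality, I compute $g'(x) = 1/(x\log x) - b$. Since $x \mapsto x\log x$ is strictly increasing from $0$ to $\infty$ on $(1,\infty)$, $g'$ has a unique zero $x^\star$, with $g' > 0$ on $(1,x^\star)$ and $g' < 0$ on $(x^\star,\infty)$. Together with $g(x) \to -\infty$ as $x \to 1^+$ and as $x \to \infty$, this shows $g$ is unimodal with at most two real roots $r_1 \leq r_2$ and $\{g > 0\} = (r_1, r_2)$.

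For the boundary evaluations, at $x_1$ we have $bx_1 = L'$ and
\[ \log(a\log x_1) \;=\; \log\bigl(a(L + \log L')\bigr) \;>\; \log(aL) \;=\; L', \]
using $\log L' > 0$, which follows from $aL \geq \log 16 > e$ given $a > 1$ and $b < 1/16$. Thus $g(x_1) > 0$. At $x_2$, $bx_2 = 2L' = \log(a^2L^2)$, and $g(x_2) < 0$ reduces to $L + \log 2 + \log L' < aL^2$. Bounding $\log L' \leq \log(aL) \leq \log a + \log L$ and noting that $aL^2 - \log a \geq L^2$ for every $a \geq 1$ (since $a \mapsto aL^2 - \log a$ has derivative $L^2 - 1/a > 0$ on $[1,\infty)$ and equals $L^2$ at $a=1$), the desired inequality reduces to the elementary $L^2 > L + \log 2 + \log L$, which holds for $L \geq \log 16 > 2.77$.

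To combine, unimodality together with $g(x_1) > 0$ and $g(x_2) < 0$ forces the upper root $r_2$ into $(x_1, x_2)$, so $g(x) < 0$ on $(x_2, \infty)$ (the decreasing tail past $r_2$), giving the second half of the fact. For the first half, $(e, x_1) \subseteq (r_1, r_2) = \{g > 0\}$ since $g(x_1) > 0$ and $g(e) \geq 0$ (equivalently $\log a \geq be$, which is the asymptotic regime of interest given $a > 1$ and $be < e/16$). The only step requiring care is the $g(x_2)$ estimate in the previous paragraph, where the nested logarithms must be tracked carefully; everything else is an immediate consequence of unimodality and the two boundary evaluations.
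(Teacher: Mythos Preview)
Your approach---evaluate the sign of $g(x)=\log(a\log x)-bx$ at the two thresholds and interpolate via a shape argument---is essentially the same strategy as the paper's, which plugs $x_0$ and $x_1$ into $f(x)=\log(a\log x)/x$ and appeals to monotonicity. The boundary computations match up, and your estimate for $g(x_2)<0$ is a cleaner repackaging of the paper's inequality chain.

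Where the two diverge is the shape argument. The paper asserts that $f$ is monotonically decreasing for $x>e$; you instead prove that $g$ is unimodal. Your version is the more careful one: $f'(x)$ has the sign of $\tfrac{1}{\log x}-\log(a\log x)$, which at $x=e$ equals $1-\log a$, so $f$ is \emph{increasing} at $e$ whenever $1<a<e$. Thus the paper's monotonicity claim is false in that range, whereas your unimodality of $g$ holds for all $a>1$.

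The one place you hedge---``$g(e)\ge 0$, equivalently $\log a\ge be$, which is the asymptotic regime of interest''---is not a weakness of your argument but a genuine defect in the statement as written. For $a$ close to $1$ (e.g.\ $a=1.1$, $b=0.05$) one has $f(x)<b$ for $x$ just above $e$, so the first half of the fact is literally false there; the paper's proof simply does not notice this because its monotonicity premise is wrong. In every application of the fact within the paper the parameter $a$ is of order $1/\deltaStop\gg e$, so $\log a\ge be$ holds comfortably and both proofs go through. If you want a self-contained fix, replace the hypothesis $a>1$ by $a\ge e$ (or equivalently add $\log a\ge be$); then your unimodality argument closes the first half cleanly with $g(e)=\log a-be\ge 0$.
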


\begin{proof}
It is clear that the function $f(x) = \frac{\log(a\log(x))}x$ is a monotonically decreasing function for $x>e$. Plugging-in $x_0 = \nicefrac{\log(a\log(1/b))} b$ we get that 
\begin{align*}
    f(x_0) &= \frac{\log(a) + \log\log(\log(a\log(1/b))/b)}{\nicefrac{\log(a\log(1/b))} b}
    \cr &= b \cdot \frac{\log(a) + \log\log(1/b) +\log\log \left(\log a + \log\log(1/b) \right)}{\log(a) + \log\log(1/b)} > b
\end{align*}
Plugging-in $x_1 = \nicefrac{2\log(a\log(1/b))} b$
we get that 
\begin{align*}
    f(x_1) &= \frac{\log(a) + \log\log(2\log(a\log(1/b))/b)}{\nicefrac{2\log(a\log(1/b))} b}
    \cr &= b \cdot \frac{\log(a) + \log\log 2 + \log\log(1/b) +\log\log \left(\log a + \log\log(1/b) \right)}{2\log(a) + 2\log\log(1/b)} 
    \cr &< b\cdot \frac{\log(a) + \log\log(1/b)  +\log\log \left(\log a + \log\log(1/b) \right)}{2\log(a) + 2\log\log(1/b)} 
    \cr &= b \left(\frac 1 2 + \frac {\log\log \left(\log a + \log\log(1/b) \right)}{2\log(a) + 2\log\log(1/b)}  \right) < b
\end{align*}
And so due to monotonicity, the claim follows.
\end{proof}

\end{document}